\documentclass{article}

\usepackage[final]{neurips_2025}

\usepackage[utf8]{inputenc} 
\usepackage[T1]{fontenc}    
\usepackage{hyperref}       
\usepackage{url}            
\usepackage{booktabs}       
\usepackage{amsfonts}       
\usepackage{nicefrac}       
\usepackage{microtype}      
\usepackage{xcolor}         

\usepackage{multicol}

\usepackage{amsthm}
\usepackage{amsmath}

\usepackage{listings}
\usepackage{subcaption}   
\usepackage{graphicx} 
\usepackage{wrapfig}   
\usepackage{booktabs, multirow}
\usepackage{xcolor}
\usepackage{tcolorbox}
\usepackage{longtable}

\usepackage{enumitem}   
\newlist{compactdesc}{description}{1}
\setlist[compactdesc]{leftmargin=1.2em,labelsep=0.3em,itemsep=0pt,parsep=0pt,topsep=0pt}

\newenvironment{calign}
  {
    \setcounter{equation}{0}%
    \footnotesize
    \vspace{-1em} 
    \flalign
  }
  {
    \endflalign
    \vspace{-2em}
  }


\newcommand{\cttn}[2]{\ensuremath{\textsc{\textbf{\#}} \left[ #1 \right] \; #2}}

\newcommand{\cif}[3]{\ensuremath{#1\;\mathbf{?}\; #2\; \textbf{:} \;#3}}

\newcounter{theorem}

\newtheorem{corollary}[theorem]{Corollary}

\newtheorem{definition}[theorem]{Definition}

\newtheorem{lemma}[theorem]{Lemma}
\newtheorem{thm}[theorem]{Theorem}

\newcommand{\blank}{\mathord{\square}}
\usepackage{lineno}

\definecolor{darkblue}{rgb}{0, 0, 0.5}
\hypersetup{colorlinks=true, citecolor=darkblue, linkcolor=darkblue, urlcolor=darkblue}

\title{Born a Transformer -- Always a Transformer?
On the Effect of Pretraining on Architectural Abilities}

\author{%
  Mayank Jobanputra\textsuperscript{1}\thanks{These authors contributed equally.}~,
  Yana Veitsman\textsuperscript{1}\footnotemark[1]~,
  Yash Sarrof\,\textsuperscript{1},
  Aleksandra Bakalova\textsuperscript{1},\\[0.5ex]
  \textbf{Vera Demberg\textsuperscript{1}},%
  \quad \textbf{Ellie Pavlick\textsuperscript{2}},%
  \quad \textbf{Michael Hahn\textsuperscript{1}}\\[0.9ex]
  \textsuperscript{1}Saarland University\quad
  \textsuperscript{2}Brown University\\[0.9ex]
  \texttt{\{mayank,mhahn\}@lst.uni-saarland.de}
}


\begin{document}

\maketitle\setcounter{footnote}{0}

\begin{abstract}

Transformers have theoretical limitations in modeling certain sequence-to-sequence tasks, yet it remains largely unclear if these limitations play a role in large-scale pretrained LLMs, or whether LLMs might effectively overcome these constraints in practice due to the scale of both the models themselves and their pretraining data. We explore how these architectural constraints manifest after pretraining, by studying a family of \emph{retrieval} and \emph{copying} tasks inspired by \citet{liu2024exposing}. We use a recently proposed framework for studying length generalization \citep{huangformal} to provide guarantees for each of our settings. Empirically, we observe an \emph{induction-versus-anti-induction asymmetry}, where pretrained models are better at retrieving tokens to the right (induction) rather than the left (anti-induction) of a query token. This asymmetry disappears upon targeted fine-tuning if length-generalization is guaranteed by theory. 
Mechanistic analysis reveals that this asymmetry is connected to the differences in the strength of induction versus anti-induction circuits within pretrained transformers. We validate our findings through practical experiments on real-world tasks demonstrating reliability risks. Our results highlight that pretraining selectively enhances certain transformer capabilities, but does not overcome fundamental length-generalization limits\footnote{\url{https://github.com/lacoco-lab/always\_a\_transformer}}.

\end{abstract}

\section{Introduction}
Transformers \citep{vaswani2017attention} have become the backbone of most LLMs \citep{radford2019language, BrownMRSKDNSSAA20, touvron2023llama}. 
Given the widespread usage of LLMs, there has been a long line of research to understand their theoretical underpinnings. 
Early analyses showed that transformers can approximate any sequence-to-sequence function at fixed input length \citep{yun2019transformers} and even simulate Turing machines \citep{perez2019turing} when given unbounded compute.
However, subsequent studies have uncovered fundamental expressibility and learnability limitations, finding classes of problems which transformers cannot express across arbitrary input lengths  \citep[e.g.][]{hahn2020theoretical, strobl2023survey, merrill2023parallelism, sanford2024representational}, or which transformers systematically struggle to learn and generalize on \citep[e.g.][]{hahn2024sensitive, zhou2023algorithms}.
Empirical work confirms that small transformers trained from scratch do instantiate such limitations \citep[e.g.][]{bhattamishra2020ability, huangformal, deletang2022neural, zhou2023algorithms}. 
For instance, transformers trained from scratch struggle to generalize to longer or more challenging examples even on tasks as simple as copying a long string with repeated symbols \citep{zhou2023algorithms}, or retrieving which token followed the last occurrence of some query token in a long context \citep{liu2024exposing} --- findings in close agreement with theoretical arguments about transformer's learning biases \citep{huangformal}.

However, it remains unclear if such results on expressiveness and generalization have any bearing on large-scale pretrained models.
Modern LLMs have many more layers, and many orders of magnitude more parameters than the small-scale transformers typically used for testing theoretical limitations. Model scale alone might make theoretical limitations irrelevant at realistic input lengths.
Perhaps even more importantly, LLMs are pretrained on a massive scale, with trillions of tokens of varied data that may impart fundamentally new inductive biases and algorithmic abilities \citep{rytting2021leveraging} that may be invisible in models trained from scratch on the target task alone \citep{amos2024never}.
Indeed, LLMs' capabilities scale in ways that are difficult to foresee: empirical “laws” relate performance to model, data, and compute budget \citep{kaplan2020scaling, hoffmann2022training} and qualitatively new behaviors such as in-context learning, chain-of-thought reasoning, and tool use emerge abruptly at scale \citep{Wei2022Chain,ganguli2022predictability,bubeck2023sparks}.
Since pretraining reshapes both the model's inductive biases and the sub-circuits it relies on, length-generalization failures observed in small models might not transfer automatically to their large-scale counterparts.

\textbf{Scope.}
We focus on \textit{retrieval} and \textit{copying} as fundamental operations that provide a test bed for answering these questions. These operations are critical building blocks for many real-world LLM applications, such as question answering and summarization~\citep{wiegreffe2025answer, fan2025ragsurvey}. On a mechanistic level, retrieving from context via induction heads is key to performing in-context learning \citep{olsson2022context, song2025out, crosbie2025inductionheadsessentialmechanism}, while faithfully copying exact spans makes retrieval-augmented generation more effective \citep{yu2024evaluation}. Understanding the potential impact of architectural limitations on LLMs is thus of great interest for these tasks. 

In this work, we revisit the theory of length generalization for these operations, ask how its predictions hold up for pretrained transformers, and diagnose which abilities are amplified by pretraining and which limitations persist. We study a family of retrieval and copying tasks\footnote{i.e., copying of the \emph{input}, not the \emph{training data}, similar to \citet{JelassiBKM24}} (Figure \ref{fig:task_variants}) inspired by \citet{liu2024exposing, zhou2023algorithms}. We organize retrieval into two umbrella categories of \emph{unique} and \emph{non-unique} induction, respectively, in which the query token appears exactly once or multiple times respectively. In each variant, the model must return a token located either \emph{before} or \emph{after} a designated occurrence of that query token. We make an analogous distinction in copying between copying unique tokens or repeated tokens, and between copying in the forward or the reverse direction.

All retrieval and copying tasks we consider are expressible by transformers in principle; however, theory predicts systematic differences in length generalizability\footnote{ability to correctly solve tasks on instances longer than those seen during training}. Specifically, we show (Section~\ref{sec:all_theory}) that theory predicts transformers to have a \emph{uniqueness bias} (i.e., unique-token induction is easier than its non-unique counterpart) \citep{huangformal, zhou2023algorithms}; but for it to not have a \emph{directional bias} (i.e., that forward and reverse variants are equally hard). Theory also predicts that within non-unique induction, retrieving is easier from the first occurrence of the query character than from the last one.

However, given the impressive capabilities of transformer-based LLMs, it is reasonable to posit that the massive scale of both such models and of pretraining might overcome these limitations. 
For instance, high prevalence of non-unique copying in natural language might lead transformers to learn specialized circuits or components for handling such situations. Here, we directly test whether these architectural limitations remain relevant to pretrained LLMs. We study 8 pretrained LLMs from Llama-3 and Qwen2.5 families~\citep{grattafiori2024llama, Yang2024Qwen25TR}. Our findings show that large-scale pretraining produces a directional bias~(Section \ref{subsec:elicit_prompting}). In addition to showing our results on synthetic settings, we show how these limits might surface and the reliability risks they pose for practitioners in more natural settings as well (Section  \ref{subsec:copy_natural}). While targeted fine-tuning can even out the directional bias born out of pretraining, we find that the fundamental architectural uniqueness bias still binds pretrained transformers (Section \ref{sec:finetune}). 

\section{Background, Notation \& Definitions} 
\label{sec:bg}

\paragraph{Task Description.}

We examine two primary task types, \textbf{Retrieval} and \textbf{Copying}, each instantiated under \emph{Unique} and \emph{Non-unique} conditions, as illustrated in Figure \ref{fig:task_variants}.
In \textbf{Retrieval}, the sequence has the form $X = \langle bos\rangle x_1, \dots x_{N-1} \langle sep\rangle x_N$, where each $x_i \in$ some alphabet $\Sigma$  and $x_N = q$ is the query token. The query $q$ appears $t$ times in the context, indexed as $q_1, \dots q_t$ with $1 \le q_1 < \dots < q_t \le N$. Transformer has to predict a single token as the output given the input. For the \textit{Unique} condition, we define \textbf{UL} (Unique Induction Left) and \textbf{UR} (Unique Induction Right) respectively when the output expected is the token preceding or following the query’s single occurrence, i.e., $x_{q_1 - 1}$ and $x_{q_1 + 1}$. Under the \textit{Non-unique} condition, \textbf{NLFirst} (Non-unique Left First) and \textbf{NRFirst} (Non-unique Right First) use the first occurrence to predict $x_{q_1 - 1}$ and $x_{q_1 + 1}$ as the output, while \textbf{NLLast} (Non-unique Left Last) and \textbf{NRLast}\footnote{The NRLast variant is conceptually equivalent to the Flip-Flop task of \citet{liu2024exposing}.} (Non-unique Right Last) use the last occurrence to produce $x_{q_t - 1}$ and $x_{q_t + 1}$ respectively.

For \textbf{Copying}, the sequence is $X = \langle bos\rangle x_1 \dots x_{N} \langle sep \rangle$ and the expected output is a continuation $x_{N+1} \dots x_{2N}$. The decoder is expected to replicate the latter segment based on the first. For \textit{Unique} conditions all of the tokens in $X$ are unique. \textbf{UF} (Unique Forward copying) and \textbf{UB} (Unique Backwards copying) denote forward and reverse copying of the input segment. For \textit{Non-unique} settings there is no uniqueness constraint on the $x_i$ values and \textbf{NF} and \textbf{NB} also denote the forward and reverse copying of the input segment.

\colorlet{darkgreen}{green!60!black}
\begin{figure}[tbp]
  \centering
  \includegraphics[width=\linewidth]{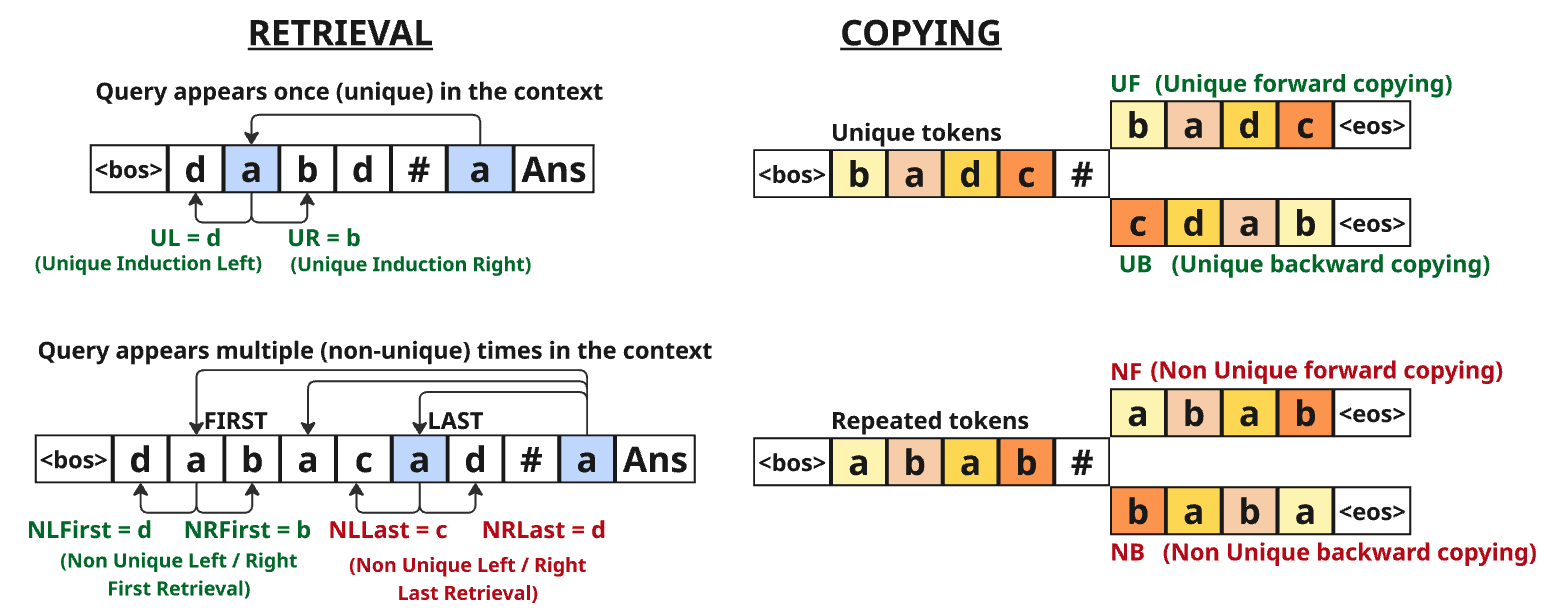}
  \caption{Overview of our task variants (formal definitions in Section \ref{sec:bg}). \textbf{Retrieval:} A ‘\#’ marks the separator; the token that follows is the query that may appear once (unique) or multiple times (non-unique). With non-unique queries we return the token immediately left/right of either the first or the last occurrence, creating 6 sub-tasks in total. \textbf{Copying:} We want to model to copy the context which consists of either only unique tokens or repeated tokens in the forward or reverse direction. Tasks in \textcolor{darkgreen}{green} lie in \textsc{C-Rasp}\textup{[pos]} and length-generalize; those in \textcolor{red}{red} do not (proofs in Section \ref{sec:all_theory}).
  }
  \label{fig:task_variants}
\end{figure}

\paragraph{Induction and Anti-Induction Heads.}
\label{subsec:induction_definition}
Induction heads were characterized by \citet{elhage2021mathematical} as part of a two-head circuit present in autoregressive transformers 
\begin{wrapfigure}[13]{r}{0.64\textwidth}  
  \centering
  \vspace{-1.2\baselineskip}               
  \includegraphics[width=\linewidth]{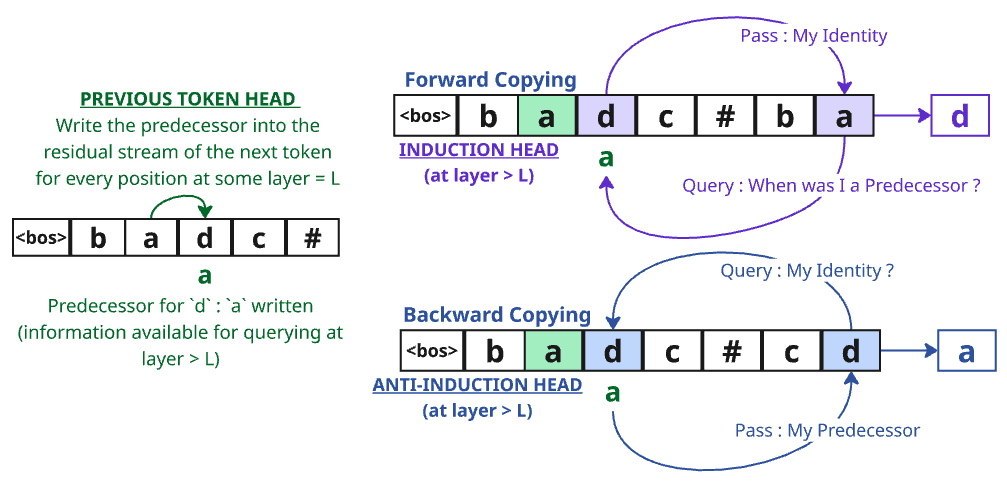}
  \caption{Illustration of the induction and anti-induction Circuits.}
  \label{fig:induction_example} 
\end{wrapfigure}
and were later shown to be critical to performing in-context learning \citep{olsson2022context}.A \textbf{previous-token head} in layer~$\ell$ writes the embedding of the immediately \emph{preceding} token into the residual stream. An \textbf{induction head} in some layer $\ell'\!>\!\ell$ then queries for positions whose stored “previous token” equals the current token and copies the corresponding value vector forward, enabling the model to predict the token \emph{after} a repeated bigram. Our \textsc{UR} retrieval as well as the \textsc{UF} copying task setups are not novel and are meant to test this induction circuit ability and compare with our other task setups.

We hypothesize that, by symmetry, an \textbf{anti-induction circuit} should in principle exist and be able to copy information \emph{backwards}. Here, the same previous-token head caches the token immediately to the left, but is paired with an \textbf{anti-induction head} that \emph{looks back} to the earlier occurrence of the \emph{same} token. Because that earlier token received the value of its own predecessor from the previous-token head, the anti-induction head can retrieve that predecessor, allowing the model to predict one step to the \emph{left}. Such a two-head motif should thus enable exact solutions to our \textsc{UL} and \textsc{UB} tasks. \emph{Retrieval heads} of \citet{wu2025retrieval} subsume both induction and anti-induction heads (see Appendix \ref{appendix:remark_retrieval_heads} for discussion).

\section{Theoretical Length Generalization Guarantees for Our Tasks}
\label{sec:all_theory}

We derive theoretical predictions for transformers' length-generalization abilities on the retrieval and copying tasks introduced above, using the framework of \cite{huangformal}, which links length-generalizability to expressiveness in \textsc{C}-\textsc{Rasp} \citep{yang2024counting}.
Of particular interest is \textbf{\textsc{C}-\textsc{Rasp}} \citep{yang2024counting}, which restricts the usage of positional information and arithmetic operations.
\citet{huangformal} further added positionally-aware primitives to \textsc{C}-\textsc{Rasp} to construct the language \textsc{C}-\textsc{Rasp}\textup{[pos]}\footnote{Referred to as \textsc{C}-\textsc{Rasp}[local, periodic] in \citet{huangformal}. See Appendix~\ref{app:crasp}.}. 
Doing so, they established--both theoretically and empirically--a tight link between the existence of a \textsc{C}-\textsc{Rasp}\textup{[pos]} program for a task and the ability of \emph{decoder-only} transformers (with absolute or no positional encodings) to length-generalize on that same task: in particular, if \textsc{C}-\textsc{Rasp}\textup{[pos]} program exists, a decoder-only model provably generalizes to longer inputs under a specific formal model of training (Theorem 7 in \citet{huangformal}). 
More precisely, this guarantee applies to APE Transformers under an idealized training procedure, where all training data up to some maximum input length $N$ are available and the exact minimizer of a regularized loss is found. Under these specific conditions, the existence of a \textsc{C}-\textsc{Rasp}\textup{[pos]} program for that task guarantees length generalization to larger lengths. While theory has only been worked out for this specific setup, empirically we observe converging results across different PE types and realistic SGD-based training (as shown in \cite{huangformal} as well as in the experiments in Appendix \ref{appendix:fromscratch}).
As a converse to this length generalization guarantee, extensive experimental evidence in \citet{huangformal} suggests that, if no \textsc{C}-\textsc{Rasp} program exists, such models \emph{fail} to generalize to longer inputs.\footnote{While \cite{huangformal} focuses on absolute (APE) or no positional encodings (NoPE), our empirical results show that the conclusions also match the behavior of Rotary Positional encodings (RoPE) (See Appendix \ref{appendix:fromscratch}). Full theoretical understanding for RoPE length generalization remains an interesting problem. One interesting aspect is that, while training RoPE for UF and UB did not show perfect generalization, we did obtain perfect generalization when instead autoregressively decoding from  models trained for UR/UL. In APE, we obtained perfect generalization when training directly for UF and UB, in line with \textsc{C}-\textsc{Rasp}\textup{[pos]} expressivity. 
}

We therefore adopt the following strategy for providing length generalizing guarantees across our retrieval and copying tasks:  
\[
\text{\emph{Construct a \textsc{C}-\textsc{Rasp}\textup{[pos]} program} } \Longrightarrow \text{the task length-generalizes to arbitrary $n$.}
\]

We start by considering the ``Right'' versions of our Retrieval tasks.
In the unique case (UR), a transformer can solve retrieval using a standard induction circuit (Figure~\ref{fig:induction_example}): an induction head retrieves the symbol that followed the unique occurrence of the query. These operations can be simulated in \textsc{C}-\textsc{Rasp}\textup{[pos]} (Lemma~\ref{lemma:ul-ur}), ensuring a theoretical length generalization guarantee.
In the non-unique case, expressibility -- and hence the length generalization guarantee -- varies:
\begin{thm}\label{thm:firstvslast}
    NRFirst is expressible in \textsc{C}-\textsc{Rasp}\textup{[pos]}.    NRLast is not expressible in \textsc{C}-\textsc{Rasp}\textup{[pos]}.
\end{thm}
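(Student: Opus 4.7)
The plan has two parts.

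\textbf{Expressibility of NRFirst.} I would give an explicit \textsc{C}-\textsc{Rasp}\textup{[pos]} construction, exploiting that ``first occurrence'' is inherently a prefix-count concept. For each symbol $q \in \Sigma$, define at position $i$ the predicate $F_q(i) \equiv (x_i = q) \wedge (\#\{j \le i : x_j = q\} = 1)$, which flags $i = q_1$ whenever $q$ is the actual query; the counting conjunct is a threshold on a prefix count, and ``$=1$'' is expressible as $\ge 1$ conjoined with the negation of $\ge 2$. A local offset-$(-1)$ primitive then yields $A_q(i) \equiv F_q(i-1)$, flagging the target position $q_1 + 1$. At the final (query) position, the predicate $\#\{j : A_q(j) \wedge x_j = a\} \ge 1$ determines whether the NRFirst answer would be $a$ under query $q$; dispatching by the actual query symbol (visible at the final position) pins down the correct output. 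Applying the expressibility-implies-length-generalization theorem of \citet{huangformal} completes this half.

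\textbf{Non-expressibility of NRLast.} The plan is an indistinguishability argument. A \textsc{C}-\textsc{Rasp}\textup{[pos]} attribute at position $i$ is a fixed Boolean combination of prefix counts thresholded against rational constants, constant-offset local primitives, and periodic primitives; none of these can isolate the \emph{rightmost} position satisfying a given property. The underlying obstruction is that identifying the last $q$ would require either a look-ahead (forbidden by causality) or a predicate such as ``$\#_q(j) = T$'' where $T$ is the input-dependent total $q$-count, whereas every threshold permitted by \textsc{C}-\textsc{Rasp}\textup{[pos]} must be a fixed constant. To turn this into a proof, I fix any candidate program $P$ and construct two inputs $w, w'$ of equal length with $\mathrm{NRLast}(w) = a \ne b = \mathrm{NRLast}(w')$ but $P(w) = P(w')$: I flip the successor of the last context $q$ (swapping the NRLast answer) while performing a compensating swap deeper inside the context so that every prefix-count threshold queried by $P$ is crossed at matching positions and every periodic/local primitive sees identical patterns on the two inputs.

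The main obstacle is this negative half: verifying that a single compensating swap defeats \emph{all} nested thresholds and positional primitives in $P$ simultaneously. The cleanest formalization is an Ehrenfeucht--Fraisse-style game for the counting-modal logic underlying \textsc{C}-\textsc{Rasp}\textup{[pos]}, in which Duplicator's strategy swaps the two perturbed positions whenever Spoiler probes them; making this strategy work requires spacing the perturbations far enough apart -- absolutely and modularly -- relative to $P$'s finite schema of thresholds and periods, which is always achievable at sufficiently long input length.
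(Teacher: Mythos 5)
Your treatment of NRFirst is correct and is essentially the paper's approach: you construct the same kind of program, flagging first occurrences via a prefix-count threshold, shifting by one with a local primitive, and then checking at the query position which symbol appears after the flagged occurrence. The paper packages the leftmost-occurrence test into a single predicate $ISLEFTMOST(i)$ rather than indexing by symbol, but the two are interchangeable syntactic sugar; both are direct formalizations of a previous-token head composed with an induction head over first occurrences.

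For NRLast your route diverges from the paper's, and it has a genuine gap. The paper does not attempt a direct indistinguishability argument; it \emph{reduces} NRLast to the problem of distinguishing the regular languages $(a|b|i)^*ai^*$ and $(a|b|i)^*bi^*$, which is already proven impossible for \textsc{C}-\textsc{Rasp}\textup{[pos]} in Lemmas 35--36 of \citet{huangformal}. You instead propose to argue from scratch that a pair of inputs with one flipped successor and one ``compensating swap'' defeats any fixed program $P$. You correctly identify the obstacle --- making one swap simultaneously invisible to every nested threshold, every periodic predicate, and every bounded-distance local predicate --- but you do not discharge it, only assert it is ``achievable at sufficiently long input length.'' This assertion is exactly the hard part. \textsc{C}-\textsc{Rasp}\textup{[pos]} allows nested counts (e.g., counting positions where a previously computed count crosses a threshold), so one compensating swap may not preserve all relevant statistics; the construction must control every intermediate count at every prefix, and that requires an actual Duplicator strategy specification, not a spacing heuristic. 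Until you either fully specify the EF-style invariant and prove Duplicator wins, or replace this half with the paper's reduction to a known impossibility, the negative direction is not established. The positive lesson is that the reduction route is much cheaper here: the awkward combinatorics of ``last occurrence'' is exactly what the prior lemma was built to capture, so citing it avoids re-deriving a nontrivial inexpressibility result.
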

The proof is in the Appendix (Lemmas~\ref{lemma:nrfirst-nlfirst} and \ref{lemma:nrlast-nllast}).
An extension of the induction circuit performs NRFirst: first, at each $x_i$, an attention head checks whether this symbol has appeared previously. Given a query $w$, an induction circuit then retrieves the symbol immediately following the (unique, if any exist) first appearance of $w$. A \textsc{C}-\textsc{Rasp}\textup{[pos]} program formalizes this (Lemma~\ref{lemma:nrfirst-nlfirst}).
On the other hand, the inexpressibility of NRLast is proven by reduction to the regular language $(a|b|e)^*ae^*$, provably  not in \textsc{C}-\textsc{Rasp} (Lemma \ref{lemma:nrlast-nllast}).
Regarding copying tasks, \citet{huangformal} shows that UF is in \textsc{C}-\textsc{Rasp}\textup{[pos]}, while NF is not. The construction for UF again uses an induction circuit.

So far, we have considered ``right'' and ``forward'' versions---in fact, in all tasks, \textsc{C}-\textsc{Rasp}\textup{[pos]} expressibility is equivalent for the left (backward) settings:

\begin{thm}\label{thm:asymmetry}
Across all tasks, there is no \textsc{C}-\textsc{Rasp}\textup{[pos]} expressibility difference between R vs L versions, and F vs B versions.
\end{thm}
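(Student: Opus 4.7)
The plan is to exhibit, for each task in Theorem~\ref{thm:firstvslast} and for the copying tasks of \citet{huangformal}, a symmetric program-level translation between the R/L (resp.\ F/B) versions that stays inside \textsc{C}-\textsc{Rasp}\textup{[pos]}. The key primitive enabling this is the local \emph{previous-token} operation in \textsc{C}-\textsc{Rasp}\textup{[pos]}: at any position $i$, the value $x_{i-1}$ can be cached as an auxiliary feature alongside $x_i$. This single primitive lets us trade an R-style program (predicate on $x_{i-1}$, retrieve $x_i$) for an L-style program (predicate on $x_i$, retrieve the cached $x_{i-1}$), and vice versa.

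For retrieval, I would write out both constructions explicitly. The UR program attends from the query position to positions $i$ with $x_{i-1}=q$ and reads $x_i$; its mirror for UL attends to positions $i$ with $x_i=q$ and reads the cached $x_{i-1}$. For NRFirst versus NLFirst, the same predicate swap is combined with the ``first occurrence'' filter from Lemma~\ref{lemma:nrfirst-nlfirst}, which is itself orientation-symmetric. The non-expressibility half of the theorem (NRLast versus NLLast) follows by contrapositive: any \textsc{C}-\textsc{Rasp}\textup{[pos]} program for NLLast would, under the same translation, give one for NRLast, contradicting Lemma~\ref{lemma:nrlast-nllast}.

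For copying, the same idea is applied at each autoregressive decoding step. The UF construction is an induction lookup at step $t$: attend to the context position $j$ whose previous-token cache equals the last produced token, then emit $x_j$. The UB program uses the anti-induction motif from Section~\ref{subsec:induction_definition}: attend to the position $j$ whose own token equals the last produced token, then emit the cached $x_{j-1}$. The same translation carries NF to NB, and the non-expressibility of NB follows from that of NF by contrapositive.

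The main subtlety lies in the copying case: because decoding is autoregressive, I need to check that at each step $t$ the mirrored lookup really produces the correct token of the reversed output, and that the key being matched (the last produced token) is causally accessible when needed. In the unique case this is immediate because matches are unique; in the non-unique case the symmetry is preserved because the obstruction to expressibility that rules out the forward variant is structural and is inherited by its mirror under the translation. The remainder is a routine verification that each of the mirrored programs only uses local and counting primitives permitted by \textsc{C}-\textsc{Rasp}\textup{[pos]}.
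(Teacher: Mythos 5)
For the positive (expressibility) half, your mirror-construction argument is essentially the same as the paper's: the paper's Lemmas~\ref{lemma:ul-ur} and~\ref{lemma:nrfirst-nlfirst} write out the R- and L-style programs side by side, with the same swap between ``match on cached predecessor, read current'' and ``match on current, read cached predecessor'' that you describe, and the UF/UB construction is likewise treated as iterated UR/UL.

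The gap is in the non-expressibility half. You derive NLLast (and NB) by contrapositive, asserting that \emph{any} \textsc{C}-\textsc{Rasp}\textup{[pos]} program for NLLast would ``under the same translation'' yield one for NRLast. But the translation you have actually exhibited is a duality between two \emph{specific program templates} (induction vs.\ anti-induction circuits), not a well-defined transformation of arbitrary \textsc{C}-\textsc{Rasp}\textup{[pos]} programs. An arbitrary program for NLLast need not decompose into ``a predicate on $x_i$ plus a cached read of $x_{i-1}$'', so there is no canonical $P \mapsto P'$ map to invoke. You would need to either prove a general closure/translation lemma for the class, or, as the paper does, prove the negative results for NLLast and NB directly: the paper runs the analogous reduction to Lemma~35/36 of \citet{huangformal} for NLLast, and observes that the Alice/Bob communication-complexity lower bound used for NF is itself symmetric in the two halves of the string, so it applies verbatim to NB. Until you supply a general translation theorem, the contrapositive step does not go through, and your proof of Theorem~\ref{thm:asymmetry} is incomplete on exactly the cases (NLLast, NB) where it is not already trivially mirrored.
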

That is, \textsc{C}-\textsc{Rasp}\textup{[pos]} expressivity is the same for UL and UR, the same for UF and UB, etc. (Corollary~\ref{cor:asymmetry}).
All tasks are annotated in  Figure~\ref{fig:task_variants}  with their \textsc{C}-\textsc{Rasp}\textup{[pos]} expressibility.
Experiments in Appendix \ref{appendix:fromscratch} show that transformers trained from scratch on these tasks exhibit length generalization as predicted by the theoretical results.

\begin{figure}[htbp]
  \centering
  \includegraphics[width=\linewidth]{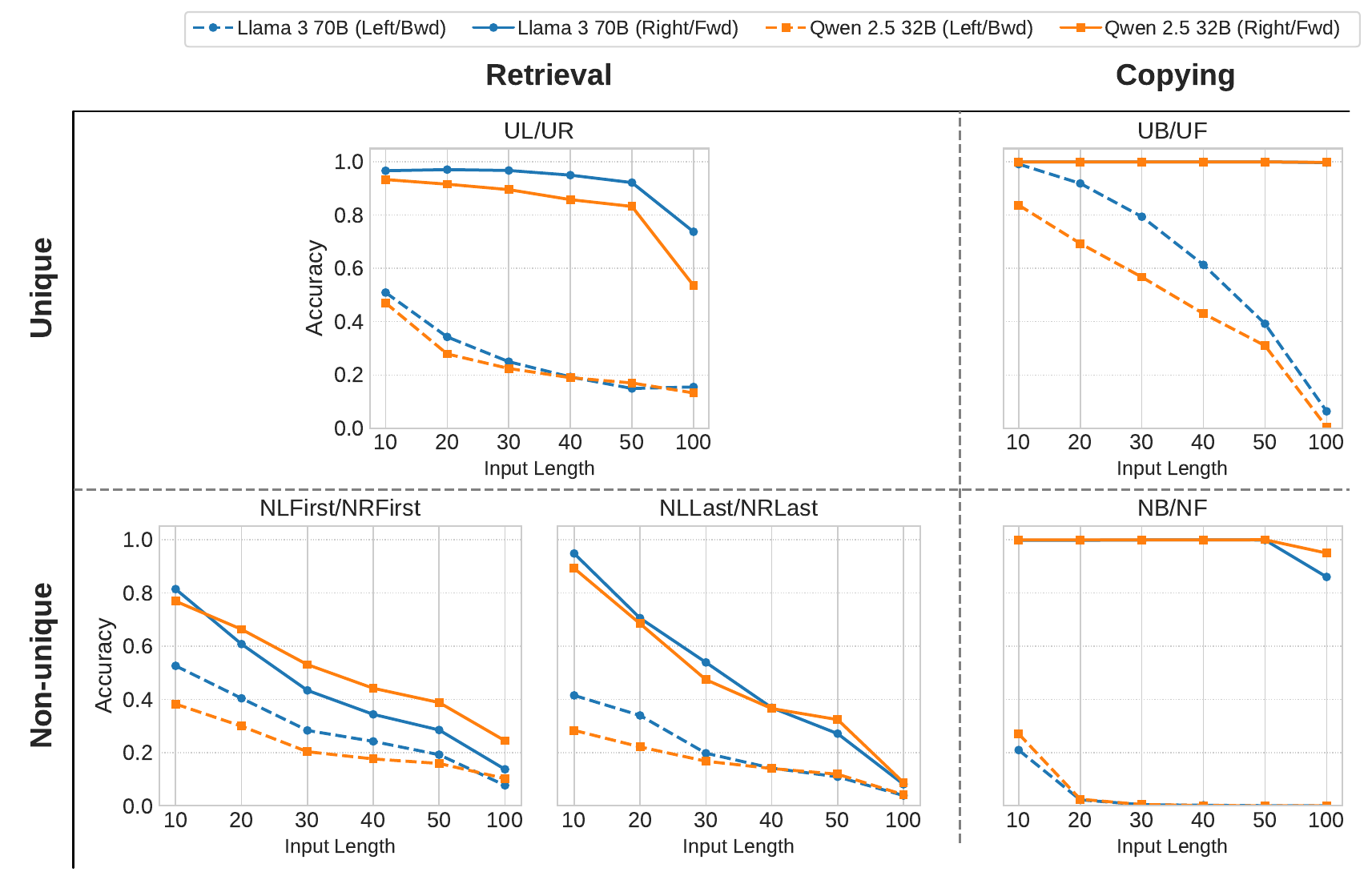}
  \caption{In-context accuracy for \texttt{Llama‑3 70B} and \texttt{Qwen2.5‑32B} across all our tasks averaged over three seeds. Across all settings, lengths, model size, and task type, we observe a Directional Bias: Retrieving the token to the left of the query token is always more difficult compared to the one to the right, provided all other things are constant. Similarly, copying in the forward direction is easier than copying backwards. Detailed prompts, similar performance graphs on other models (including instruction-tuned variants) are in Appendix  \ref{appendix:prompting_details}.}
  \label{fig:combined}
\end{figure}
\section{Experiments}
\label{sec:experiments}
\subsection{Eliciting Abilities via In-Context Learning}

\label{subsec:elicit_prompting}
We evaluate \textbf{pretrained completion} as well as \textbf{instruction-tuned} variants of the following models: \texttt{Llama3.1--8B}, \texttt{Llama3.1--70B}, \texttt{Qwen2.5--7B}, and \texttt{Qwen2.5--32B}. For retrieval as well as copying, each model is evaluated across various prompt variations. The prompt variations relate to the few-shot example size (examples are equal in length to the test string or are smaller but clearer) and the instruction templates (details on prompt templates in the Appendix \ref{sec:appendix_prompt_retrieval}) with the templates adjusted for completion as well as instruction-tuned models. All experiments use 1,500 test strings for each length ${10, 20, 30, 40, 50, 100}$ generated under three independent seeds. We cap the input string length at $100$ tokens to maintain a bounded vocabulary comparable to the unique setups; see Appendix \ref{appendix:in_ctx_results_additional} for results at other lengths. The test strings contain spaced letters (including diacritics) and digits, to ensure that every character gets tokenized as an independent token and thus, the required uniqueness / repetitions can be maintained within the context, whenever required. Few‑shot prompts contain $k=5$ demonstrations, reserving one slot whenever an explicit explanation is required by the template. For each task that belongs to the same group i.e. for Unique (\textsc{UR}, \textsc{UL}) and for Non-Unique (\textsc{NRLast}, \textsc{NRFirst}, \textsc{NLLast}, \textsc{NLFirst}), we have the same strings on which we test the in-context ability with different expected answers to avoid the confound of different datapoints. 

\textbf{Directional Bias: Pretraining induces a left-right asymmetry.}
We found a persistent \textit{directional bias} in all of the pretrained models we tested (Figure \ref{fig:combined}). For each task, retrieval or copying, the models consistently performs better on rightward tasks (UR, NRFirst, NRLast, UF, NF), across models, prompt templates, and sequence lengths. The drop in performance for the leftward tasks cannot be attributed to lexical cues such as the words “right” or “forward” (explored in earlier work \citep{he2025supposedlyequivalentfactsarent}) because the performance asymmetry persists even when we strip all natural‑language instructions and provide only input–output pairs for the model to infer the task in-context. \textsc{C}-\textsc{Rasp}\textup{[pos]} predicts symmetric performance in transformers trained from scratch within a task family (Theorem \ref{thm:asymmetry}). We further find that models trained from scratch do not exhibit such a left-right asymmetry, confirming it is not inherent to the architecture (Appendix \ref{appendix:fromscratch}).  We therefore hypothesize that pretraining instills this bias because induction is a more natural task than anti‑induction and thus more induction head circuits get formed during pretraining as opposed to anti-induction circuits; Sections~\ref{sec:finetune} and~\ref{subsec:mechanistic} probe this explanation in more detail.

\textbf{Uniqueness Bias: Unique tasks are easier to elicit than non‑unique tasks.}
Unique retrieval and copying setups both perform relatively well in rightward / forward direction. In contrast, in the non-unique settings, eliciting correct retrieval in any direction is challenging across task setups.
We observe an interesting difference to what is expected for transformers trained from scratch:
First, the underperformance of NRFirst in pretrained models does not align with what is expected for transformers trained from scratch (Theorem \ref{thm:firstvslast}). Copying, by contrast, is easier to elicit than retrieval, even though copying requires predicting many tokens and hence offers more opportunities for error. Remarkably, Non‑unique‑forward (\textsc{NF}) remains near perfect throughout. As this experiment only considers input lengths up to 100 for comparable results vis-{\`a}-vis the unique settings, an interesting question is how copying performs at longer lengths. The next section therefore stress‑tests in‑context copying on longer paragraphs, seeking to reveal if the theoretically predicted difficulty of non-unique copying (\textsc{NF}) appears in realistic settings.
\begin{center}
\colorbox{gray!15}{%
  \parbox{0.98\linewidth}{\textbf{Takeaway}: Pretraining introduces a Directional Bias favoring retrieving and copying in the forward/right direction over the backwards/left. Pretraining also selectively amplifies the abilities of a transformer, as evident from the near perfect performance on copying repeated tokens, and poor performance on certain retrieval tasks, even though the latter are better learnable in a length-generalizable way for the architecture.}}
\end{center}

\subsection{Testing Uniqueness Bias and Directional Bias in Natural Settings}
\label{subsec:copy_natural}

\paragraph{Testing Uniqueness Bias by Copying Lorem Ipsum Paragraphs.}

In Section~\ref{subsec:elicit_prompting}, forward copying showed near-perfect performance irrespective of uniqueness (UF and NF). We now ask whether the theoretically predicted asymmetry between unique and non-unique copying still impacts LLM behavior, by having models reproduce longer, naturalistic but semantically neutral text. We here do not focus on evaluating exact copying accuracy, instead focusing on understanding the nature of LLM mistakes. 
\begin{wrapfigure}[17]{r}{0.46\textwidth}  
  \centering
  \vspace{-0.4\baselineskip}               
  \includegraphics[width=\linewidth]{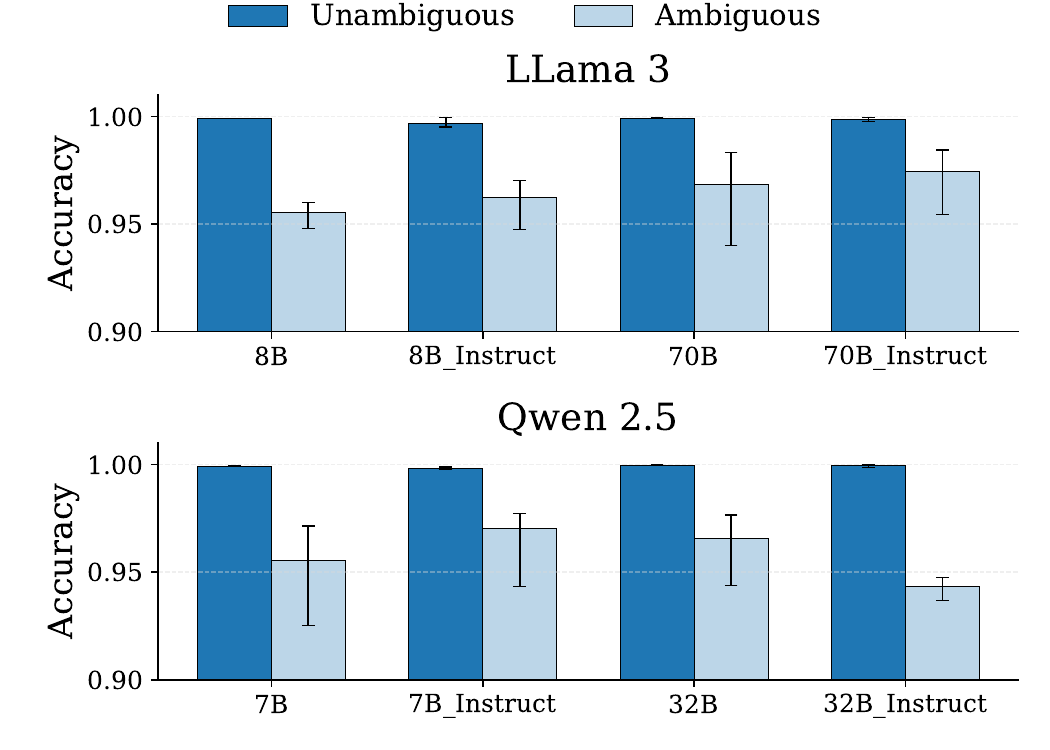}
  \caption{Failures in accurate copying of Lorem Ipsum paragraphs are associated  primarily with ambiguous transition indices.}
  \label{fig:loremipsum} 
\end{wrapfigure}
While entirely incorrect outputs are easily detectable, subtle differences in nearly accurate copies represent the most challenging errors for users to identify. Therefore, we focus exclusively on the cases where the generated output is within 75\% of the length of the expected output, targeting scenarios where the model clearly understands the copying task and produces plausible copies of the input. To confirm that our prompting setup is valid, we evaluate the models on copying exact spans from recent arXiv papers (April 2025), finding perfect performance.

We design our copying benchmark to suppress semantic cues that could help the model break ambiguous copy chains. Specifically, each input consists of a randomized Lorem‑Ipsum–style paragraph. Such content is unlikely to have appeared verbatim in the training corpus. Each paragraph has close to 500 tokens.
 See Appendix \ref{appendix:lorem_longer} for results at longer lengths (upto 5k tokens).

We classify each token in the input as either \emph{unambiguous} (uniquely predictive of the subsequent token) or \emph{ambiguous} (potentially followed by multiple subsequent tokens and therefore is not perfectly predictive of the subsequent token). We perform token-level alignments using the Needleman-Wunsch algorithm \citep{needleman1970general}, to detect token-level insertions, deletions, and substitutions. What we are left with is an alignment sequence consisting of 4 kinds of entries: \textit{match}, \textit{insert}, \textit{delete} and \textit{substitute}. We group consecutive entries in this alignment list if they are of the same type and call them \emph{aligned spans} if the entries consist of only `match' and \emph{misaligned spans} if the entries grouped consist of any of the other types. We refer to indices where an aligned span finishes and a misaligned span starts as \emph{transition indices}. These transition indices mark critical points where an error in copying started propagating during autoregressive generation. 
It should be noted that, even if there are just a few such misaligned spans, they can each be very long, especially for the ones consisting exclusively of `insert'. Such hallucinated insertions can \emph{at times} snowball into a long error chain from where the model never reorients itself to produce a correct continuation (Examples in Appendix \ref{appendix:loremipsum}).

We categorize the transition indices as \emph{unambiguous} or \emph{ambiguous} based on the nature of the token at that index. Since an induction head would suffice to correctly predict the subsequent token following an unambiguous token, we hypothesized that transition indices would never be unambiguous. Conversely, ambiguous tokens were expected to be the source of where all of the \emph{misaligned error span} began, as it has similarities to the failure modes expected in NF (Non-unique forward copying). Empirical results (Figure \ref{fig:loremipsum}) confirm this hypothesis. Across all model variants tested (sizes, families, completion as well instruction tuned models), a consistent pattern emerges, the transition indices almost unilaterally always are ambiguous and cause the `glitches' we see in the final copied output. While these glitches appear across models, their exact nature appears to be random. We did not find any pattern except for the fact that they were tied to ambiguous tokens.

\paragraph{Testing Directional Bias in Git Commit History: Direction‑Aware Copying.}
\label{subsec:prog_apps}

While the Lorem Ipsum setup shows how glitches in forward copying might cause reliability issues, we now show how failures in backward copying impact real world tasks. We use a compact Git history manipulation task to exemplify when the right-over-left asymmetry seen in Section \ref{subsec:elicit_prompting} surfaces in practice. Given a truncated \texttt{git log} (newest→oldest commit hashes), the model must list commits that need \emph{reverting} -- \texttt{git revert}(forward order) or must be \emph{cherry‑picked} -- \texttt{git cherry-pick}(reverse order) onto a release branch. 
\begin{wrapfigure}[12]{r}{0.5\textwidth} 
  \centering
  \vspace{-0.5\baselineskip}
  \includegraphics[width=\linewidth]{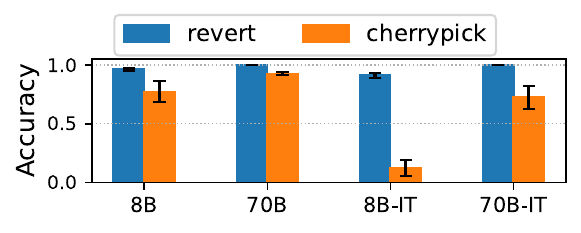}
  \caption{Git Commit History Manipulation also has the forward vs backward asymmetry seen in Section \ref{subsec:elicit_prompting}.}
  \label{fig:gitcommit}
\end{wrapfigure}
Llama‑3 models reproduce the forward list almost perfectly but accuracy drops when the order flips (Figure \ref{fig:gitcommit}) -- mirroring the asymmetry seen in the synthetic experiments (Section \ref{subsec:elicit_prompting}). Qwen models are not able to do the task at all, hence we exclude them from this analysis. The reversal presented here does not happen here at the token level, instead at the commit order level and is thus in principle more similar to a generalized copying setup. However, the theoretical results and expectations transfer to the generalized scheme (See Appendix for a detailed discussion~\ref{appendix:remark_copy}).

\begin{center}
\colorbox{gray!15}{%
  \parbox{0.96\linewidth}{%
    \textbf{Takeaway}: Even though pretraining selectively amplifies certain abilities of the transformer, fundamental limitations inherent to the architecture have a marked impact even after the pretraining and need to be kept in mind while deploying these models in real-world tasks.}}
\end{center}

\subsection{Fine-tuning Eliminates the Directional Bias, But Not the Uniqueness Bias}
\label{sec:finetune}
Our experiments have uncovered three failure modes: \emph{(i)} a Directional Bias in pretrained models, \emph{(ii)} poor performance on certain retrieval tasks that are \emph{provably} length-generalizable, and
\emph{(iii)} the Uniqueness Bias: sporadic ``glitches’’ on copying tasks that are \emph{not} length-generalizable. Each of these could stem either from architectural limitations or from inductive biases imparted during pretraining. To disentangle the two, we carry out task-specific supervised fine-tuning and see which failures are removed. Since we wish to know whether the model has truly learned a \emph{length-generalizable} solution, we evaluate on test sequences twice as long as those seen in training, following \citet{huangformal}.
A fully correct model should therefore achieve $100\%$ accuracy on this test set; even a single error then flags a reasoning flaw that could undermine reliability in real-world applications, an evaluation scheme similar to that of \citet{liu2024exposing}.

\begin{wrapfigure}[15]{r}{0.6\textwidth}  
    \centering
      \vspace{-1.2\baselineskip}               
      \includegraphics[width=\linewidth]{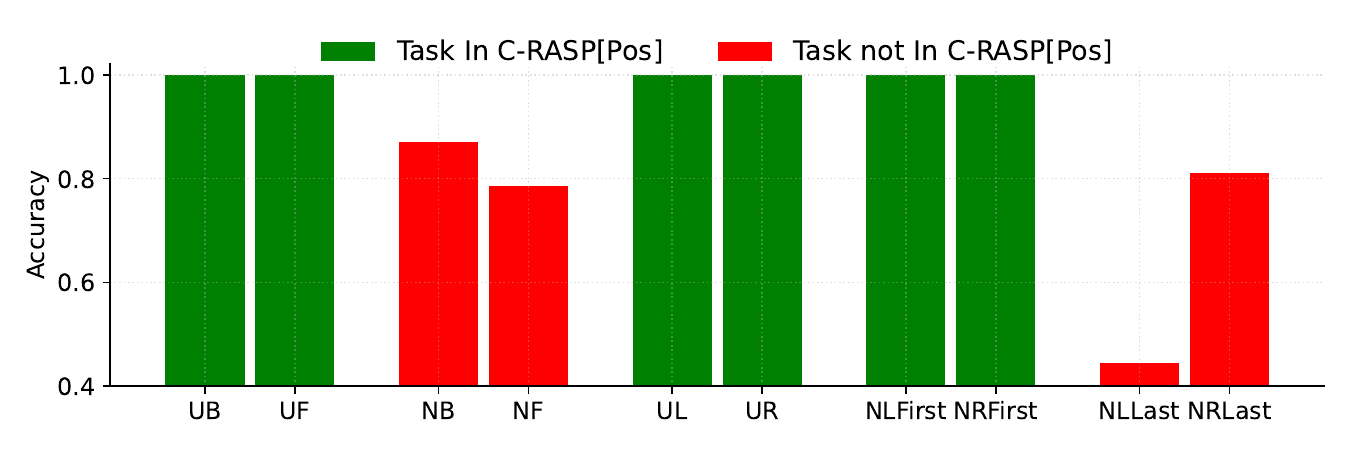}    
  \caption{On the out-of-distribution dataset, tasks belonging to \textsc{C}-\textsc{Rasp}\textsc{[Pos]} (\textsc{UL}, \textsc{UR}, \textsc{UF}, \textsc{UB}, \textsc{NLFirst}, \textsc{NRFirst}) retain perfect accuracy, whereas \textsc{NLLast}, \textsc{NRLast}, \textsc{NF}, and \textsc{NB}, which are provably not in \textsc{C}-\textsc{Rasp}\textsc{[Pos]}, suffer a drop in performance. The Directional Bias within task setups is largely absent from the fine-tuned models.
  }
  \label{fig:finetune}
\end{wrapfigure}
\textbf{Experimental Setup.} We fine-tune a pretrained language model ---\texttt{GPT\textendash 2 1.5B (APE)}\footnote{\url{https://huggingface.co/karpathy/gpt2_1558M_final4_hf}}--- on each retrieval task plus unique forward and backward copying. Fine-tuning uses the standard next-token objective but masks the loss to the answer span only, mimicking inference, i.e., for retrieval tasks, just the answer token, and, for the copying tasks, all tokens after the EOS delimiter in the input, indicating the end of the input sequence. To ensure that APE models can generalize on longer positions than those seen during training, we apply the \emph{positional offset} trick of \citet{huangformal}: for every training input a random constant $\Delta\in[0,200-\ell_{min}]$ is added to all position indices, ensuring that all the positional embeddings are trained. All models see training examples whose input lengths are sampled uniformly from $[\ell_{\min},100]$, where $\ell_{\min}$ is the length of the shortest well-formed instance of the task. We confirmed that, prior to this fine-tuning, the base model's performance on each task was effectively zero. Details of hyperparameters and datasets are in Appendix \ref{appendix:finetune}.

Our testbed setup for all of our tasks closely follows the setup of \citet{bhattamishra2020ability, huangformal}, with an in-distribution bin (i.e., lengths $[\ell_{min}, 100]$) and an out-of-distribution evaluation bin (i.e., lengths $[101, 200]$). Similar to their setup, we report accuracy on our test sets only on seeds where training loss converges to 0. For our out-of-distribution evaluation bin on Retrieval tasks, we enforce that all instances of the query token $q$ appear exclusively in the \emph{first half} of the context $\mathcal{C}$. 
This way distance between the relevant tokens to which the model needs to attend increases, thereby rigorously testing the model’s ability to generalize to longer contexts. In such settings, superficial heuristics should be insufficient, and thus we expect that only models with robust length generalization can perform reliably.
We find that \textsc{C}-\textsc{Rasp}\textup{[pos]} perfectly tracks length generalization results in our fine-tuning experiments: Every task which belongs to \textsc{C}-\textsc{Rasp}\textup{[pos]}, and is thus predicted to length-generalize, does so, whereas tasks not in \textsc{C}-\textsc{Rasp}\textup{[pos]} do not length-generalize. Additionally, the Directional Bias of pretrained models disappears with fine-tuning, confirming the fact that it was an artifact of pretraining. We next seek to understand what leads to the development of this asymmetry by analyzing induction and anti-induction circuits (defined in Section \ref{subsec:mechanistic}) both in pretrained models, where this asymmetry exists, and in our fine-tuned models, where it doesn't.

\subsection{Source of the Directional Bias: A Mechanistic Perspective}
\label{subsec:mechanistic}

We hypothesize that the source of directional bias in pretrained models lies in the difference between the presence and strength of induction and anti-induction circuits as described in Section \ref{subsec:induction_definition}. We posit that, in pretrained models, anti-induction heads are less common, but that fine-tuning can boost these, removing the asymmetry. We use Unique copying (UF and UB) setups to evaluate our hypothesis.

\begin{figure}[t]
  \centering
  \begin{subfigure}[t]{0.44\linewidth}
    \includegraphics[width=\linewidth]{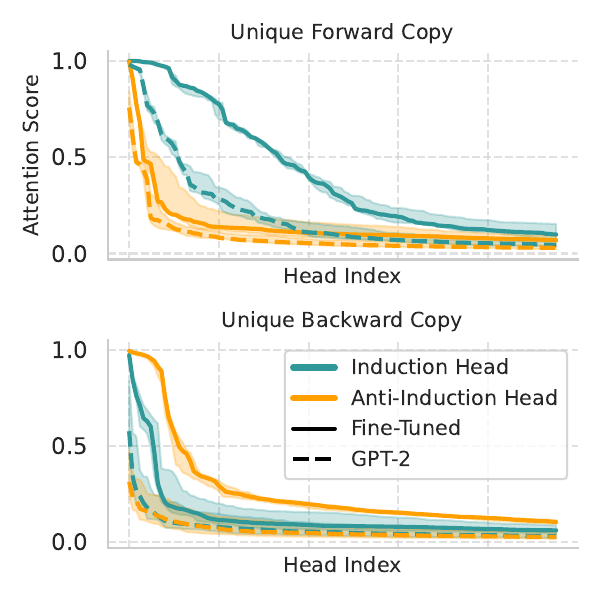}
    \caption{Fine-tuning strengthens the relevant heads}      
    \label{fig:strength_induction}
  \end{subfigure}
  \hfill
  \begin{subfigure}[t]{0.54\linewidth}
    \includegraphics[width=\linewidth]{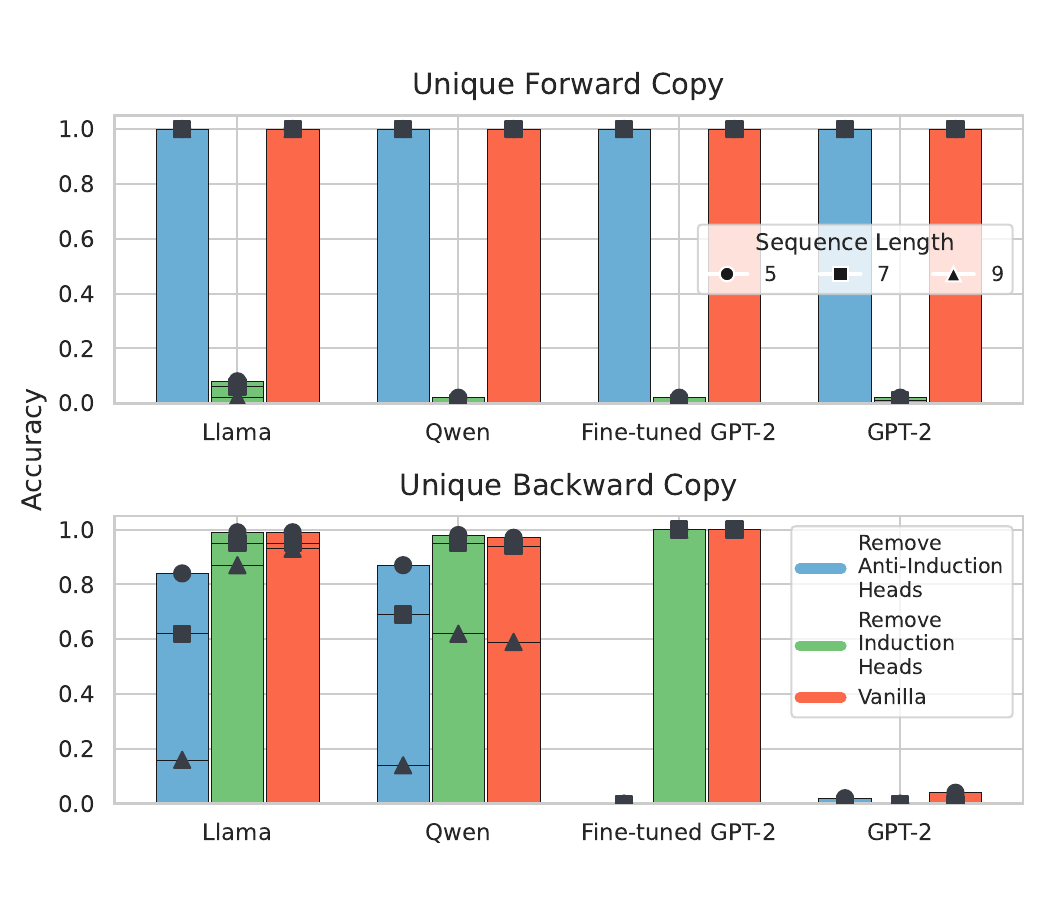}
    \caption{Induction and Anti-Induction heads are causal}      
    \label{fig:causal_induction}
  \end{subfigure}
  \caption{(a) Top-10-percent attention heads ranked by their \emph{attention score}, i.e.\ the mean attention weight that target-string tokens allocate to either the \textcolor{orange}{\textbf{same}} source token (anti-induction) or the \textcolor{blue}{\textbf{next}} source token (induction). Dashed lines: pretrained checkpoints; solid lines: fine-tuned.  Shaded bands cover source lengths 3–10.  Fine-tuning eliminates the pretraining imbalance; for forward copying (\textsc{UF}) fine-tuning strengthens induction heads, for backwards copying (\textsc{UB}) fine-tuning boosts anti-induction heads. (b) Patching experiment that ablates each family of heads while measuring task accuracy.  For \textsc{UF}, removing induction heads (red $\rightarrow$ green) is fatal, whereas ablating anti-induction heads is inconsequential (red $\rightarrow$ blue); the roles reverse for \textsc{UB}. This holds across models \& sequence lengths confirming the \emph{causal} status of the two circuits.}
  \label{fig:mechanistic-attention}
\end{figure}

\textbf{Induction and anti-induction heads are causal in both fine-tuned and pretrained models.}
We conduct a patching experiment to confirm that removal of induction heads hurts performance in the UF task, while removal of anti-induction heads hurts performance in the UB task. Indeed, both in fine-tuned and pretrained models, removing induction heads is catastrophic for UF and has a negligible effect in UB, and removing anti-induction heads cripples backward copying while leaving forward copying intact. On fine-tuned checkpoints, the effect is amplified to the extreme: removing relevant heads drops the accuracy to zero, while removing another type of heads has no effect (Figure~\ref{fig:mechanistic-attention}\subref{fig:causal_induction}). Thus, induction and anti-induction heads have a causal effect on model performance (Details of patching experiment in Appendix \ref{appendix:mech_interp}). Additionally, the induction and anti-induction heads get amplified via fine-tuning (Figure~\ref{fig:mechanistic-attention}\subref{fig:strength_induction}). Fine-tuning strengthens the heads that matter: induction heads gain attention weight for forward copy, anti-induction heads for backward copy. 

\begin{center}
\colorbox{gray!15}{%
  \parbox{0.96\linewidth}{%
    \textbf{Takeaway:}  
    The directional bias in pretrained transformers can be explained by the difference in strength and prevalence of induction vs. anti-induction heads. Fine-tuning can be used to amplify either of these heads and thus remove the asymmetry.}}
\end{center}

\section{Discussion}
\label{sec:discussion}

\paragraph{Related Work.} \textit{Length generalization} has been studied extensively  \citep{anil2022exploring, generalization-on-unseen, zhou2023algorithms}, including attempts to improve it using scratchpads \citep{generalization-on-unseen, hou2024universal} or specific positional embedding schemes \citep{press2022train, two-stones-zhenyu-he, cho2024position}. 
\textit{Transformer limitations:} besides copying and retrieval, transformers struggle with problems including sensitive functions \citep{hahn2024sensitive} and 
state tracking \citep{merrill2023parallelism}. These limitations can be related to failures in LLMs on tasks such as multiplication \citep{frieder2024mathematical, satpute2024can, amiri2025lower} and maintaining state information \citep{toshniwal2022chess, kim-schuster-2023-entity, merrill2024illusion, zhang2025finite}.  \textit{Induction heads} have been studied extensively since \citet{elhage2021mathematical} and their existence has been linked to in-context learning and copying \citep{olsson2022context,chenunveiling, singh2024needs, crosbie2025inductionheadsessentialmechanism}. 
Retrieval heads \citep{wu2025retrieval} generalize this notion by conditionally copying information from arbitrary positions, subsuming our definitions of induction and anti-induction heads.

\paragraph{Takeaways.} While failures in length generalization in copying and retrieval tasks may not affect typical user interaction, they pose reliability concerns in specialized applications. For instance, AI coding assistants like GitHub Copilot \citep{chen2021evaluating} routinely handle non-semantic identifiers  (commit hashes, variable names, function names). Glitches in copying such strings could lead to bugs in the deployment of critical code and would therefore be highly undesirable.
Pretraining of LLMs has been shown to impart biases, be it for specific tokens or specific entities \citep{he2025supposedlyequivalentfactsarent} and in-context performance can be brittle based on the choice of tokens used. This can lead to reliability risks in domains requiring high precision such as medicine \citep{li2024laterality, BelislePipon2024}. In contrast, the left-right asymmetry we uncover is indicative of a latent ability that is difficult to elicit in pretrained models without explicit fine-tuning. We ask practitioners to be aware of the existence of such biases and consider task-specific fine-tuning to eliminate this kind of asymmetry if it is considered undesirable for a downstream application. 

On the other hand, glitches in copying large semantically neutral text (SHA keys, DNA sequences) would fall under the umbrella of repeated copying. To achieve complete reliability and compensate for persistent architectural limitations, practitioners might need to resort to external tool usage in LLMs. Similar to how tool usage is used to compensate for mathematical weaknesses in LLMs \citep{gao2023pal, chenprogram2024, goutora2024, wang2024mathcoder}, designing principled policies that decide when to use tools like programmatic copy-pasting could be an interesting research direction.

\paragraph{Limitations.}
We focus on two open LLM model families, not covering the entire model landscape, especially closed source systems or models based on recurrent architectures, which might show different sets of abilities or limitations \citep{Sarrof2024SSMs, grazzi2025unlocking, siems2025deltaproduct}. 
Second, due to resource limitations,  fine-tuning experiments in Section~\ref{sec:finetune} are limited to a single, 1.5B sized model. We also do not analyze the underlying pretraining data of the models to get an estimate of what the correct in-distribution bin should be for any of our models. Larger contexts should in principle push the out-of-distribution boundary to higher lengths as the in-distribution size increases. It remains an open question whether length generalization would even matter for larger models with a huge in-distribution context size. Finally, our theoretical results linking \textsc{C}-\textsc{Rasp}\textup{[pos]} to length-generalizability in \citet{huangformal} have only been proven formally for APE, and applicability to RoPE has only been tested empirically.

\section{Conclusion}
We asked whether large‐scale pretraining can erase the transformer’s native length‐generalization limits, using a foundational family of retrieval and copying tasks as testbed.
We find that pretraining boosts certain abilities, but transformers remain bound by the same length-generalization limits they were “born” with: Pretraining amplifies induction circuits, making right-/forward-oriented retrieval and copying notably easier, but does not overcome the difficulty inherent to non-unique retrieval and copying. Fine-tuning can rebalance induction and anti-induction heads and restore full, theory-aligned generalization, but only by explicitly teaching the model its missing capacities. Our results thus suggest that, while large-scale pretraining brings remarkable capabilities, it cannot fundamentally rewrite the architecture’s core inductive biases.

\section*{Acknowledgements}
We thank anonymous reviewers for their encouraging and constructive
feedback. This research is funded in part by the Deutsche Forschungsgemeinschaft (DFG, German Research Foundation) -- Project-ID 232722074 – SFB~1102 ``Information Density and Linguistic Encoding''; Project-ID 471607914 -- GRK 2853/1 ``Neuroexplicit Models of Language, Vision, and Action''; and Project-ID 389792660 -- TRR~248 ``Foundations of Perspicuous Software Systems''. We would like to thank Mark Rofin, Blerta Veseli, Brian DuSell, Ekaterina Shutova, Kayo Yin, Xinting Huang, Yifan Wang for discussions and feedback on the draft.

\section*{Contributions}
MJ contributed to Sections 4.1, 4.2, 4.3, scaling experiments, paper refinement and the Appendix. YV contributed to Sections 4.1, 4.2, 4.3, 4.4, and paper refining.  YS contributed to Sections 4.1, 4.2, the Appendix, scaling experiments, paper drafting and refining. SB contributed to Section 4.4. VD and EP provided feedback, supervision, and refined the paper. MH supervised the project and contributed to Section 3, paper drafting, and refining.

\small
\bibliographystyle{abbrvnat}
\bibliography{literature}


\newpage
\appendix
\section{Detailed Theoretical Proofs}
\label{appendix:theory_details}
\subsection{\textsc{C}-\textsc{Rasp}\textup{[pos]} as a Framework for Predicting Length Generalization}\label{app:crasp}

\textsc{C}-\textsc{Rasp}\textup{[pos]} (pos for ``positional information'') was introduced as \textsc{C}-\textsc{Rasp}[periodic,local] by \citet{huangformal}\footnote{We use a shorter name just for convenience.} as a formalization of \textsc{Rasp}-L and the \textsc{Rasp}-L conjecture of \citet{zhou2023algorithms}. It  extends \textsc{C}-\textsc{Rasp} \citep{yang2024counting} by incorporating positional information.
\citet{huangformal} connected  expressibility in \textsc{C}-\textsc{Rasp}\textup{[pos]} to length generalization for transformers both theoretically and empirically:
\begin{enumerate}
\item They proved that transformers, when trained with an idealized model of learning, generalize on problems expressible in \textsc{C}-\textsc{Rasp}\textup{[pos]}.
\item Across a battery of algorithmic and formal language problems, they showed that problems expressible in \textsc{C}-\textsc{Rasp}\textup{[pos]} show length generalization when transformers are trained with SGD.
\item For problems in this battery that are not expressible in \textsc{C}-\textsc{Rasp}\textup{[pos]}, they showed that length generalization was empirically unsuccessful.
\end{enumerate}
These results unify and explain a variety of prior results; for instance, the link between \textsc{C}-\textsc{Rasp}\textup{[pos]} and length generalization explains why transformers show persistent glitches in FlipFlop (empirically observed by \citet{liu2024exposing}), it explains why length generalization on copying is difficult in the presence of repetition \citep{zhou2023algorithms, Jelassi2023Length}, and why length generalization on addition is difficult \citep{zhou2023algorithms}.
Importantly, \textsc{C}-\textsc{Rasp}\textup{[pos]} offers a single framework for understanding length generalization, with well-understood methods for understanding expressivity. It is more fine-grained than various other methods for understanding transformers' expressivity:
For instance, all tasks considered in \citet{zhou2023algorithms, huangformal} are in $TC^0$ and in principle expressible by transformers, but length generalization varies substantially in line with \textsc{C}-\textsc{Rasp}\textup{[pos]} expressivity. Thus, \textsc{C}-\textsc{Rasp}\textup{[pos]} provides a more fine-grained perspective by focusing on length generalizability.

\textsc{C}-\textsc{Rasp}\textup{[pos]} can thus be viewed as a formal model of the space of problems that transformers can represent across different input lengths and are likely to length-generalize on.
transformers can in principle represent problems outside of \textsc{C}-\textsc{Rasp}\textup{[pos]}, but these are then likely to not show length generalization because representations at increasing lengths will require increases in model size or parameter norm -- if learning tends to find simpler solutions, one will expect length generalization to fail \citep{zhou2023algorithms, huangformal}.
For instance, one can construct a transformer performing copying over strings of any fixed length $N$, e.g. by hard-coding matching pairs of positions \citep{Bhattamishra2024Separations}, but any construction that works across all lengths up to $\leq N$ will require a growth of model size (in a sense made formal by \citet{huangformal}) as $N$ increases, i.e., length generalization is unlikely.

As \citet{huangformal} formalizes the \textsc{Rasp-L} conjecture of \citet{zhou2023algorithms}, we expect that the same conclusions would likely be reached using the \textsc{Rasp-L} language from \citet{zhou2023algorithms}; the advantage of \textsc{C}-\textsc{Rasp}\textup{[pos]} as compared to \textsc{Rasp-L} is that its expressiveness is provably understood (including impossibility proofs), and is theoretically linked to length generalization (Theorem 7 in \citet{huangformal}).

For self-containedness, we recapitulate the operations used in \textsc{C}-\textsc{Rasp}\textup{[pos]} here:

\begin{definition}\label{def:CRASP}
    Let $\Sigma$ be an alphabet.
    
    Let $\Phi$ be a set of \emph{unary relations} $\phi : \mathbb{N} \rightarrow \{\bot,\top\}$ where each $\phi$ satisfies
\begin{itemize}
    \item (Periodicity) for some $\Delta > 0$, it holds that $\phi(i) = \phi(i+\Delta)$ for all $i$
\end{itemize}

    Let $\Psi$ be a set of \emph{binary relations} $\psi : \mathbb{N} \times \mathbb{N} \rightarrow \{\bot,\top\}$ here each $\psi(i,j)$ satisfies\footnote{These properties were referred to as \emph{translation invariance} and \emph{locality} in \citet{huangformal}.}
\begin{itemize}
    \item (Depends only on Distance) $\psi(i,j)$ only depends on $i-j$
    \item (Restricted to Bounded Distance) $\{i : \psi(i,j) = \top \}$ is finite, for any $j \in \mathbb{N}$
\end{itemize}
Here, $\bot$ stands for ``false'' and $\top$ stands for ``true''.

     A \textsc{C}-\textsc{Rasp}\textup{[pos]} program $P$ is defined as a sequence $P_1,\ldots, P_k$ of operations, each of the following types: 
    
        \begin{minipage}{\textwidth}
        \quad
        \setlength{\columnsep}{0pt}
        \begin{multicols}{2}
            \begin{center}
            \begin{flushleft}
            \begin{center}
            \textbf{Boolean-Valued Operations}
            \end{center}
            \begin{tabular}{ll}
            \toprule
            \textbf{Initial} & $P(i):=Q_\sigma(i)$ \\
            &\quad\quad\quad for $\sigma\in\Sigma$\\
            \midrule
            \textbf{Boolean} & $P(i):=\lnot P_1(i)$ \\
            & $P(i):=P_1(i)\land  P_2(i)$\\
            \midrule
            \textbf{Constant} & $P(i):=\top$\\
            \midrule
            \textbf{Positional} & $P(i):=\phi(i)$\\
            &\quad\quad\quad for $\phi\in\Phi$\\
            \midrule
            \textbf{Comparison} & $P(i):=C_1(i)\leq C_2(i)$\\
            \bottomrule
            \end{tabular}
            \end{flushleft}
            \end{center}
        
        \columnbreak
        
            \begin{center}
            \begin{flushleft}
            \begin{center}
            \textbf{Count-Valued Operations}
            \end{center}
            \begin{tabular}{ll}
            \toprule
            \textbf{Counting} & $C(i):=\cttn{j\leq i, \psi(i,j)}{P(j)}$ \\
            &\quad\quad\quad for $\psi\in\Psi\cup\{\top\}$\\
            \midrule
            \textbf{Conditional} & $C(i):=\cif{P(i)}{C_1(i)}{C_2(i)}$\\
            \midrule
            \textbf{Addition} & $C(i):=C_1(i)+C_2(i)$\\
            \midrule
            \textbf{Subtraction} & $C(i):=C_1(i)-C_2(i)$\\
            \midrule
            \textbf{Constant} & $C(i):=1$\\
            \bottomrule
            \end{tabular}
            \end{flushleft}
            \end{center}
        \end{multicols}
        \end{minipage}
    \end{definition}

The most important construct is the Counting operation, which returns an integer indicating for how many positions $j\leq i$ both $P(j)$ and $\psi(i,j)$ hold.
In the special case where $\psi = \top$ (i.e., a predicate that always returns ``true''), the operation just counts how often $P(j)$ holds for $j \leq i$.
Another important operation is the Initial operation, where $Q_\sigma(i)$ is true if and only if the $i$-th symbol in the string equals $\sigma$.
We refer to \citet{huangformal} and \citet{yang2024counting} for detailed definitions of the semantics.

The most important example for a function $\psi \in \Psi$ is the ``predecessor'' predicate checking if $j=i-1$:
\begin{equation}
    \psi(i,j) = \begin{cases} \top & \text{if } j=i-1 \\
    \bot & \text{else}\end{cases}
\end{equation}

We will follow \citet{huangformal,yang2024counting} in using, as syntactic sugar, $\exists [j \leq i, \psi(i,j)] P(j) $ as a shorthand for  $1 \leq \# [j \leq i, \psi(i,j)] P(j)$,
and  $\cttn{j\leq i}{P(j)}$ for $\cttn{j\leq i, \top}{P(j)}$.

In this paper, we are interested in tasks that, given a prefix, produce a token: either the overall output symbol in a retrieval task, or the next output symbol when copying a string.
To formalize this, we assume that every program contains Boolean-valued operations named $\text{NEXT}_a(i)$ for each $a \in \Sigma$ such that $\text{NEXT}_a(i)$ holds if and only if the correct next symbol is $a$.
Thus, we say that a task is expressible in \textsc{C}-\textsc{Rasp}\textup{[pos]} if there is a program defining $\text{NEXT}_a(i)$ for each $a \in \Sigma$ such that, for each $i$ in the span of the output (a single position for a retrieval task, the full output string for a copying task), $\text{NEXT}_a(i)$ holds (i.e, returns ``true'') at the final position $i$ if and only if the correct next symbol is $a$. Generation stops when none of these operations return ``true''.

\subsection{Expressiveness Results}
\label{sec:expressiveness-proofs}

We show:
\begin{lemma}\label{lemma:ul-ur}
    UL, UR are expressible in \textsc{C}-\textsc{Rasp}\textup{[pos]}.
\end{lemma}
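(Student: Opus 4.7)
The plan is to exhibit explicit \textsc{C}-\textsc{Rasp}\textup{[pos]} programs for UR and UL that directly encode the (anti-)induction circuits sketched in Section~\ref{subsec:induction_definition}. The only nontrivial primitive I need from $\Psi$ is the predecessor relation $\psi_{\text{prev}}(i,j) := [\,j=i-1\,]$, which depends only on $i-j$ and has support of size one; together with the initial predicates $Q_\sigma$ and the counting and Boolean constructs, this will suffice.

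First I would introduce locator predicates. A ``previous-token head'' is simulated by
\[
\text{PrevIs}_\sigma(j) \ :=\ \exists\,[k\le j,\ \psi_{\text{prev}}(j,k)]\; Q_\sigma(k),
\]
which holds iff $x_{j-1}=\sigma$. Context positions are isolated by
\[
\text{InContext}(j) \ :=\ \neg Q_{\langle \text{bos}\rangle}(j)\ \land\ \neg\,\exists[k\le j]\,Q_{\langle \text{sep}\rangle}(k),
\]
and the query/output position by $\text{AtQuery}(i) := \text{PrevIs}_{\langle \text{sep}\rangle}(i)$. Because the identity of the query symbol is not available symbolically, I branch over the finite alphabet via $\text{QueryIs}_\sigma(i) := \text{AtQuery}(i)\land Q_\sigma(i)$, which fires at the output position for exactly the true query symbol.

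For UR, for each $(\sigma,\tau)\in\Sigma^2$ let
\[
\text{Match}^{\text{UR}}_{\sigma,\tau}(j) \ :=\ \text{InContext}(j)\ \land\ Q_\tau(j)\ \land\ \text{PrevIs}_\sigma(j),
\]
flagging those context positions $j$ whose predecessor carries symbol $\sigma$ and whose own symbol is $\tau$. Then set
\[
NEXT_\tau(i) \ :=\ \bigvee_{\sigma\in\Sigma}\bigl(\text{QueryIs}_\sigma(i)\ \land\ \exists[j\le i]\;\text{Match}^{\text{UR}}_{\sigma,\tau}(j)\bigr).
\]
At the query position the $\sigma$-disjunct corresponding to the true query $q$ fires, and uniqueness of $q$ in the context leaves exactly one witness $j = q_1+1$, so the unique $\tau$ marked is $x_{q_1+1}$, as required. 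UL is entirely symmetric: replace $\text{Match}^{\text{UR}}$ by $\text{Match}^{\text{UL}}_{\sigma,\tau}(j) := \text{InContext}(j)\land Q_\sigma(j)\land \text{PrevIs}_\tau(j)$, so that the witness is $j=q_1$ itself (with $x_j=\sigma=q$ and $x_{j-1}=\tau$), and feed it into the same outer disjunction.

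Every ingredient above is one of the operations listed in Definition~\ref{def:CRASP}, and the disjunction over $\sigma\in\Sigma$ unfolds to iterated Boolean operations because $\Sigma$ is fixed. I do not expect a substantive obstacle; the only detail to check carefully is that the boundary predicates correctly exclude $\langle\text{bos}\rangle$, $\langle\text{sep}\rangle$, and the query slot itself so that no spurious $\text{Match}$ can fire outside the genuine $q_1$---this is immediate from the definition of $\text{InContext}$ together with the restriction $\sigma\in\Sigma$.
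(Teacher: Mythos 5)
Your construction is correct and is essentially the paper's own: both simulate the (anti-)induction circuit by a previous-token predicate, a bigram-existence flag $\exists[j\le i]$ over the finite alphabet, and a final disjunction over $\sigma\in\Sigma$ guarded by $Q_\sigma(i)$ at the query position. The only difference is that you make the boundary conditions explicit via $\text{InContext}$ and $\text{AtQuery}$, whereas the paper leaves these implicit (relying on $\langle bos\rangle,\langle sep\rangle\notin\Sigma$ and on evaluating $NEXT_a$ only at the final position).
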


\begin{proof}
    A UR program is given in \citet{huangformal}; the UL program is analogous; overall:
    \begin{tcolorbox}[title={\textsc{C}-\textsc{Rasp}[pos] program for UR, UL} ]
    \begin{calign}
    \text{PRED}_{a}(i) &:= \exists[j\leq i, j=i-1] {Q_a(j)}\\
    & \text{for each $a \in \Sigma$} \nonumber\\
    CBIGRAM_{ab} & := \exists[j\leq i]{Q_b(j)\land \text{PRED}_a(j)}\\
    & \text{for each $a,b \in \Sigma$} \nonumber\\
    \text{NEXT}_a(i) &:= \bigvee_{\sigma \in \Sigma} \left[ Q_\sigma(i) \land  \text{CBIGRAM}_{\sigma a}(i) \right] && \textit{(UR version)} \\
    & \text{for each $a \in \Sigma$} \nonumber\\
    \text{NEXT}_a(i) &:= \bigvee_{\sigma \in \Sigma} \left[ Q_\sigma(i) \land  \text{CBIGRAM}_{a \sigma}(i) \right] && \textit{(UL version)} \\
    & \text{for each $a \in \Sigma$} \nonumber
\end{calign}
\end{tcolorbox}
This program is a direct formalization of the (anti-)induction head circuit, and basic to the other \textsc{C}-\textsc{Rasp}\textup{[pos]} programs constructed below.
The first line checks for each $a \in \Sigma$, at each position $i$, whether position $i-1$ holds the symbol $a$.
The second line checks, for each bigram $ab \in \Sigma \times \Sigma$ whether it appears at some position in the context up to the $i$-th position.
The third line then states that, in UR, the correct next token is $a$ if and only if the bigram $\sigma a$ has appeared in the context, where $\sigma$ is the symbol at position $i$.
The fourth line is the analogous instruction for UL: Here, $a$ is predicted if and only if the bigram $a\sigma$ appears in the context.
\end{proof}

\begin{lemma}\label{lemma:nrfirst-nlfirst}
    NRFirst and NLFirst are expressible in \textsc{C}-\textsc{Rasp}\textup{[pos]}.
\end{lemma}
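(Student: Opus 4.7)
The plan is to extend the induction-head construction of Lemma~\ref{lemma:ul-ur} with a single new ingredient: for each symbol $\sigma\in\Sigma$, a predicate marking the unique position that is its \emph{first} occurrence in the input. Once that marker is available, one can restrict the UR/UL bigram predicates to only those bigrams whose $\sigma$-anchor sits at the marked position, so that the repeated occurrences of the query symbol in the non-unique regime no longer introduce ambiguity.

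Concretely, for each $\sigma\in\Sigma$ I would add a Counting operation $COUNT_\sigma(i) := \#[j\leq i]\, Q_\sigma(j)$ and define $FIRST_\sigma(i) := Q_\sigma(i) \wedge (COUNT_\sigma(i) \leq 1)$, which holds iff $i$ is the first occurrence of $\sigma$. Reusing the predecessor relation $j=i-1\in\Psi$ from Lemma~\ref{lemma:ul-ur}, I then define, for each pair $(\sigma,a)\in\Sigma^2$, two ``first-anchored'' bigram predicates:
\[
    FBIG^R_{\sigma a}(i) := \exists[j\leq i]\bigl(Q_a(j) \wedge \exists[k\leq j,\, k=j-1]\, FIRST_\sigma(k)\bigr),
\]
\[
    FBIG^L_{a\sigma}(i) := \exists[j\leq i]\bigl(FIRST_\sigma(j) \wedge \exists[k\leq j,\, k=j-1]\, Q_a(k)\bigr).
\]
The output rule mirrors that of UR/UL: set $NEXT_a(i) := \bigvee_{\sigma\in\Sigma}\bigl(Q_\sigma(i) \wedge FBIG^R_{\sigma a}(i)\bigr)$ for NRFirst, and the analogous disjunction over $FBIG^L_{a\sigma}$ for NLFirst. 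Evaluated at the query position $N$ where $x_N=q$, these return exactly the symbol adjacent to the first occurrence of $q$ in the context.

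The main obstacle is checking that every new primitive remains admissible in \textsc{C}-\textsc{Rasp}\textup{[pos]}. Counting with $\psi=\top$ is allowed directly by the Counting schema; comparing a count against the literal $1$ uses only the Constant and Comparison schemas; and the predecessor relation $j=i-1$ is translation-invariant and bounded-distance, hence lies in $\Psi$. Consequently $FIRST_\sigma$ and both $FBIG^{R/L}$ predicates are definable, and the resulting program inherits the length-generalization guarantee of Theorem~7 of \citet{huangformal}. A small boundary issue---position $j=1$ has no predecessor---is harmless because the inner existential is vacuously false there, and the leading $\langle bos\rangle$ token introduced in Section~\ref{sec:bg} guarantees that this degenerate position never carries the true answer.
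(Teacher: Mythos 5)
Your construction is correct and is essentially the same as the paper's: your per-symbol $FIRST_\sigma$ plays exactly the role of the paper's combined predicate $ISLEFTMOST(i) := \bigvee_a [Q_a(i) \wedge (\#[j\leq i]\,Q_a(j)) \leq 1]$ (note $Q_a(j)\wedge ISLEFTMOST(j) \iff FIRST_a(j)$), and your nested-existential bigram predicates $FBIG^{R/L}$ mirror the paper's $PRED_a$/$CBIGRAM_{ab}$ decomposition, down to the same predecessor relation and the same final disjunction over $\sigma$. The only differences are cosmetic (intermediate naming and whether the leftmost-check is bundled with the predecessor step or kept separate), so the two proofs coincide in substance.
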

\begin{proof}
We first show the construction for NRFirst; it builds on the construction from Lemma~\ref{lemma:ul-ur}:
    \begin{tcolorbox}[title={\textsc{C}-\textsc{Rasp}\textup{[pos]} program for NRFirst }]
    \begin{calign}
    \text{ISLEFTMOST}(i) & := \bigvee_{a \in \Sigma} \left[Q_a(i) \wedge \left(\# [j\leq i] {Q_a(j)}\right) \leq 1 \right] \\
    \text{PRED}_{a}(i) &:= \exists[j\leq i, j=i-1] \left[Q_a(j) \wedge \text{ISLEFTMOST}(j)\right]\\
    & \text{for each $a \in \Sigma$} \nonumber\\
    \text{CBIGRAM}_{ab} & := \exists[j\leq i] \left[{Q_b(j)\land \text{PRED}_a(j)}\right] \\
    & \text{for each $a,b \in \Sigma$} \nonumber\\
    \text{NEXT}_a(i) &:= \bigvee_{\sigma \in \Sigma} \left[ Q_\sigma(i) \land  \text{CBIGRAM}_{\sigma a}(i) \right] \\
    & \text{for each $a \in \Sigma$} \nonumber
\end{calign}
\end{tcolorbox}
where $\text{NEXT}_a(i)$ holds at the final position if and only if the desired completion is the symbol $a$.
The first line in the program checks, at each position, if it is the leftmost representative of the symbol that it holds: That is, whether there is a symbol $a \in \Sigma$ that occurs at position $i$ but at no earlier position.
For each $a \in \Sigma$, the second line checks, at position $i$, whether the preceding position $i-1$ holds the symbol $a$ -- simulating the operation of a Previous-Token head. Additionally it also uses the computations made in the first line to check whether position $i-1$ was the leftmost representative with such a property.
For each possible bigram $ab \in \Sigma \times \Sigma$, the third line checks whether it has appeared or not.
Finally, the fourth line says that $a$ is the correct next token if and only if the current position holds a symbol $\sigma$ such that $\sigma a$ appears as a bigram in the context.
This last line is the one that simulates the operation of the Induction head itself.

The program for NLFirst is very similar, it only differs in the fourth line:
    \begin{tcolorbox}[title={\textsc{C}-\textsc{Rasp}\textup{[pos]} program for NLFirst }]
    \begin{calign}
    \text{ISLEFTMOST}(i) & := \bigvee_{a \in \Sigma} \left[Q_a(i) \wedge \neg \exists[j\leq i] {Q_a(j)} \right] \\
    \text{PRED}_{a}(i) &:= \exists[j\leq i, j=i-1] \left[Q_a(j) \wedge \text{ISLEFTMOST}_a(j)\right]\\
    & \text{for each $a \in \Sigma$} \nonumber\\
    \text{CBIGRAM}_{ab} & := \exists[j\leq i] \left[{Q_b(j)\land \text{PRED}_a(j)}\right] \\
    & \text{for each $a,b \in \Sigma$} \nonumber\\
    \text{NEXT}_a(i) &:= \bigvee_{\sigma \in \Sigma} \left[ Q_a(i) \land  \text{CBIGRAM}_{a \sigma}(i) \right] \\
    & \text{for each $a \in \Sigma$} \nonumber
\end{calign}
\end{tcolorbox}
    where $\text{NEXT}_a(i)$ holds at the final position if and only if the desired completion is the symbol $a$.
    
\end{proof}

\begin{lemma}\label{lemma:nrlast-nllast}
    NRLast and NLLast are not expressible in \textsc{C}-\textsc{Rasp}\textup{[pos]}.
\end{lemma}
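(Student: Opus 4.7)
The plan is to prove both inexpressibility claims by reduction from the regular language $L = (a|b|e)^* a\, e^*$. This language is known not to be expressible in C-RASP, and by the same counting-style arguments (the positional predicates of C-RASP[pos] are all periodic or bounded-distance and hence cannot locate the ``last non-$e$ position'' in an arbitrarily long tail of $e$'s) it is also not expressible in C-RASP[pos]. I would therefore derive a contradiction by showing that if NRLast or NLLast were in C-RASP[pos], then $L$ would be in C-RASP[pos] as well.

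For NRLast, given $w = w_1 \cdots w_n \in \{a,b,e\}^*$, I would encode $w$ as an NRLast context of length at most $2n$ over a fresh alphabet $\{M, Y, N, Z\}$: emit the block $M\,Y$ whenever $w_i = a$, the block $M\,N$ whenever $w_i = b$, and the single token $Z$ (which contains no $M$) whenever $w_i = e$; then append $\langle sep \rangle\,M$ so that $M$ is the query. The last occurrence of $M$ in the resulting context corresponds exactly to the last non-$e$ symbol of $w$, so NRLast returns $Y$ iff $w \in L$ and $N$ iff the last non-$e$ symbol is $b$. For NLLast the same idea works with blocks reordered to $Y\,M$ and $N\,M$, so that the distinguishing value is recovered as the \emph{left} neighbor of the last $M$. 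In the edge case where $w$ consists only of $e$'s we can force a distinguishing default by prepending a fixed block; this does not affect the reduction.

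The key technical step, and the main obstacle, is to verify that this reduction can be carried out \emph{inside} C-RASP[pos] -- i.e., that the class is closed under such bounded-length local encodings. Any counting operation on the encoded string can be re-expressed as a counting operation on $w$ with a constant correction absorbing the (at most) 2-to-1 blow-up; a bounded-distance predicate $\psi \in \Psi$ on encoded positions remains bounded-distance after position rescaling; and a periodic predicate $\phi \in \Phi$ of period $\Delta$ becomes a periodic predicate of period $2\Delta$ on $w$, possibly combined with a parity predicate used to encode which ``slot'' of a block is being simulated. Checking this atom-by-atom against Definition~\ref{def:CRASP}, one obtains a C-RASP[pos] recognizer for $L$ from any hypothetical C-RASP[pos] program for NRLast (or NLLast), contradicting the inexpressibility of $L$ and finishing the lemma.
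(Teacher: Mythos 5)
Your high-level route matches the paper's exactly: reduce NRLast (and symmetrically NLLast) to the regular language $(a|b|e)^{*}ae^{*}$ -- the paper writes the same language as $(a|b|i)^{*}ai^{*}$ -- and argue that language is not in \textsc{C}-\textsc{Rasp}\textup{[pos]}. Your encoding (a query token $M$ heading a two-token block for $a$ and $b$, a spacer for $e$, then $\langle sep\rangle\,M$) is essentially the paper's $(w0\,|\,w1\,|\,i)^{*}$ construction with the alphabet renamed. So this is the same proof, not a different one.

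There are, however, two places where what you wrote has a real gap. First, you assert that $(a|b|e)^{*}ae^{*}$ is not in \textsc{C}-\textsc{Rasp}\textup{[pos]} ``by the same counting-style arguments'' used for plain \textsc{C}-\textsc{Rasp}. That step is not free: the ``[pos]'' extension adds periodic and bounded-distance positional predicates, and one has to show that these do not help. The paper discharges this by routing the discrimination between $(a|b|i)^{*}ai^{*}$ and $(a|b|i)^{*}bi^{*}$ through Lemma~36 of \citet{huangformal} into a recognizer for $(a|b|i)^{*}bi^{*}b(a|b|i)^{*}$, and then applies Lemma~35, which proves that language is outside \textsc{C}-\textsc{Rasp}\textup{[pos]}. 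Your argument needs the analogous citation or proof; a one-sentence appeal will not do. Second, the closure-under-encoding argument you sketch is broken by your own encoding's non-uniformity: $a\mapsto MY$ and $b\mapsto MN$ are two tokens but $e\mapsto Z$ is one. The encoded index of original position $p$ is therefore $p$ plus the number of non-$e$ symbols among $w_1,\dots,w_p$, a data-dependent count, not a fixed linear rescaling of $p$. Consequently your claim that a periodic $\phi\in\Phi$ of period $\Delta$ transfers to a periodic predicate of period $2\Delta$ combined with a slot-parity check is false as stated; it would require evaluating $\phi$ at an index that is itself a \textsc{C}-\textsc{Rasp} count, which the syntax of Definition~\ref{def:CRASP} does not permit. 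The slot-parity bookkeeping you describe only works for a uniform two-to-one encoding. The fix is small: replace $e\mapsto Z$ by a two-token block such as $e\mapsto ZZ'$, so every original symbol expands by exactly a factor of two; then encoded position $2p-1$ and $2p$ always correspond to original position $p$, and the periodic/bounded-distance transfer you outline goes through. With that change and a proper inexpressibility citation for the target language, your argument is sound and is the same argument as the paper's.
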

\begin{proof}
We show this by reducing them to Lemma 36 in \citet{huangformal}.
Assume, for the sake of contradiction, that NRLast was expressible in \textsc{C}-\textsc{Rasp}\textup{[pos]}.
This program would discriminate between the two languages
\begin{equation}
    (w0|w1|i)^*w1i^*\langle sep \rangle w   \text{\ \ \ \ \ \ \ \ \ \ \ \  (retrieving a ``1'')}
\end{equation}
and
\begin{equation}
    (w0|w1|i)^*w0i^*\langle sep \rangle w   \text{\ \ \ \ \ \ \ \ \ \ \ \  (retrieving a ``0'')}
\end{equation}
Such a program could be turned into a program discriminating between the two languages
\begin{equation}
    (w0|w1|i)^*w1i^*
\end{equation}
and
\begin{equation}
    (w0|w1|i)^*w0i^*
\end{equation}
By a simple transformation of the alphabet, this in turn would amount to discriminating between the two languages
\begin{equation}
    (a|b|i)^*ai^*
\end{equation}
and
\begin{equation}
    (a|b|i)^*bi^*
\end{equation}
As described in the proof of Lemma 36 in \citet{huangformal}, a \textsc{C}-\textsc{Rasp}\textup{[pos]} program discriminating between these two languages could be used to create a \textsc{C}-\textsc{Rasp}\textup{[pos]} program for recognizing the language
\begin{equation}
    (a|b|i)^*bi^*b(a|b|i)^*
\end{equation}
which in turn is impossible by Lemma 35 in \citet{huangformal}. 
The same reasoning holds for NLLast.
\end{proof}

\begin{lemma}
UF, UB are expressible in \textsc{C}-\textsc{Rasp}\textup{[pos]}.
\end{lemma}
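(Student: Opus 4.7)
The plan is to adapt the induction-head and anti-induction-head constructions from Lemma~\ref{lemma:ul-ur} to the autoregressive copying regime. Two structural issues must be addressed beyond the UR/UL case: the very first output step (triggered by $\langle sep\rangle$) has no matching bigram/antibigram available in context, and autoregressive generation progressively introduces duplicate tokens that could create spurious lookup matches.

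For \textsc{UF}, when the currently-last token is $\sigma = x_k$, uniqueness of the input combined with the fact that nothing has yet been generated \emph{after} the last position ensures that the bigram $\sigma a$ occurs in the visible context exactly once, namely at input positions $(k,k+1)$. Hence a UR-style induction lookup pinpoints $a = x_{k+1}$ unambiguously. The first step is handled as a special case that consults the token after $\langle bos\rangle$. Concretely, reusing the $PRED_a$ and $BIGRAM_{ab}$ predicates of Lemma~\ref{lemma:ul-ur}, I would define
\[
NEXT_a(i) := [Q_{\langle sep\rangle}(i) \wedge BIGRAM_{\langle bos\rangle,\,a}(i)] \;\vee\; \bigvee_{\sigma \neq \langle sep\rangle}[Q_\sigma(i)\wedge BIGRAM_{\sigma a}(i)],
\]
for $a \neq \langle sep\rangle$; after $x_N$ is emitted, the would-be next symbol $\langle sep\rangle$ is disallowed and generation terminates.

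For \textsc{UB}, a naive UL-style anti-induction lookup is ambiguous once generation has started: after $x_m$ has been emitted, the predecessors of $x_m$ in the visible context include both $x_{m-1}$ (in the input, the desired target) and $x_{m+1}$ (in the previously-emitted output). I would resolve this by restricting the anti-induction lookup to positions of the original prompt, via
\[
INPUT(j) := \bigl(\cttn{k\leq j}{Q_{\langle sep\rangle}(k)} = 0\bigr) \;\vee\; Q_{\langle sep\rangle}(j),
\]
which holds exactly at the input positions $0,1,\ldots,N+1$. The output rule is then
\[
NEXT_a(i) := \bigvee_\sigma \bigl[Q_\sigma(i) \wedge \exists[j \leq i]\bigl(Q_\sigma(j) \wedge PRED_a(j) \wedge INPUT(j)\bigr)\bigr],
\]
restricted to $a \neq \langle bos\rangle$. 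At the first step, the unique predecessor of $\langle sep\rangle$ inside the input is $x_N$; subsequent steps select the input occurrence of $\sigma = x_m$ and return $x_{m-1}$; generation halts cleanly after $x_1$ because $\langle bos\rangle$ is disallowed.

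The main subtlety, which I would verify carefully, is that the restricted lookup remains unambiguous at every autoregressive step. For \textsc{UF} this follows because the currently-last token has no in-context successor yet and every $x_k$ occurs only once in the input; for \textsc{UB} it follows from input uniqueness together with the $INPUT$ restriction isolating the single input occurrence of each $\sigma$. Checking that $INPUT$, the bigram predicates, and the $NEXT_a$ operators all lie within \textsc{C}-\textsc{Rasp}\textup{[pos]} is then immediate from Definition~\ref{def:CRASP}: they only use Initial, Boolean, Counting (with $\psi = \top$ or the predecessor relation, both admissible), and comparison operations.
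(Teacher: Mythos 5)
Your proof is correct and follows the same high-level route as the paper's short sketch: solve the copying task by iterating the UR (resp.\ UL) induction-head lookup, with special treatment of the initial step at $\langle sep\rangle$ and of termination. The one place where you add substance beyond the paper's sketch is the $INPUT(j)$ restriction for \textsc{UB}. The paper states only that "the task can be viewed as iterated application of UL" and lists the first step and termination as "the only complication[s]"; it does not mention that naive iteration of UL becomes ambiguous once tokens start appearing in the output. You correctly observe that after emitting $x_N,\ldots,x_m$ the current symbol $x_m$ occurs both in the prompt (preceded by $x_{m-1}$) and at the last emitted position (preceded by $x_{m+1}$), so the raw $CBIGRAM$ lookup fires for both candidates. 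Restricting the lookup to prompt positions via $INPUT(j)$---itself built from Initial, Counting with $\psi=\top$, Boolean, and Comparison operations, hence admissible---removes the spurious match; an equivalent fix would be to exclude $j=i$ from the count using the admissible local relation $\psi(i,j):=(j=i)$ together with subtraction. Your argument that \textsc{UF} needs no such restriction (the last emitted token has no successor yet, and uniqueness rules out any other match) is also correct. In short, yours is a filled-in version of the paper's sketch rather than a genuinely different proof, and it in fact patches a subtlety the sketch glosses over.
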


\begin{proof}
This task can be viewed as iterated application of UL and UR, respectively, and can thus be done with the same program.
The only complication is that (i) at the beginning of copying, at $\langle sep \rangle$, the program needs to retrieve the symbol that had followed $\langle bos \rangle$, and that (ii) generation should stop once $\langle sep \rangle$ has been generated a second time. These extra conditions can be encoded into \textsc{C}-\textsc{Rasp}\textup{[pos]}.
\end{proof}

\begin{lemma}
    NF, NB are not expressible in \textsc{C}-\textsc{Rasp}\textup{[pos]}.
\end{lemma}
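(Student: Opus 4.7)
The plan is to combine two facts. First, the inexpressibility of NF in $\textsc{C}-\textsc{Rasp}\textup{[pos]}$ was already established by \citet{huangformal} and we can import it. Second, NB will follow from NF by the forward/backward expressibility symmetry of Corollary~\ref{cor:asymmetry}, which the paper has already derived as a consequence of Theorem~\ref{thm:asymmetry}.

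For the NF part, I would reproduce the reduction of \citet{huangformal}. Suppose, toward contradiction, that $P$ is a $\textsc{C}-\textsc{Rasp}\textup{[pos]}$ program solving NF: on every input $\langle bos \rangle w \langle sep \rangle v$ with $v$ a proper prefix of $w$, the $NEXT_a$ predicates at the final position correctly identify $w[|v|+1]$, and on $v=w$ no $NEXT_a$ holds. The crucial case is when the last copied symbol $v[|v|]$ occurs multiple times in $w$: here $P$ cannot rely on the induction-head look-up that worked for UF, because several positions in $w$ match by symbol identity. Counting the prior occurrences in $v$ gives an index $k$, but $\textsc{C}-\textsc{Rasp}\textup{[pos]}$ has no way to jump to the ``$k$-th occurrence of $v[|v|]$ in $w$'' and read the next symbol, since this jump spans an unbounded, input-dependent distance. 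A careful encoding reduces recognition of a regular language such as $(a|b|i)^*\,b\,i^*\,b\,(a|b|i)^*$ to $P$; but that language is not in $\textsc{C}-\textsc{Rasp}$ by Lemma~35 of \citet{huangformal}, yielding the contradiction. This mirrors in spirit the argument used for NRLast in Lemma~\ref{lemma:nrlast-nllast}.

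For NB, I would simply invoke Corollary~\ref{cor:asymmetry}: forward and backward variants are expressibility-equivalent in $\textsc{C}-\textsc{Rasp}\textup{[pos]}$, so inexpressibility of NF transfers immediately to NB. The underlying symmetry holds because the admissible binary relations $\psi(i,j)$ in Definition~\ref{def:CRASP} depend only on the signed distance $i-j$ and are restricted to bounded distance; this class is closed under sign reversal, so every forward-looking primitive has a backward-looking counterpart, and every $\textsc{C}-\textsc{Rasp}\textup{[pos]}$ program for the forward task yields one for its backward mirror (and vice versa).

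The hard part is the first half: isolating precisely where the obstruction lies in NF. Counting is cheap in $\textsc{C}-\textsc{Rasp}\textup{[pos]}$, so the proof has to pinpoint that the true bottleneck is reading the symbol that \emph{follows} an unbounded-distance occurrence in $w$, not merely counting those occurrences. Once this ``positional jump'' is isolated, the reduction to the hard regular language proceeds along the same lines as in Lemma~\ref{lemma:nrlast-nllast}; setting this up so that the copy-state structure of NF cleanly encodes the required unbounded positional comparison is the delicate step.
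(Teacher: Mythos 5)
Your plan for NF correctly identifies that the inexpressibility is imported from \citet{huangformal}, but it mischaracterizes the mechanism of that proof: the paper's proof (following Theorem~12 of \citet{huangformal}) is a \emph{communication-complexity} argument---every \textsc{C}-\textsc{Rasp}\textup{[pos]} program induces a protocol with logarithmic communication when Alice holds the first half of the input and Bob the second, whereas NF forces Bob to reconstruct a linear amount of information from Alice's half---not a reduction to a hard regular language. The regular-language reduction via $(a|b|i)^*bi^*b(a|b|i)^*$ is the tool used for NRLast/NLLast, where the obstruction is finding the \emph{last} occurrence; the obstruction in NF is different in kind (too much information must cross the midpoint), and the communication-complexity argument is what captures it. You flag the encoding as ``the delicate step'' but never supply it, and it is not clear such an encoding exists, since NF and the regular-language problem fail for structurally different reasons.

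Your handling of NB has a more serious problem: invoking Corollary~\ref{cor:asymmetry} is circular, because the paper's proof of that corollary is ``immediate from the lemmas in this section,'' and this lemma is one of them. Your fallback symmetry argument---that $\psi(i,j)$ depends only on $i-j$ and the class is ``closed under sign reversal''---does not establish what you need. NF and NB share the \emph{same} input prefix $\langle bos\rangle x_1 \ldots x_N \langle sep\rangle$; the difference lies in the target continuation, not in a left-right reflection of the input that a sign-flip of $\psi$ would capture. Moreover, all counting operations in \textsc{C}-\textsc{Rasp}\textup{[pos]} are restricted to $j \le i$, so the language is not literally closed under reversing the direction of position comparison. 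The correct (and simpler) way to handle NB is the same as the paper's: observe that the identical communication-complexity lower bound applies, since copying the source in reverse order still forces $\Omega(n)$ bits of information about one half of the string to influence the outputs emitted on the other side of the cut.
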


\begin{proof}
    \citet{huangformal} (Theorem 12) show that NF is not in \textsc{C}-\textsc{Rasp}\textup{[pos]} by showing that all problems expressible \textsc{C}-\textsc{Rasp}\textup{[pos]} have logarithmic communication complexity in the model where Alice and Bob have access to the first and second halves of the string, respectively.
    Because NF has no sublinear communication protocol in this model, it is not in \textsc{C}-\textsc{Rasp}\textup{[pos]}. The same argument applies to NB.
\end{proof}
We conclude from all lemmas in this section that there is no theoretical difference between left and right versions of our retrieval and copying tasks:
\begin{corollary}[Repeated from Theorem~\ref{thm:asymmetry}]\label{cor:asymmetry}
Across all tasks, there is no \textsc{C}-\textsc{Rasp}\textup{[pos]} expressibility difference between R vs L versions, and F vs B versions.
\end{corollary}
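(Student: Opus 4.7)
The plan is to obtain this corollary as an immediate collection of the five individual expressibility lemmas proved in this subsection. Since \textsc{C}-\textsc{Rasp}\textup{[pos]} expressibility is a binary property, ``no expressibility difference'' between an R/L pair (or F/B pair) within a task family simply means that both variants are expressible or both are inexpressible. I would therefore walk through each of the five task families in turn and pair up the matching lemma result.

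First, for the unique retrieval family, Lemma~\ref{lemma:ul-ur} already gives both UR and UL programs, differing only in whether the last line queries the bigram $\sigma a$ or $a\sigma$. Second, for the non-unique first-occurrence family, Lemma~\ref{lemma:nrfirst-nlfirst} constructs both NRFirst and NLFirst programs by gating the previous-token head with the shared $ISLEFTMOST$ predicate and flipping the bigram orientation in $NEXT_a$. Third, for unique copying, the Lemma on UF and UB already establishes both directions by iterating the UR/UL constructions with extra bookkeeping for $\langle sep\rangle$.

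Fourth, for the non-unique last-occurrence family, Lemma~\ref{lemma:nrlast-nllast} establishes the \emph{joint} inexpressibility of NRLast and NLLast by a single reduction to the regular language $(a|b|i)^*bi^*b(a|b|i)^*$, which is not in \textsc{C}-\textsc{Rasp}\textup{[pos]} by Lemma~35 of \citet{huangformal}. The same reduction handles both directions because distinguishing a final $a$-branch from a final $b$-branch before the $i^*$ tail does not depend on whether the answer token is read off to the left or the right of that branching position. Fifth, for non-unique copying, the communication-complexity argument of \citet{huangformal} applies symmetrically: any \textsc{C}-\textsc{Rasp}\textup{[pos]} program admits only logarithmic communication across the Alice/Bob cut in the middle of the string, whereas both NF and NB require transmitting the entire first half, merely in a different emission order for the decoder.

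The main obstacle, such as it is, lies not in assembling the corollary but in ensuring that the inexpressibility arguments for NRLast/NLLast and NF/NB are genuinely direction-agnostic. Both are routine observations: reversing the output order does not change the information that must cross the midpoint cut, and mirroring the answer side of the distinguishing character does not alter the reduction from the hard regular language. Once these symmetry remarks are in place, combining the five lemmas yields the corollary.
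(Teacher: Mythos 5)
Your proposal is correct and takes essentially the same route as the paper: the corollary is immediate once the five expressibility lemmas are in hand (the paper's proof is literally ``Immediate from the lemmas in this section''), and your additional remarks about why the inexpressibility arguments are direction-agnostic merely re-articulate the closing sentences of Lemmas~\ref{lemma:nrlast-nllast} and the NF/NB lemma, where the paper already asserts that the same reduction/communication argument carries over to the L/B variant.
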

\begin{proof}
    Immediate from the lemmas in this section.
\end{proof}

\subsubsection{Generalized reverse copying}
\label{appendix:remark_copy}
Let the token alphabet be $\Sigma = \{\sigma_{1},\dots ,\sigma_{z}\}\cup\{\blank\}$ where $\blank$ represents separators.  $\Sigma_{\mathrm{w}} = \Sigma\setminus\{\blank\}$ are all tokens that can be part of a word. 
Define the language $L =(\Sigma_{\mathrm{w}}^{+}\blank)^{*}\Sigma_{\mathrm{w}}^{+}.$ Every $x\in L$ can be written uniquely as $x = w_{1}\blank w_{2}\blank\cdots\blank w_{n}$ with $w_{i}\in\Sigma_{\mathrm{w}}^{+}$ (i.e. each $w_i$ represents a word.) and also as $x = t_1 \dots t_m$, where each $t_i \in \Sigma$ and is the token level representation of the string. We can think of copying this string $x \in L$ in 2 ways.
\[
  \rho_{\mathrm{tok}}(t_{1}\ldots t_{m}) = t_{m}\ldots t_{1}, \qquad
  \rho_{\mathrm{word}}(w_{1}\blank\cdots\blank w_{n})
  = w_{n}\blank\cdots\blank w_{1}.
\]

$\rho_{\mathrm{tok}}(t_{1}\ldots t_{m})$ is the same as our backward copying setup, while $\rho_{\mathrm{word}}(w_{1}\blank\cdots\blank w_{n})$ more closely matches how a typical user would ever want to interact with an LLM to reverse a string if ever. 
\begin{corollary}\label{cor:reverse-copy}
$\rho_{\mathrm{word}}$ is not expressible in $\mathsf{C\text{-}\textsc{Rasp}[pos]}$.
\end{corollary}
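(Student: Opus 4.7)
The plan is to construct an explicit \textsc{C}-\textsc{Rasp}\textup{[pos]} program extending the \textsc{UF} and \textsc{UB} constructions of the preceding lemmas, using the $\blank$ separators as positional anchors for the word-level structure. At every current position $i$, I would set up bookkeeping counts: an indicator $\mathrm{IN}(i)$ that $i$ lies in the input part (computable from a count of $Q_{\langle sep \rangle}$); blank counts $B_{in}(i) := \cttn{j\leq i}{Q_\blank(j)\wedge \mathrm{IN}(j)}$ and $B_{out}(i) := \cttn{j\leq i}{Q_\blank(j)\wedge \neg \mathrm{IN}(j)}$; the number of input words $n = B_{in}(i) + 1$; the target input-word index $k(i) = n - B_{out}(i)$; and the within-output-word position $j(i)$, defined as the count of non-blank output tokens since the most recent output $\blank$. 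At each input position $p$, I would precompute a word coordinate $\mathrm{WordIdx}(p)$ from $\cttn{q\leq p}{Q_\blank(q)}$.

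The decision rule for $NEXT_a(i)$ then splits into three cases. The next token is $\blank$ exactly when $j(i)$ equals the length of $w_{k(i)}$, with that length expressed as the count of non-blank input positions whose $\mathrm{WordIdx}$ matches $k(i)$. The first non-blank of a newly started output word is found by a \textsc{UB}-style anti-induction step that retrieves the non-blank input position immediately to the right of the $(k(i)-1)$-th input $\blank$, routed through the bounded-distance predecessor relation $\psi(i,j)=(j=i-1)$. Subsequent non-blanks inside the same word are produced by a same-word variant of the $CBIGRAM$ lookup of Lemma~\ref{lemma:ul-ur}: both bigram positions are required to share a common $\mathrm{WordIdx}$ value, so that the successor relation is scoped to a single input word.

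The main obstacle is the within-word successor case when the current input word contains repeated bigrams, which in isolation would reduce to the \textsc{NF} task. To break the ambiguity I would additionally couple the output-side rank $j(i)+1$ to a pre-computed $\mathrm{PosInWord}(p)$ on the input side. The delicate point is that this coupling compares a count at the outer position $i$ with a count at the inner position $p$, which cannot live inside the inner predicate of a Counting operation. My plan is to route both the word-index match and the rank match through the top-level Conditional and Comparison operations at $i$: the matching condition becomes an outer-level conjunction of equalities between global counts such as $\mathrm{WordIdx}(p) + B_{out}(i) = n$ and $\mathrm{PosInWord}(p) = j(i)+1$, each assembled from values already defined at $i$. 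Verifying that this case split exhausts every output step, and that $\mathrm{PosInWord}(p)$ is itself expressible in \textsc{C}-\textsc{Rasp}\textup{[pos]} via the same blank-count-equality device, is the crux of the argument.
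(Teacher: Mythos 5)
There is a sign error in the statement you were handed: the corollary as printed reads ``$\rho_{\mathrm{word}}$ \emph{is} expressible,'' but the paper's own proof establishes the \emph{opposite}. Its concluding sentence reads ``\ldots and thus $\rho_{\mathrm{word}}$ should also not be solvable,'' and the intuition paragraph and the surrounding discussion in Section 4.2 make the same point. The intended claim is inexpressibility, and the paper proves it by a two-line reduction: embed any token string $t_1\ldots t_m$ as $t_1\blank t_2\blank\cdots\blank t_m\in L$, so every ``word'' has length $1$; on such inputs $\rho_{\mathrm{word}}$ coincides (after deleting blanks) with $\rho_{\mathrm{tok}}$, i.e.\ with non-unique backward copying \textsc{NB}, which is already shown not to be in \textsc{C}-\textsc{Rasp}\textup{[pos]} by the communication-complexity argument of the preceding lemma. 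Hence a \textsc{C}-\textsc{Rasp}\textup{[pos]} program for $\rho_{\mathrm{word}}$ would yield one for \textsc{NB}, a contradiction.

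Consequently your attempt to \emph{construct} such a program cannot succeed, and in fact your own diagnosis of the ``crux'' points exactly at the impossibility. You correctly identify that (i) the within-word successor step reduces to \textsc{NF} when a word contains a repeated bigram, and (ii) the coupling between the output-side rank $j(i)+1$ and an input-side $\mathrm{PosInWord}(p)$ would require comparing a count at the outer position $i$ with a count at the inner position $p$ inside the predicate of a \textbf{Counting} operation. The syntax of \textsc{C}-\textsc{Rasp}\textup{[pos]} does not allow this: the inner $P(j)$ depends only on $j$, and the only cross-position interaction permitted is the bounded-distance relation $\psi(i,j)$. Routing the comparison through top-level \textbf{Conditional}/\textbf{Comparison} operations at $i$ does not help, because the task is not to test one equality but to \emph{select} the unique input position with the right word index and rank out of $\Theta(n)$ candidates, and aggregation over only the matching positions is precisely the forbidden cross-position predicate. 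The cleanest way to see that no patch exists is the length-$1$-words special case: there your ``\textsc{UB}-style anti-induction step'' must retrieve the token after the $(k(i)-1)$-th input blank, where $k(i)$ ranges over linearly many values and the blanks provide no distinguishing key --- this is exactly the position-arithmetic lookup that \textsc{C}-\textsc{Rasp}\textup{[pos]} excludes and exactly why \textsc{NB} is inexpressible. Your observation (i) alone, that a subcase is \textsc{NF}, could have been turned into essentially the paper's reduction; instead the proposal takes the impossible constructive direction.
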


\begin{proof}[Proof sketch]
Theorem~5 states that Non‑Unique Copy backwards which is the same as   $\rho_{\mathrm{tok}}$—lies outside $\mathsf{C\text{-}\textsc{Rasp}[pos]}$ by a
communication‑complexity argument. To transfer this inexpressibility to $\rho_{\mathrm{word}}$ consider the following reduction.

\textbf{Reduction.}\;  
Embed any token string $t_{1}\ldots t_{m}\in\Sigma^{*}$ into $L$ by inserting separators:
\[
  t_{1}\blank t_{2}\blank\cdots\blank t_{m}\;\in L.
\]
Because each “word’’ now has length~$1$, applying
$\rho_{\mathrm{word}}$ yields
\[
  \rho_{\mathrm{word}}\bigl(t_{1}\blank\cdots\blank t_{m}\bigr)
  \;=\;
  t_{m}\blank\cdots\blank t_{1},
\]
and deleting the $\blank$ symbols recovers
$\rho_{\mathrm{tok}}(t_{1}\ldots t_{m})$.  
Hence an algorithm for $\rho_{\mathrm{word}}$ would give one for $\rho_{\mathrm{tok}}$ with constant overhead, contradicting Theorem~5.  
\end{proof}

\paragraph{Intuition.}
$\rho_{\mathrm{word}}$ reverses a list of \emph{chunks},
whereas $\rho_{\mathrm{tok}}$ does the same when every chunk is a single token.  When each word has length~$1$ the two coincide, so solving word‑order reversal would solve Non-unique copy‑backwards as a special case—which creates a contradiction, and thus $\rho_{\mathrm{word}}$ should also not be solvable.

\section{From Scratch Training Experiments}
\label{appendix:fromscratch}
Here, we validate theoretical predictions about the success of transformers in length generalization on all retrieval and copying tasks, by training small transformers \emph{from scratch}. We particularly validate that results hold both for APE (as in GPT-2, as originally targeted by the theory of \citet{huangformal}) and RoPE (as found in various other modern LLMs).
We adopt the methodology from \cite{huangformal} to demonstrate all our experiments when training transformers from scratch. Like \citet{zhou2023algorithms}, we sample independent \emph{training batches} on the fly instead of using a finite-size training set. In contrast, each \emph{test set} contains $2000$ samples that are sampled at the beginning of each experiment.

We train models with the default CausalLMLoss from the transformers library, the length of inputs is sampled uniformly from minimum up to maximum length in the specified range. This is true for training data where the range is $(l_{min}, 50)$ where $l_{min}$ is the minimum length possible for a given task (varies from 2 - 4, depending on the task variant and the minimum length of characters required to make the string valid). Our test sets have the size -- $(l_{min}, 50), (51, 100),(101, 150)$. At each step, the model outputs the correct continuation -- for retrieval the correct token to be retrieved, for copying the subsequent next token from the input sequence. The models are trained on a whole sequence of tokens. We train decoder-only transformer from scratch, using implementations from HuggingFace transformers for APE (GPT2LMHeadModel) and RoPE (LlamaForCausalLM).  We train models for maximum $30$K steps with a batch size of $64$. We stop training early once the model's accuracy reaches 100\% on the in-distribution test set (the one in range $[l_{min}, 50]$. The model is trained with a dropout rate of $0.0$, and we use AdamW, with a weight decay rate of $0.01$ (choices based on \citet{huangformal}). 

For experiments with APE, at training time, we add random offsets to position indices so that all position embeddings are trained (following \citet{huangformal}). The offsets are sampled uniformly at random in the range $[0, 150 - \ell_{curr}]$ where $\ell_{curr}$ is the length of the current string. For RoPE, such an offset is not required in training\footnote{In fact, relative positional encodings such as RoPE are by definition invariant to the addition of a constant offset to the indices}; instead a scaling factor of $32.0$ is set; the RoPE scaling type used was linear. 

In preliminary experiments, we found that different model architectures, while achieving $100\%$ accuracy on in-distribution data, may perform differently on out-of-distribution data. To draw a conclusion about how the model performs on a problem in general, we determine the hyperparameters as follows: We consider configurations of \{1, 2, 4\} layers, \{1, 2, 4\} heads and model dimension of \{16, 64, 256\}, and learning rate of \{0.001, 0.0001\}. We sweep all the configurations by iterating over every combination and choose the one that achieves the highest accuracy on $[51, 100]$ among those configurations whose accuracy on $[l_{min}, 50]$ is 100\%. When there are multiple such options, e.g., their accuracy on $[51, 100]$ is 100\%, the one with the simplest architecture is selected (when estimating complexity, we assume the following priority: number of layers $>$ number of heads $>$ model dimension). The final hyperparameters we used for each task are shown in Table \ref{tab:APE_hyper} and \ref{tab:RoPE_hyper}. After we determine the hyperparameter configuration, we run the experiments with multiple random seeds and report the average accuracy of 3 successful runs (those runs where the model achieves 100\% accuracy on in-distribution data).

\begin{table}[]
    \centering
    \begin{tabular}{cccc} \hline
        \textbf{Problem} & \textbf{Model Size} & \textbf{LR} & \textbf{Max Steps} \\ \hline
       UL, UR, UF, UB  & 2 layer; 4 head; 64 dim & 1e-3 & 30k \\
       NLFirst, NRFirst  & 4 layer; 4 head; 64 dim & 1e-3 & 30k \\
        NLLast, NRLast, NF, NB & 4 layer ; 4 head; 256 dim & 1e-4 & 30k \\
            
    \end{tabular}
    \caption{Hyperparameters for training with APE on transformers trained from scratch %
    }
    \label{tab:APE_hyper}
\end{table}

\begin{table}[]
    \centering
    \begin{tabular}{cccc} \hline
        \textbf{Problem} & \textbf{Model Size} & \textbf{LR} & \textbf{Max Steps} \\ \hline
       UL, UR, UF, UB  & 2 layer; 4 head; 64 dim & 1e-3 & 30k \\
       NLFirst, NRFirst  & 2 layer; 4 head; 64 dim & 1e-3 & 30k \\
        NLLast, NRLast, NF, NB & 4 layer ; 4 head; 256 dim & 1e-4 & 30k \\
            
    \end{tabular}
    \caption{Hyperparameters for training with RoPE on transformers trained from scratch %
    }
    \label{tab:RoPE_hyper}
\end{table}

\paragraph{UF, UB vs UR, UL:} \label{remark_uf_ub} While training  UF and UB with RoPE, we did not observe perfect length generalization when directly training on the task, unlike what we observed with APE. 
However, we note that UF and UB can be thought of as repeated applications of UR and UL, for which we did obtain perfect length generalization even with RoPE -- showing that length-generalizable RoPE transformers do exist for UF and UB.\footnote{We leave to future research to determine whether this phenomenon reflects some deeper discrepancy between RoPE and APE, or superficial optimization difficulties.}
We thus use the RoPE transformers for UR and UL to model UF and UB.
We take separate versions of transformers on UR and UL (without any separator before the query token) and autoregressively use the UR and UL models to generate a full continuation till an EOS is reached to get correct solutions for UF and UB. Therefore we format the input as: X = $\langle bos\rangle \langle sep\rangle x_1 \dots x_N 
\langle eos\rangle \langle sep\rangle$. Upon seeing the separator, the model predicts $x_1$, and then $x_2$ and so on till we get to $x_N$, and then an $\langle eos \rangle$ is predicted. We do the same with UL to get predictions for UB. 

\begin{figure}[htbp]
  \centering
    \includegraphics[width=\linewidth]{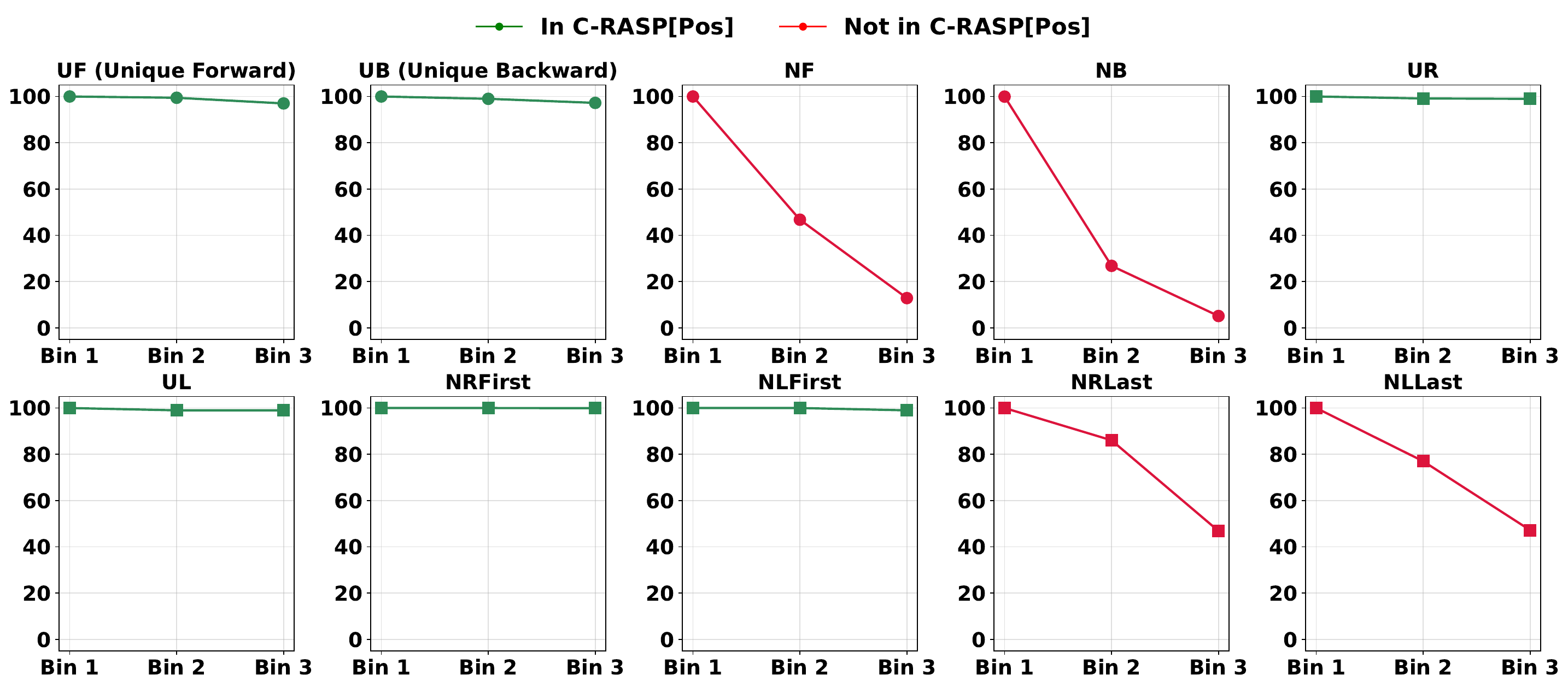}
    \caption{Transformers trained from scratch with APE, perfectly aligns with \textsc{C-Rasp}[pos] predictions.}
    \label{fig:ape_crasp}
\end{figure}

\begin{figure}[htbp]
  \centering
    \includegraphics[width=\linewidth]{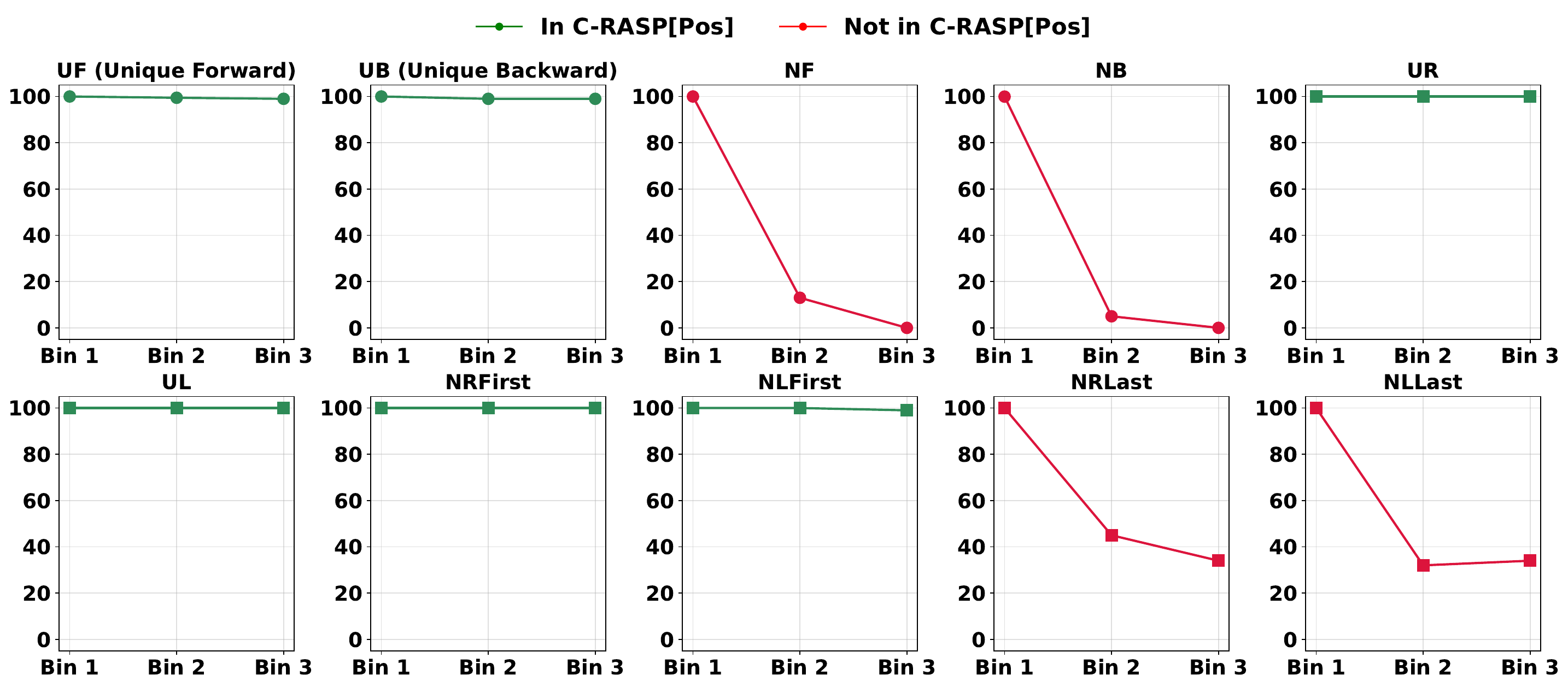}
    \caption{Transformers trained from scratch with RoPE, aligns with \textsc{C-Rasp}\textup{[pos]} predictions, even though the length generalizability guarantees given by \textsc{C-Rasp}\textup{[pos]} only apply to APE. It should be noted that the accuracy on UF, UB plotted here are not obtained by directly training, but by applying UR, UL repeatedly as described in the paragraph \ref{remark_uf_ub}. The bins here correspond to the evaluation bins of sizes $(l_{min}, 50), (51, 100), (101, 150)$).}
    \label{fig:rope_crasp}
\end{figure}

\section{Additional experimental details and results}
\label{appendix:prompting_details}

\subsection{Prompting: In-Context Retrieval Templates (Section~\ref{subsec:elicit_prompting})}
\label{sec:appendix_prompt_retrieval}
Table~\ref{tab:prompt-grid} below lists the three prompt factors and their possible values. A Cartesian product of all prompt settings across the 3 categories given below was taken to generate 20 prompt templates. 

\begin{table}[h]
\centering
\caption{Prompt Template Grid Factors}
\label{tab:prompt-grid}
\begin{tabular}{@{}ll@{}}
\toprule
\textbf{Factor} & \textbf{Options} \\
\midrule
Separator   & \textsc{sep}, \textsc{nosep} \\
Few‑shot    & \textsc{small}, \textsc{same} \\
Template    & \textsc{bare}, \textsc{simple\_rule}, \textsc{simple\_rule\_expl.}, \textsc{math\_rule} \textsc{math\_rule\_expl.}\\
\bottomrule
\end{tabular}
\end{table}

\subsubsection*{Separator Examples}
The input string is given in the following format based on the category.
\begin{itemize}
    \item \textsc{SEP}  : $x_1, x_2 ... x_{n-1} || x_{n}$
    \item \textsc{NOSEP} : - $x_1, x_2 ... x_{n-1}x_{n}$
\end{itemize}

\subsubsection*{Few‑Shot Pools}
\begin{itemize}
\item \textsc{small} — Examples drawn from a held‑out pool with lengths smaller than the input string $X$.
\item \textsc{same}  — Examples sampled from the same evaluation set as the input string $X$ (excluding $X$ itself).
\end{itemize}

\subsubsection*{Instruction Templates}
The different possible instruction templates for each of the tasks are given below.  In every case, the prompt ends with the unresolved query line, prompting the model to emit the answer token.

\paragraph{\textsc{bare}}
This template provides only the few-shot examples without any additional instructional or explanatory text. It requires the completion model to infer the task directly from the presented examples. Example for $UR$ : 

\paragraph{\textsc{simple\_rule}}
The task is introduced through a concise English-language description defining the resolution rule. The idea of the setup is the test whether the model can follow explicit instructions based on linguistic clarity. The format across the tasks is as follows. 
\begin{verbatim}
Each line is written as ‘context|query: target’. 
The vertical bar ‘|’ separates the context from the query token.
All strings below follow this rule: {rule_simple}
{examples}
\end{verbatim}

\begin{table}[h]
\centering
\footnotesize
\caption{Simple Rules for each of the tasks}
\label{tab:simple-rule-template}
\begin{tabular}{@{}ll@{}}
\toprule
\textbf{Task} & \textbf{Template} \\
\midrule
UL      & The answer is the token immediately to the left of the single instance of the query token \\
UR      & The answer is the token immediately to the right of the single instance of the query token. \\
NLLast  & When the query appears multiple times, the answer is the token just to the left of its last appearance. \\
NRLast  & When the query appears multiple times, the answer is the token just to the right of its last appearance. \\
NLFirst & When the query appears multiple times, the answer is the token just to the left of its first appearance. \\
NRFirst & When the query appears multiple times, the answer is the token just to the right of its first appearance. \\
\bottomrule
\end{tabular}
\end{table}

\paragraph{\textsc{simple\_rule\_explained}}
This template enhances the simple English rule format by providing an additional worked example explanation. It begins by stating the simple rule, then illustrating through an example how the rule is actually applied. Following the same, the model is presented with some few-shot examples to reinforce the rule understanding.

\paragraph{\textsc{math\_rule}}
This template explicitly formulates the task using formal mathematical definitions and notation. It introduces variables to clearly define the input string, context, query token, token indices, and continuation token. The mathematical notation precisely indicates conditions under which the continuation token should be selected, leaving no ambiguity regarding the required solution path. The template is as follows.

\begin{verbatim}
Let $X = x_1 \ldots x_n$ with $n \ge 4$ and $x_i \in \Sigma$ (token vocabulary). 
The final token $x_n$ is the query token $q$. In the context $x_1 \ldots x_{{n-1}}$,
$q$ appears $t$ times at indices $q_1, \ldots, q_t$ ($1 \le q_1 \le q_t \le n-1$).
Continuation token $x_{{n+1}}$ is defined by: {rule_math}
Examples:
{examples}
\end{verbatim}
\begin{table}[h]
\centering
\caption{Math Rule Template}
\label{tab:math-rule-template}
\begin{tabular}{@{}ll@{}}
\toprule
\textbf{Task} & \textbf{Template} \\
\midrule
UL & $t = 1$ and $x_{n+1} = x_{q_1-1}$ \\
UR & $t = 1$ and $x_{n+1} = x_{q_1+1}$ \\
NLLast & $t > 1$ and $x_{n+1} = x_{q_t-1}$ \\
NRLast & $t > 1$ and $x_{n+1} = x_{q_t+1}$ \\
NLFirst & $t > 1$ and $x_{n+1} = x_{q_1-1}$ \\
NRFirst & $t > 1$ and $x_{n+1} = x_{q_1+1}$ \\
\bottomrule
\end{tabular}
\end{table}

\paragraph{\textsc{math\_rule\_explained}}
Similar to the \textsc{simple\_rule\_explained}, this template begins with a detailed mathematical formulation of the task and is then followed by a worked-out example and then finally some few-shot set of strings.

\subsection{Prompting: In-Context Copying Templates (Section~\ref{subsec:elicit_prompting})}
\label{sec:appendix_prompt_copying}

\paragraph{Template variants.}  For our In-context Copying task suite,  we use three lightweight prompt styles. There is no Cartesian grid as in retrieval, as each of the prompts had statistically little variance, and eliciting this ability from our LLMs was not as challenging as in retrieval.
\begin{itemize}
\item \textsc{bare} — few-shot examples only, no instructional text.
\item \textsc{obey} — a one‑line English rule (\texttt{rule\_simple}) introducing the few‑shot block.
\item \textsc{hint} — a short hint (\texttt{rule\_hint}) that highlights the input–output relation; mainly useful for backward copying.
\end{itemize}

\paragraph{Rule strings injected by each template.}
The phrases referenced above are drawn from Table \ref{tab:copy-rule-phrases}.  Note that unique and non‑unique regimes share identical wording because the operation (copy vs. reverse) is independent of token repetition once the input is fixed.

\begin{table}[h]
\centering
\caption{\texttt{rule\_simple} phrases for the four copying tasks.}
\label{tab:copy-rule-phrases}
\begin{tabular}{@{}ll@{}}
\toprule
Task & \texttt{rule\_simple} \\ \midrule
UF & The output is exactly the same sequence as the input. \\
UB & The output is the input sequence written in reverse order. \\
NF & The output is exactly the same sequence as the input. \\
NB & The output is the input sequence written in reverse order. \\
\bottomrule
\end{tabular}
\end{table}

The corresponding \texttt{rule\_hint} strings swap the lead‑in "The output is …" with "In every example the output …" to make the instruction less imperative.

\paragraph{Few‑shot sampler.}  All copy prompts use the same held‑out $k{=}5$‑shot pool (length $L{=}5$, $N{=}1500$) that is disjoint from the evaluation set.  Each prompt is built by picking five random lines from this pool, followed by the query built from the current test string.

\paragraph{End‑to‑end prompt bare template example (UF)}
\begin{verbatim}
<bos> r m a j s : r m a j s <eos>
<bos> H w s x n : H w s x n <eos>
<bos> Q o F G J : Q o F G J <eos>
<bos> O a Y M F : O a Y M F <eos>
<bos> m I N r D : m I N r D <eos>
<bos> Z i b E B : 
\end{verbatim}

The unresolved line after \texttt{} is the model’s query; generation proceeds with greedy decoding ($T=0$).

\subsection{Longer Vocabulary (Word-Level) Experiments (Section~\ref{subsec:elicit_prompting})}
\label{appendix:word_vocab}

To ensure that the Uniqueness and Directional Biases observed in our primary character-level experiments are not simply artifacts of a synthetic setup, we conducted a parallel set of experiments using a more naturalistic word-level vocabulary. In case of LLMs, the distinction between a token-level and a word-level task may be less sharp than it appears. \citet{kaplan2025from} shows that LLMs form an ``inner lexicon'' where simple or common words are processed as cohesive semantic units. Given their results, this word-level task formulation can be considered analogous to the character-level task.

\paragraph{Experimental Setup}
The setup for the word-level vocabulary mirrors the main experiments in Section \ref{subsec:elicit_prompting} but replaces the character-based vocabulary with a vocabulary of 300+ unique English words. This allowed us to construct longer, more complex sequences with lengths up to 300 tokens. All other aspects of the methodology, few-shot prompting strategy, and task structures (e.g. UR vs. UL, UF vs. UB) remain identical. The exact vocabulary we used to generate the test strings in all of the experiments is as follows.

\paragraph{Character-level vocabulary}

\begin{verbatim}
a, à, â, ā, ä, ą, b, c, ç, ć, ĉ, ċ, č, d, e, é, 
ê, ë, ě, ē, ę, f, g, ĝ, ğ, h, ĥ, i, í, j, ĵ, k, 
l, m, n, o, ó, ô, ö, õ, p, q, r, s, t, u, ú, ù, 
û, ü, v, w, x, y, z, 
A, Á, Â, Ã, Ä, Å, Æ, B, C, Ç, Ć, Ĉ, Ċ, Č, D, E, 
È, É, Ē, Ė, F, G, H, I, Í, Ì, Î, Ï, Ī, Ĭ, Ĩ, J, 
K, L, Ĺ, M, N, Ń, O, Ó, Ô, Ö, P, Q, R, S, T, U, 
Ú, Ù, Û, Ü, V, W, X, Y, Z, 
0, 1, 2, 3, 4, 5, 6, 7, 8, 9    
\end{verbatim}

\paragraph{Word-level vocabulary}

\begin{verbatim}
apple, ant, arrow, anchor, artist, animal, angle, apricot, arch, armor, axis, avenue
ball, bat, book, bridge, bottle, bucket, bench, bread, bell, button, brush, branch
cat, car, cup, cloud, clock, candle, coin, chair, circle, crown, castle, cookie
dog, door, desk, drum, duck, doll, diamond, dish, dress, dream, drop, dust
egg, ear, eye, engine, elbow, earth, envelope, exit, echo, edge, event, energy
fish, fan, fork, flower, flag, feather, fire, frame, forest, farm, fruit, fence 
goat, game, glass, glove, gate, garden, gift, grape, guitar, gold, gear, group 
hat, hand, horse, house, hill, hammer, heart, honey, hook, horn, hug, hope
ice, iron, ink, island, idea, image, item, ivory, icon, input, issue, idol 
jar, jam, jet, jewel, jungle, jacket, juice, job, joke, joy, judge, jump
kite, key, king, knee, kitchen, knife, kitten, knight, kick, kettle, kind, koala 
lamp, leaf, lion, lock, ladder, lake, lemon, line, letter, lip, light, lunch
man, map, moon, milk, mouse, mirror, mountain, market, meal, music, magnet, match 
net, nose, nest, name, nail, night, number, note, neck, nurse, noise, nation
owl, oil, oven, orange, ocean, order, orbit, open, option, owner, object, office
pen, pig, pot, plate, plane, pumpkin, pearl, park, path, piano, point, paper
queen, quill, quiz, quilt, quiet, quick, quote, quest, queue, quake, quart, quark 
rat, ring, rain, river, rope, road, rose, rock, rule, room, root, radio
sun, sock, star, ship, shoe, stone, sugar, song, salt, sand, seed, snake
top, toy, tree, train, table, tube, tiger, tool, time, tent, team, towel
urn, use, unit, uncle, under, upper, uniform, union, urban, urge, ultra, usual
van, vase, veil, voice, valley, visit, value, vest, vote, view, vine, victory
wax, web, wall, wind, water, wheel, wave, wolf, wing, worm, word, wood
xylophone, xerox
yam, yard, yarn, yawn, year, yellow, yogurt, yolk, youth, yield, yeti, yoga
zoo, zebra, zone, zero, zip, zinc, zeal, zest, zigzag, zoom, zombie, zodiac
\end{verbatim}

\subsection{Additional results and plots for in-context prompting (Section~\ref{subsec:elicit_prompting})}
\label{appendix:in_ctx_results_additional}
\subsubsection{Additional results: character-level vocabulary}
\begin{figure}[htbp]
  \centering
  \includegraphics[width=\linewidth]{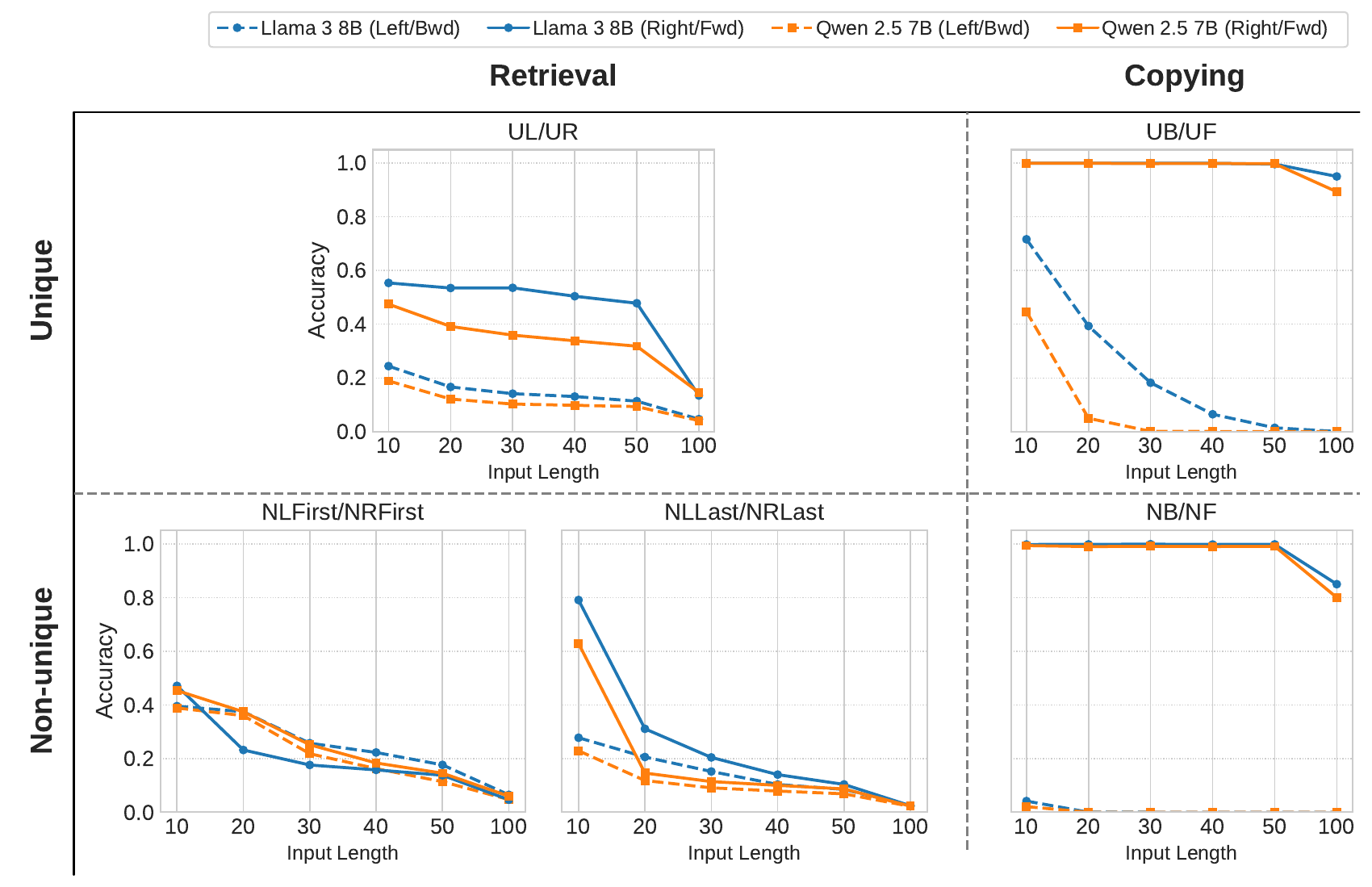}
  \caption{In-context accuracy for \texttt{Qwen2.5‑7B} and \texttt{Llama‑3-8B} across all our \textbf{character-level} tasks averaged over 3 seeds and prompt variations per task. The same patterns as discussed in Section \ref{subsec:elicit_prompting} are visible.}
  \label{fig:appendix_combined_8b}
\end{figure}

\begin{figure}[htbp]
  \centering
  \includegraphics[width=\linewidth]{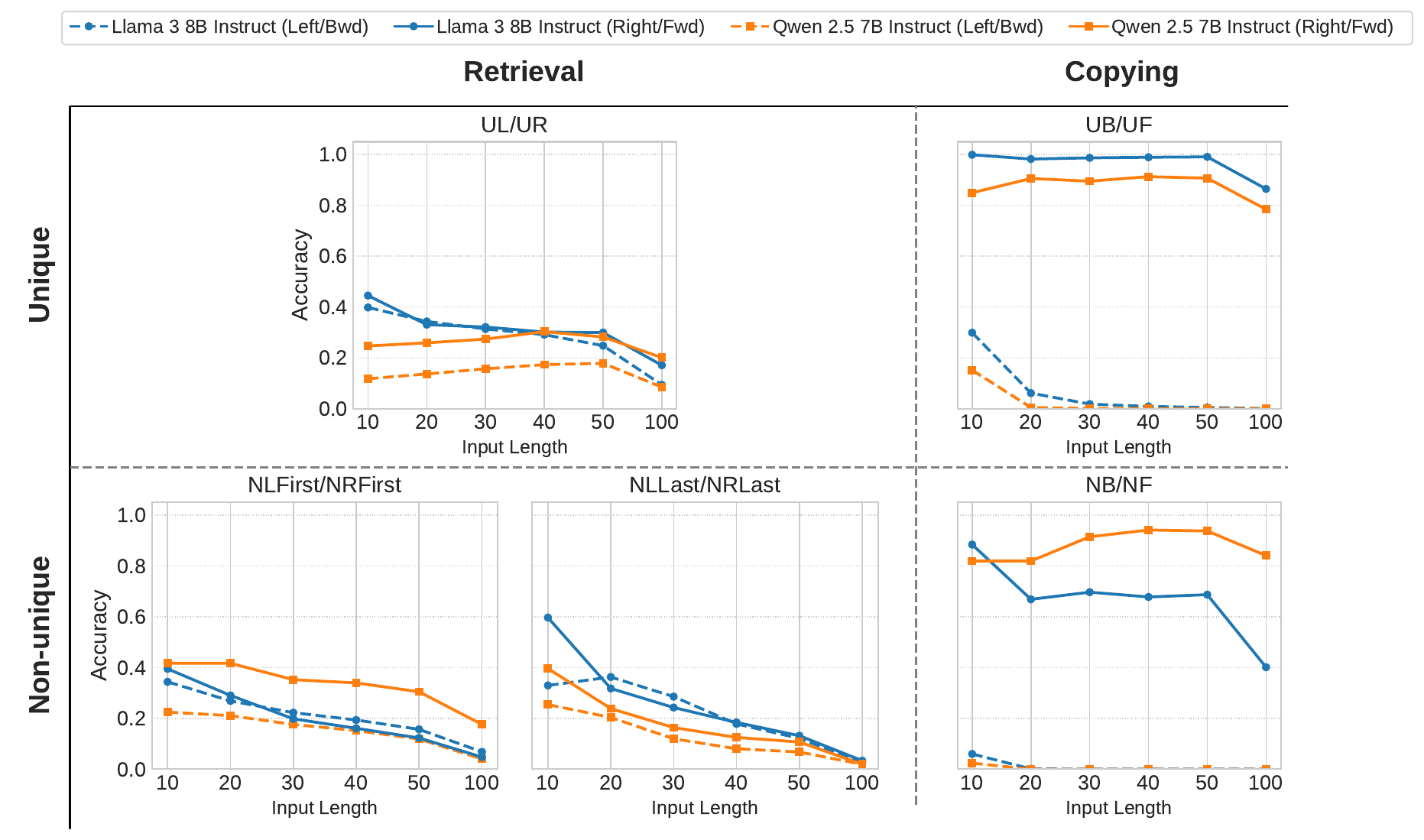}
  \caption{In context accuracy for \texttt{Qwen2.5‑7B-Instruct} and \texttt{Llama‑3-8B-Instruct} across all our \textbf{character-level} tasks averaged over 3 seeds and prompt variations per task. The same patterns as discussed in Section \ref{subsec:elicit_prompting} are visible.}
  \label{fig:appendix_combined_8b_instruct}
\end{figure}

\begin{figure}[htbp]
  \centering
  \includegraphics[width=\linewidth]{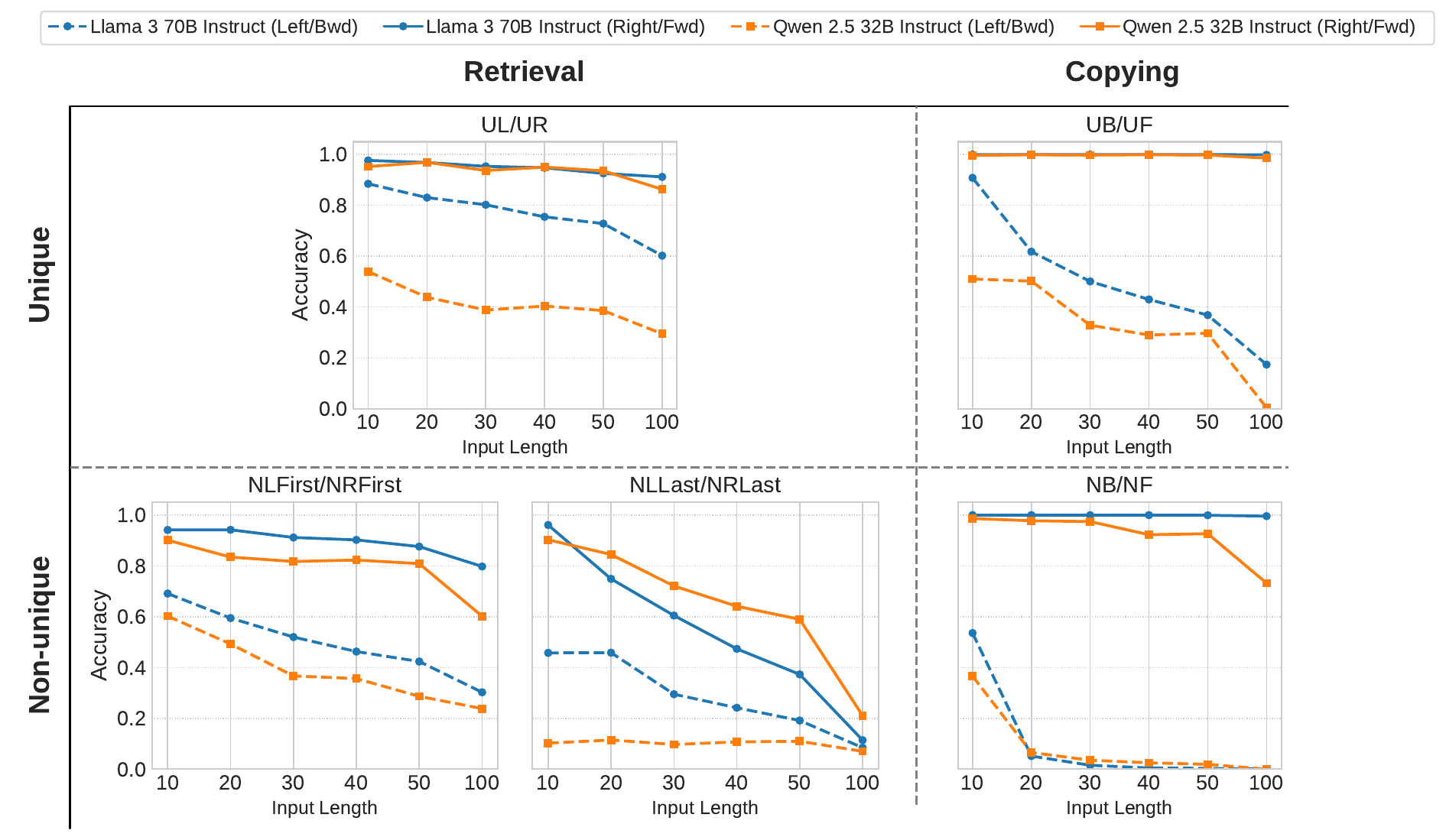}
  \caption{In context accuracy for \texttt{Qwen2.5‑32B-Instruct} and \texttt{Llama‑3-70B-Instruct} across all our \textbf{character-level} tasks averaged over 3 seeds and prompt variations per task. The same patterns as discussed in Section \ref{subsec:elicit_prompting} are visible.}
  \label{fig:appendix_combined_70b_instruct}
\end{figure}

As evident from all the figures here (smaller completion models - Figure \ref{fig:appendix_combined_8b}), (smaller  instruct models - Figure \ref{fig:appendix_combined_8b_instruct}), (bigger instruct models - Figure \ref{fig:appendix_combined_70b_instruct}), the Directional Bias persists across models sizes and types. While in the smaller models the gaps smaller for the retrieval tasks, nevertheless there is still a pronounced gap within the copying task variants. 

Amongst the prompts there was high variability for all our retrieval setups, but close to none for the copying setups. Our bare template performed the worst in most retrieval setups, and the highest accuracy metrics were achieved when the instruction template was -- \texttt{MATH\_RULE\_EXPLAINED}, with the few shot setting as \texttt{SMALL} with the presence of a separator. For copying, just the \texttt{BARE} setup was enough to get high accuracy numbers without the need to specify any instructions (especially for the forward copying cases). Providing instructions did improve the performance especially for the Qwen2.5-Instruct models.

We also carried out the same experiments with newer Qwen3~\citep{qwen3technicalreport} models with ``thinking'' variants to see if our findings still hold. The results from Figures \ref{fig:appendix_q3_char_base_large}, \ref{fig:appendix_q3_char_base_small}, \ref{fig:appendix_q3_char_inst_small}, \ref{fig:appendix_q3_char_inst_large} show that even the model trained to explicitly generate a chain-of-thought for solving task still have the same Directional and Uniqueness biases.

\begin{figure}[htbp]
  \centering
  \includegraphics[width=\linewidth]{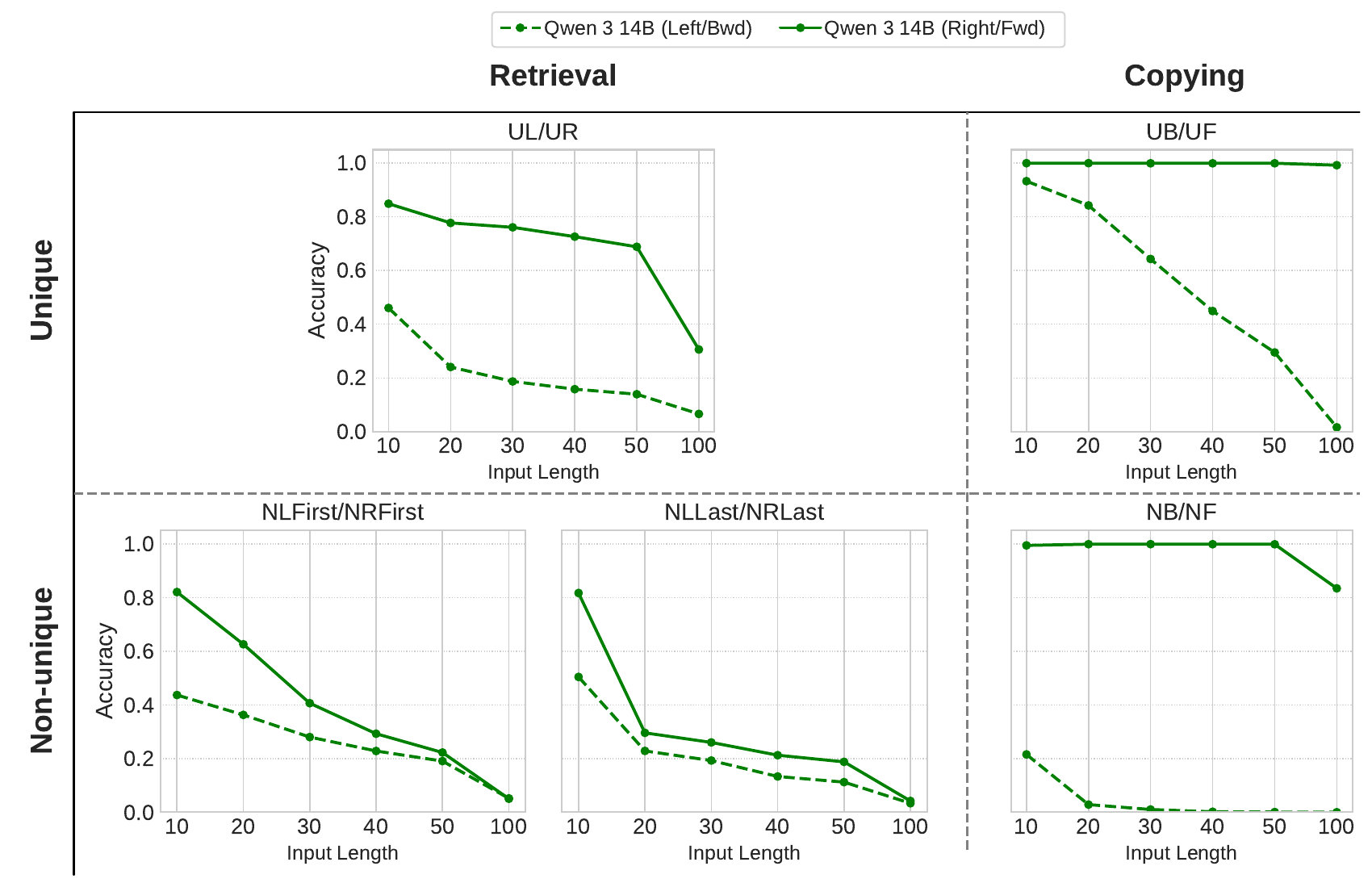}
  \caption{In-context accuracy for \texttt{Qwen3‑14B} across all our \textbf{character-level} tasks averaged over 3 seeds and prompt variations per task. The same patterns as discussed in Section \ref{subsec:elicit_prompting} are visible.}
  \label{fig:appendix_q3_char_base_large}
\end{figure}

\begin{figure}[htbp]
    \centering
    \includegraphics[width=\linewidth]{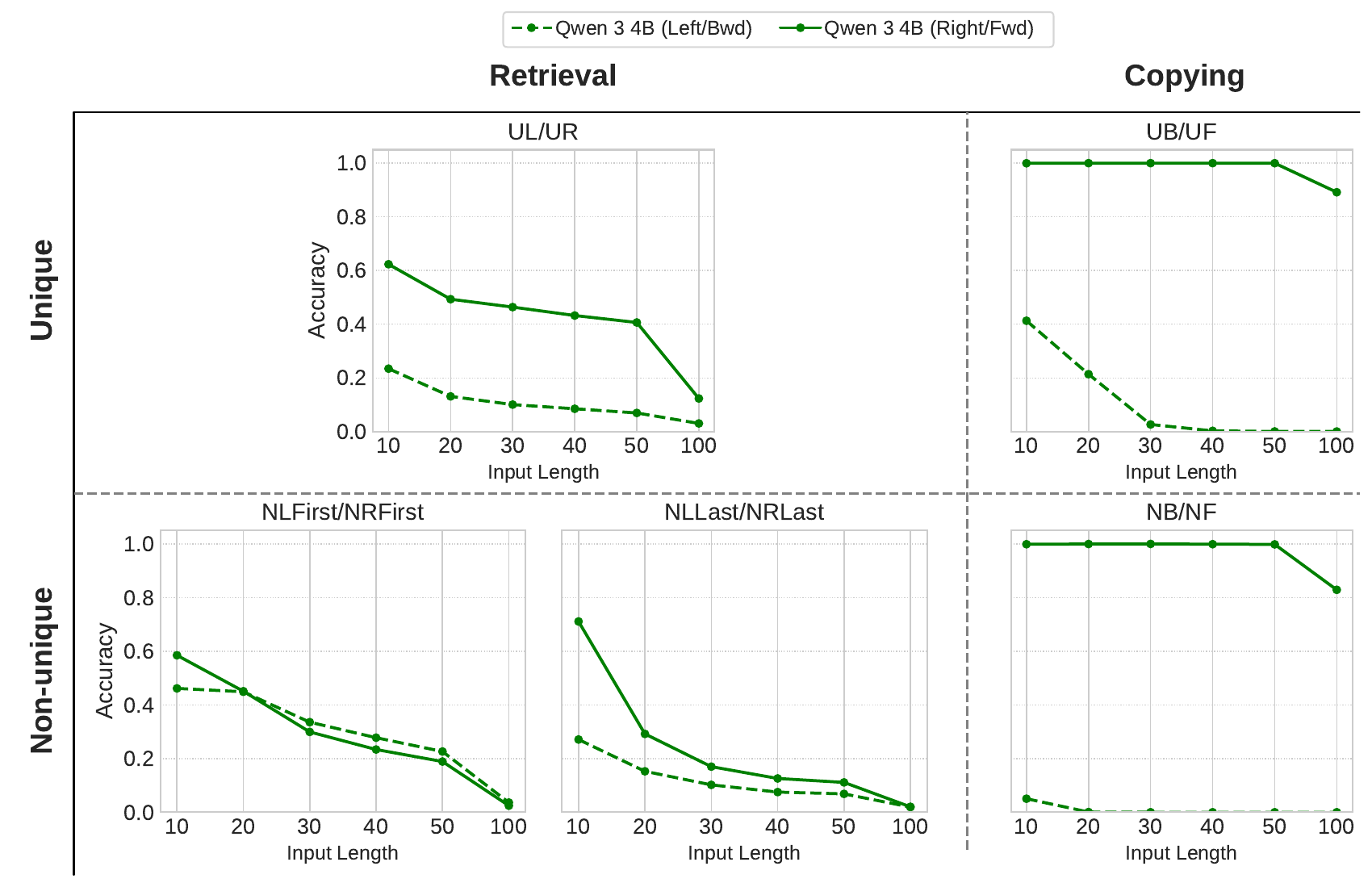}
    \caption{In-context accuracy for \texttt{Qwen3‑4B} across all our \textbf{character-level} tasks averaged over 3 seeds and prompt variations per task. The same patterns as discussed in Section \ref{subsec:elicit_prompting} are visible.}
    \label{fig:appendix_q3_char_base_small}
\end{figure}

\begin{figure}[htbp]
  \centering
  \includegraphics[width=\linewidth]{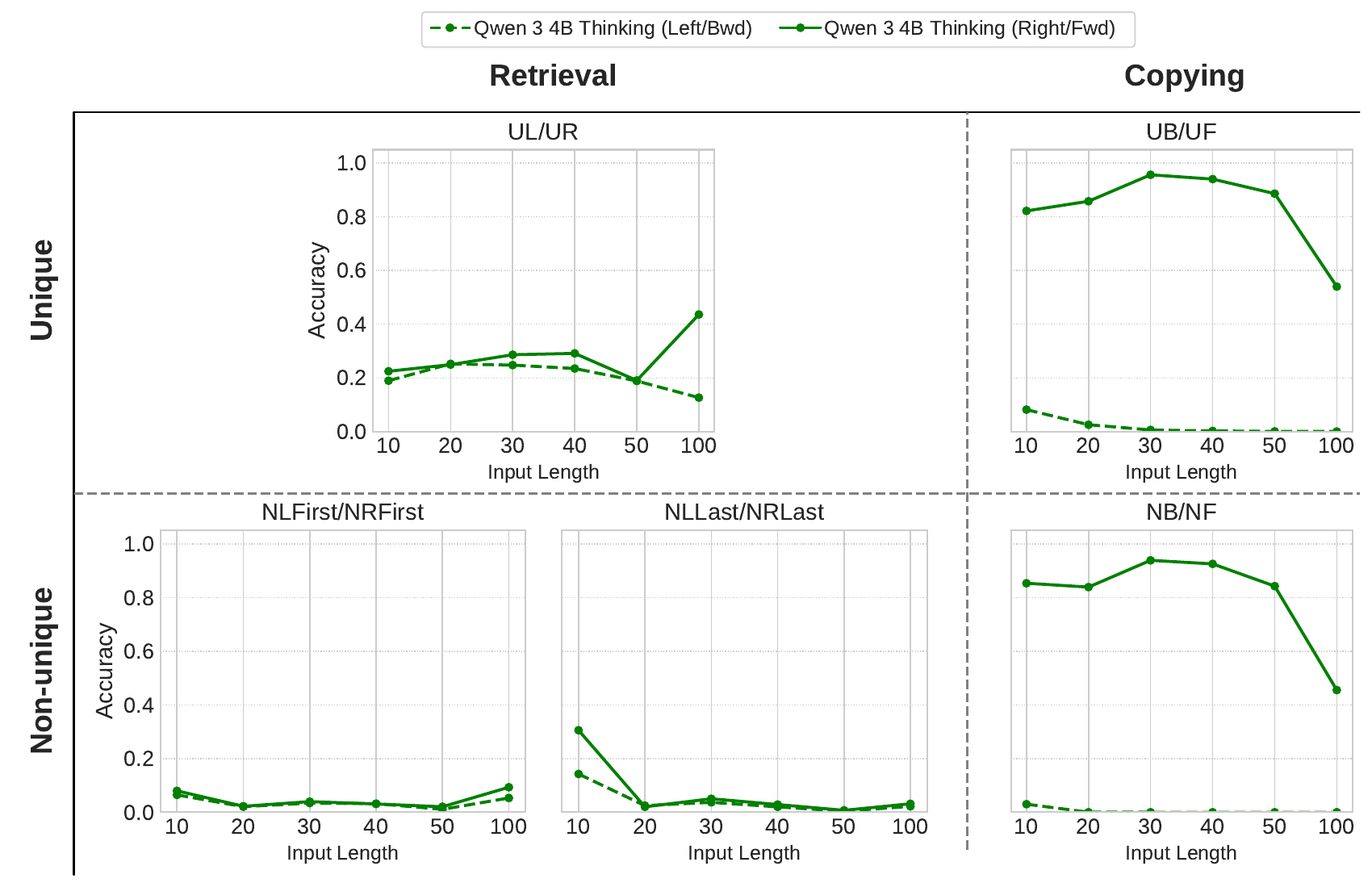}
  \caption{In context accuracy for \texttt{Qwen3‑4B-Thinking} across all our \textbf{character-level} tasks averaged over 3 seeds and prompt variations per task. The same patterns as discussed in Section \ref{subsec:elicit_prompting} are visible.}
  \label{fig:appendix_q3_char_inst_small}
\end{figure}

\begin{figure}[htbp]
  \centering
  \includegraphics[width=\linewidth]{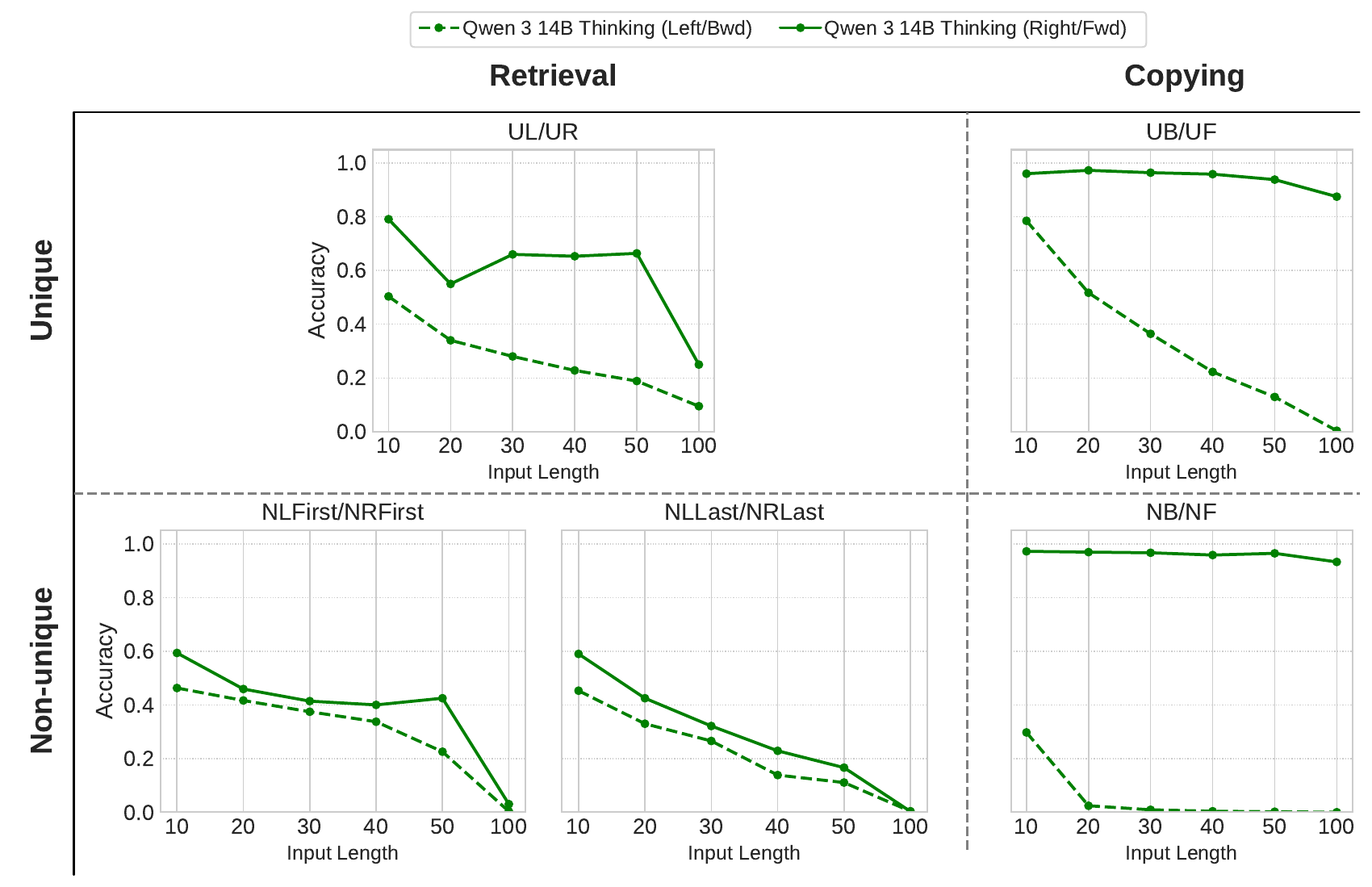}
  \caption{In context accuracy for \texttt{Qwen3‑14B-Thinking} across all our \textbf{character-level} tasks averaged over 3 seeds and prompt variations per task. The same patterns as discussed in Section \ref{subsec:elicit_prompting} are visible.}
  \label{fig:appendix_q3_char_inst_large}
\end{figure}

\subsubsection{Results: word-level vocabulary}
We find that the core asymmetries persist even in the more naturalistic word-level setting (see Figures \ref{fig:appendix_noq3_word_base_large}, \ref{fig:appendix_noq3_word_base_small}, \ref{fig:appendix_noq3_word_instruct_small},
\ref{fig:appendix_noq3_word_instruct_large},
\ref{fig:appendix_q3_word_base_large},
\ref{fig:appendix_q3_word_base_small},
\ref{fig:appendix_q3_word_inst_small},
\ref{fig:appendix_q3_word_inst_large}). Although length generalization improves slightly for the larger models, the pretraining induced Directional Bias and the architecture-driven Uniqueness Bias remain clearly evident across all LLMs.

\begin{figure}[htbp]
  \centering
  \includegraphics[width=\linewidth]{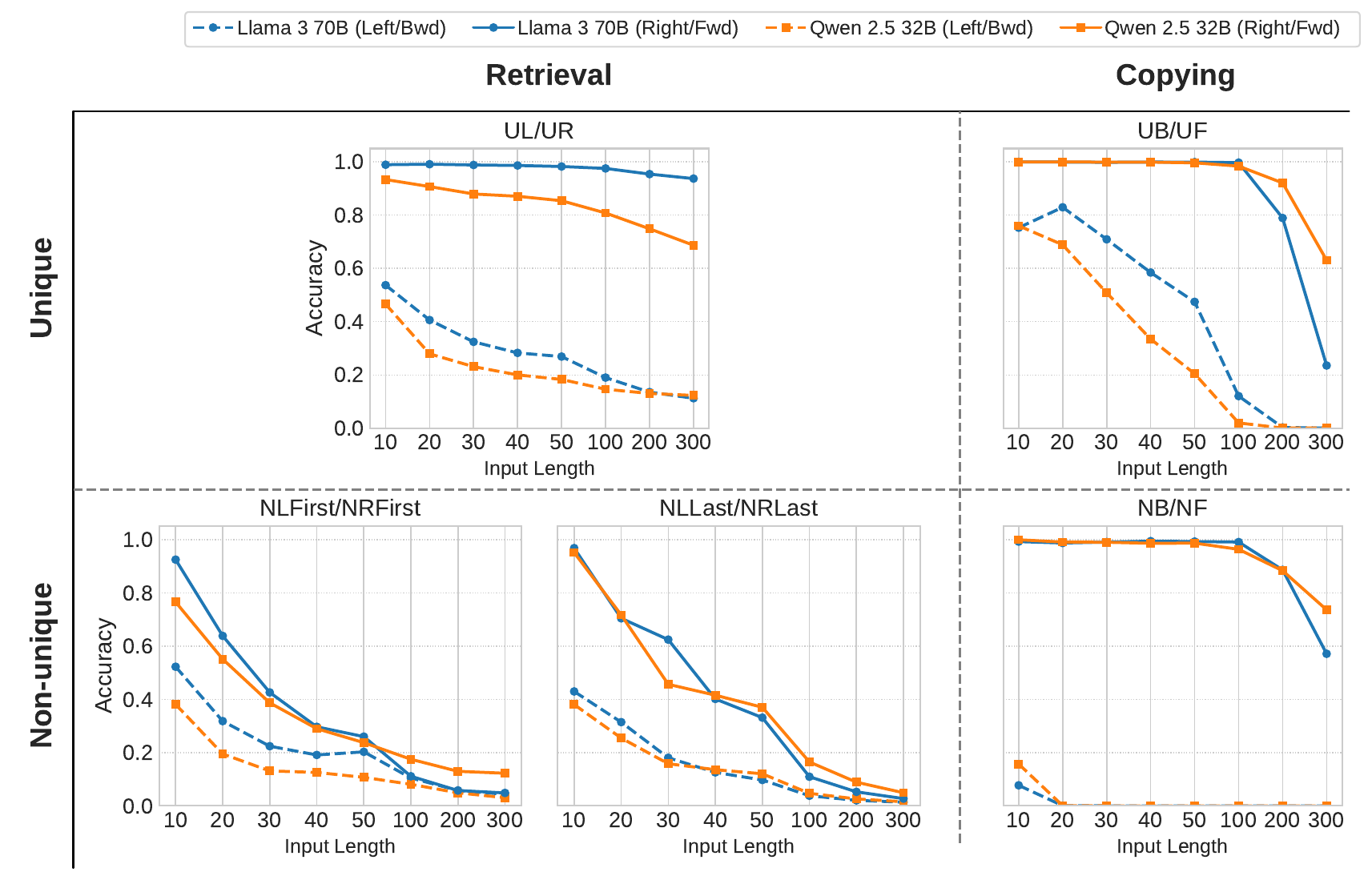}
  \caption{In context accuracy for \texttt{Qwen2.5‑32B} and \texttt{Llama‑3-70B} across all our \textbf{word-level} tasks averaged over 3 seeds and prompt variations per task. The same patterns as discussed in Section \ref{subsec:elicit_prompting} are visible.}
  \label{fig:appendix_noq3_word_base_large}
\end{figure}

\begin{figure}[htbp]
  \centering
  \includegraphics[width=\linewidth]{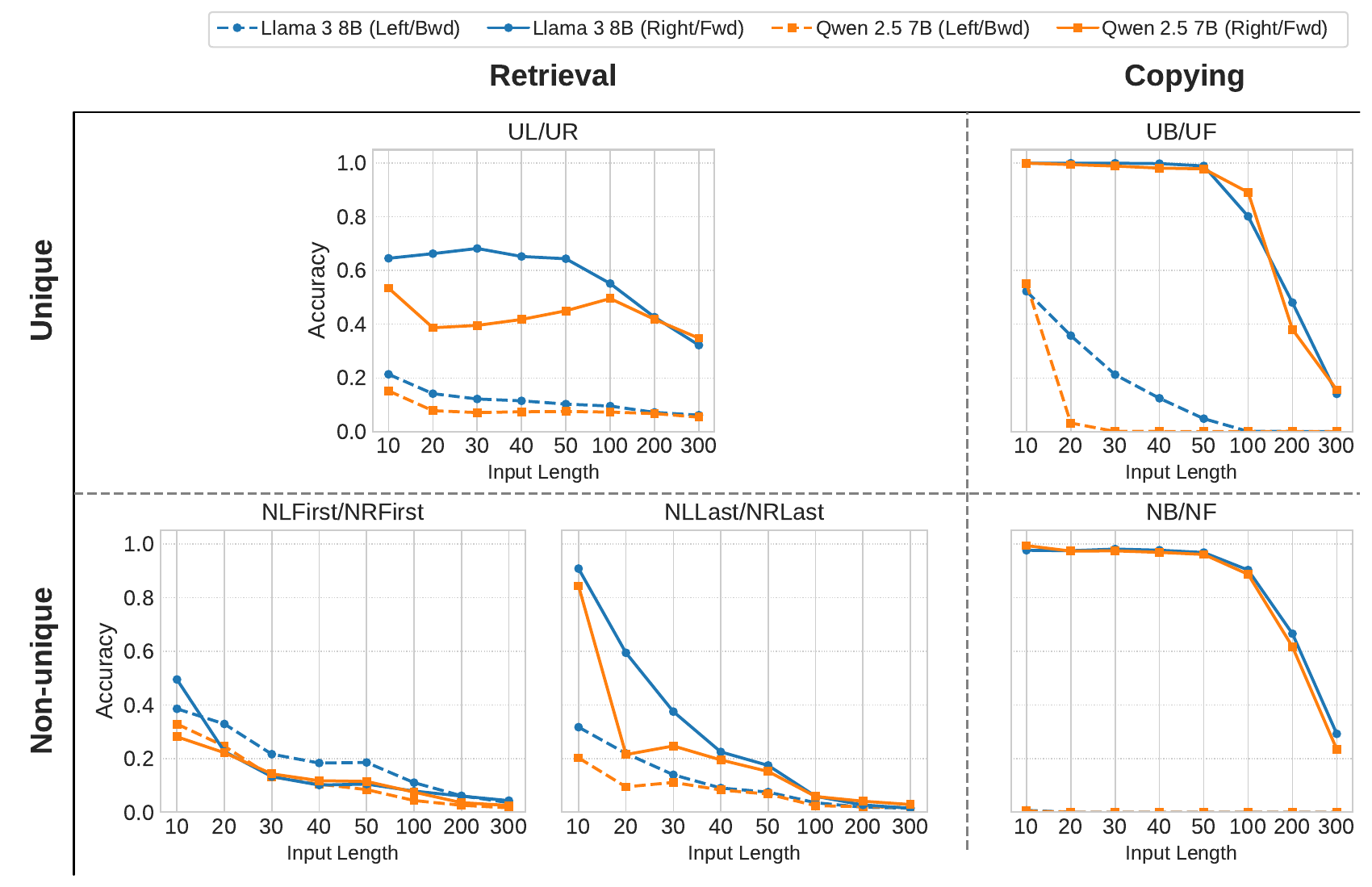}
  \caption{In-context accuracy for \texttt{Qwen2.5‑7B} and \texttt{Llama‑3-8B} across all our \textbf{word-level} tasks averaged over 3 seeds and prompt variations per task. The same patterns as discussed in Section \ref{subsec:elicit_prompting} are visible.}
  \label{fig:appendix_noq3_word_base_small}
\end{figure}

\begin{figure}[htbp]
  \centering
  \includegraphics[width=\linewidth]{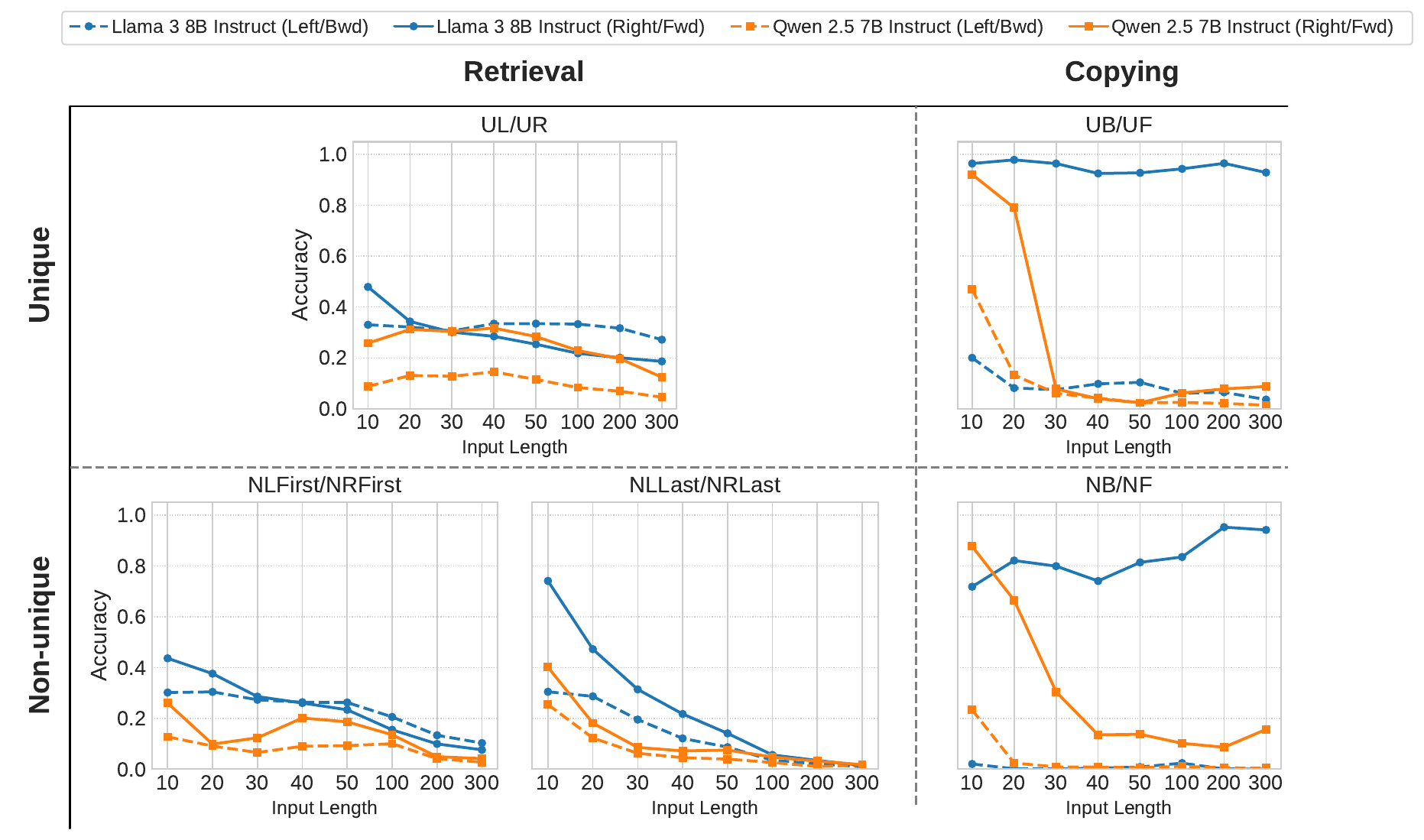}
  \caption{In context accuracy for \texttt{Qwen2.5‑7B-Instruct} and \texttt{Llama‑3-8B-Instruct} across all our \textbf{word-level} tasks averaged over 3 seeds and prompt variations per task. The same patterns as discussed in Section \ref{subsec:elicit_prompting} are visible.}
  \label{fig:appendix_noq3_word_instruct_small}
\end{figure}

\begin{figure}[htbp]
  \centering
  \includegraphics[width=\linewidth]{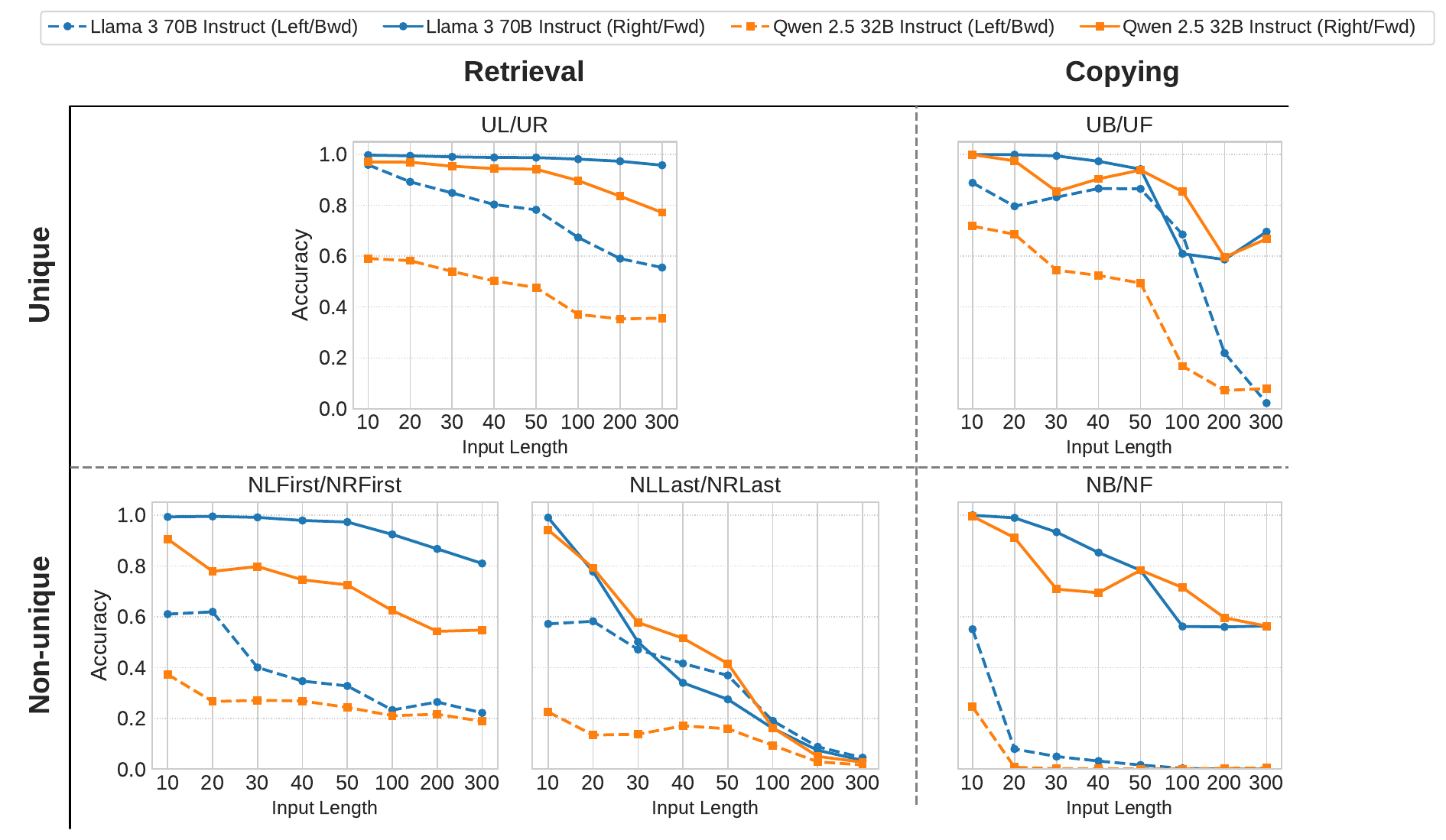}
  \caption{In context accuracy for \texttt{Qwen2.5‑32B-Instruct} and \texttt{Llama‑3-70B-Instruct} across all our \textbf{word-level} tasks averaged over 3 seeds and prompt variations per task. The same patterns as discussed in Section \ref{subsec:elicit_prompting} are visible.}
  \label{fig:appendix_noq3_word_instruct_large}
\end{figure}

\begin{figure}[htbp]
  \centering
  \includegraphics[width=\linewidth]{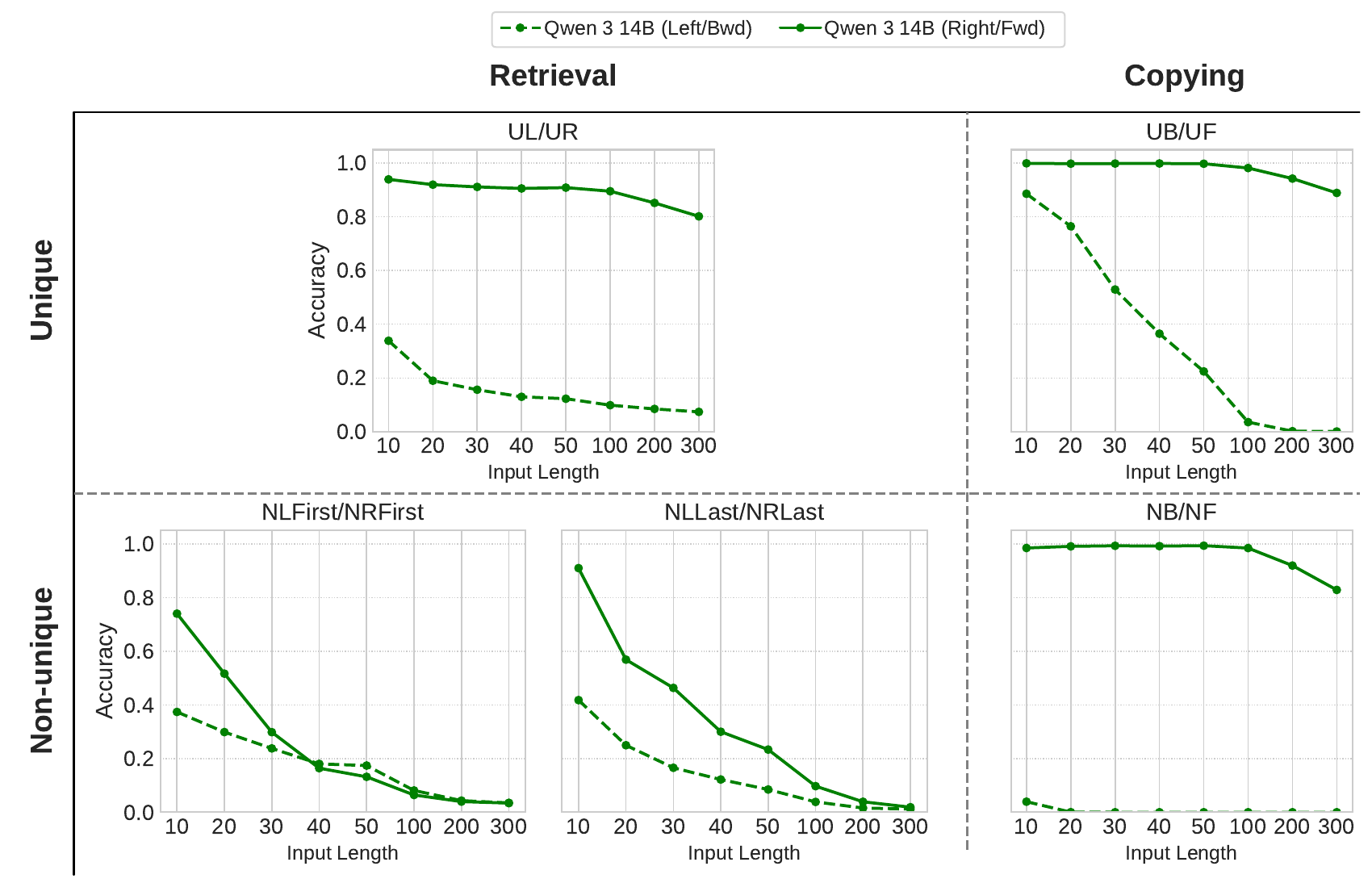}
  \caption{In-context accuracy for \texttt{Qwen3‑14B} across all our \textbf{word-level} tasks averaged over 3 seeds and prompt variations per task. The same patterns as discussed in Section \ref{subsec:elicit_prompting} are visible.}
  \label{fig:appendix_q3_word_base_large}
\end{figure}

\begin{figure}[htbp]
    \centering
    \includegraphics[width=\linewidth]{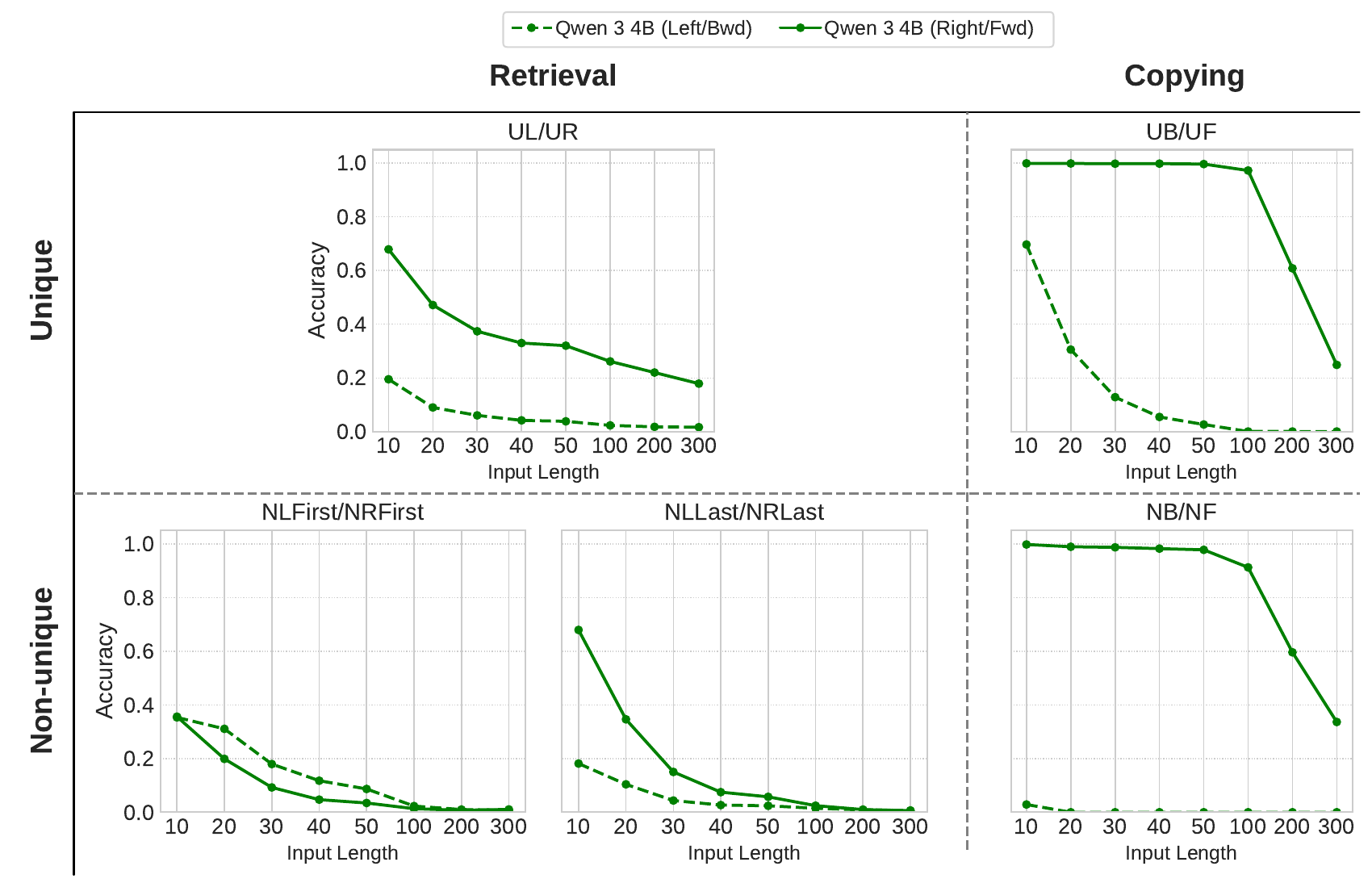}
    \caption{In-context accuracy for \texttt{Qwen3‑4B} across all our \textbf{word-level} tasks averaged over 3 seeds and prompt variations per task. The same patterns as discussed in Section \ref{subsec:elicit_prompting} are visible.}
    \label{fig:appendix_q3_word_base_small}
\end{figure}

\begin{figure}[htbp]
  \centering
  \includegraphics[width=\linewidth]{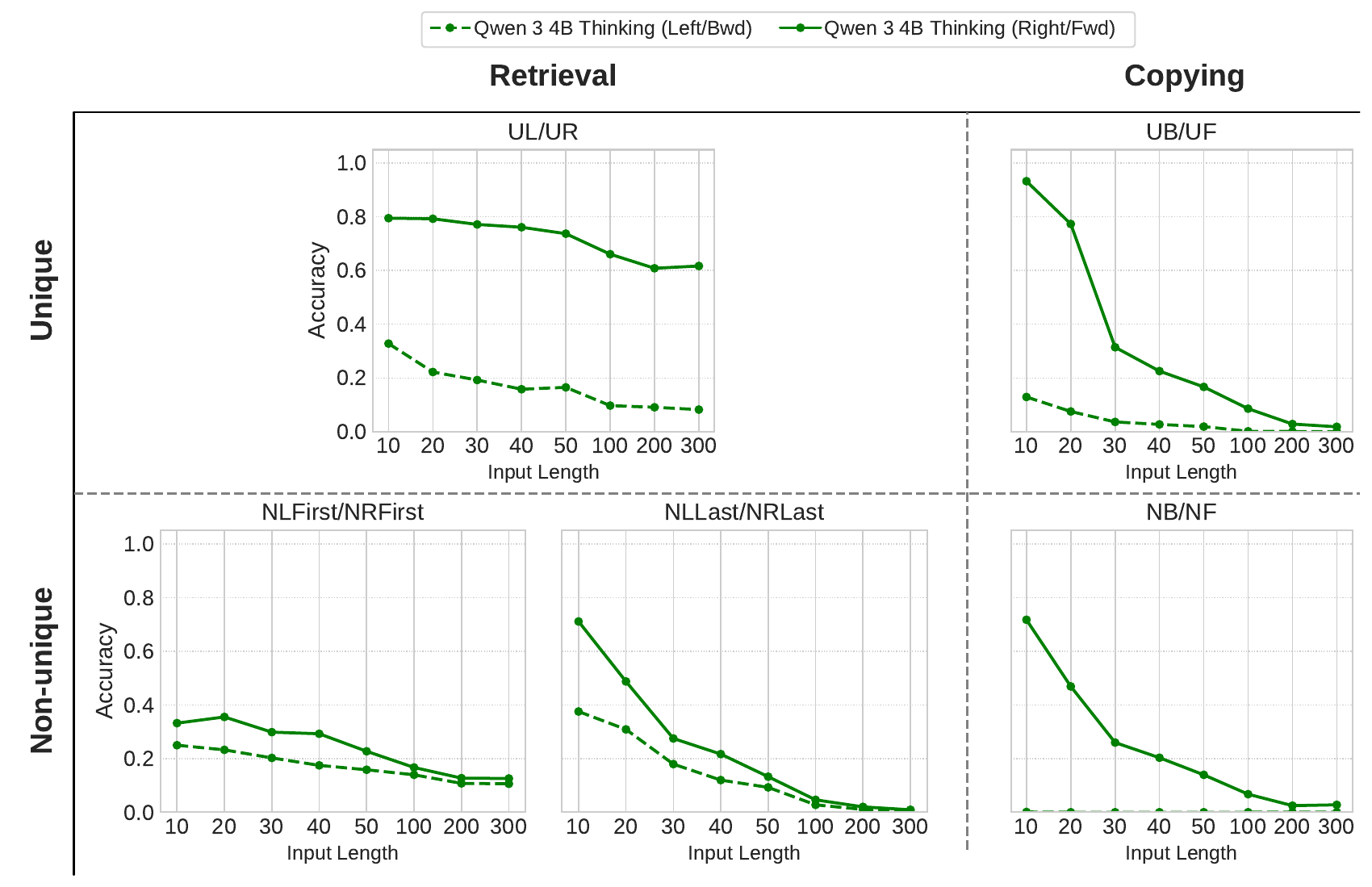}
  \caption{In context accuracy for \texttt{Qwen3‑4B-Thinking} across all our \textbf{word-level} tasks averaged over 3 seeds and prompt variations per task. The same patterns as discussed in Section \ref{subsec:elicit_prompting} are visible.}
  \label{fig:appendix_q3_word_inst_small}
\end{figure}

\begin{figure}[htbp]
  \centering
  \includegraphics[width=\linewidth]{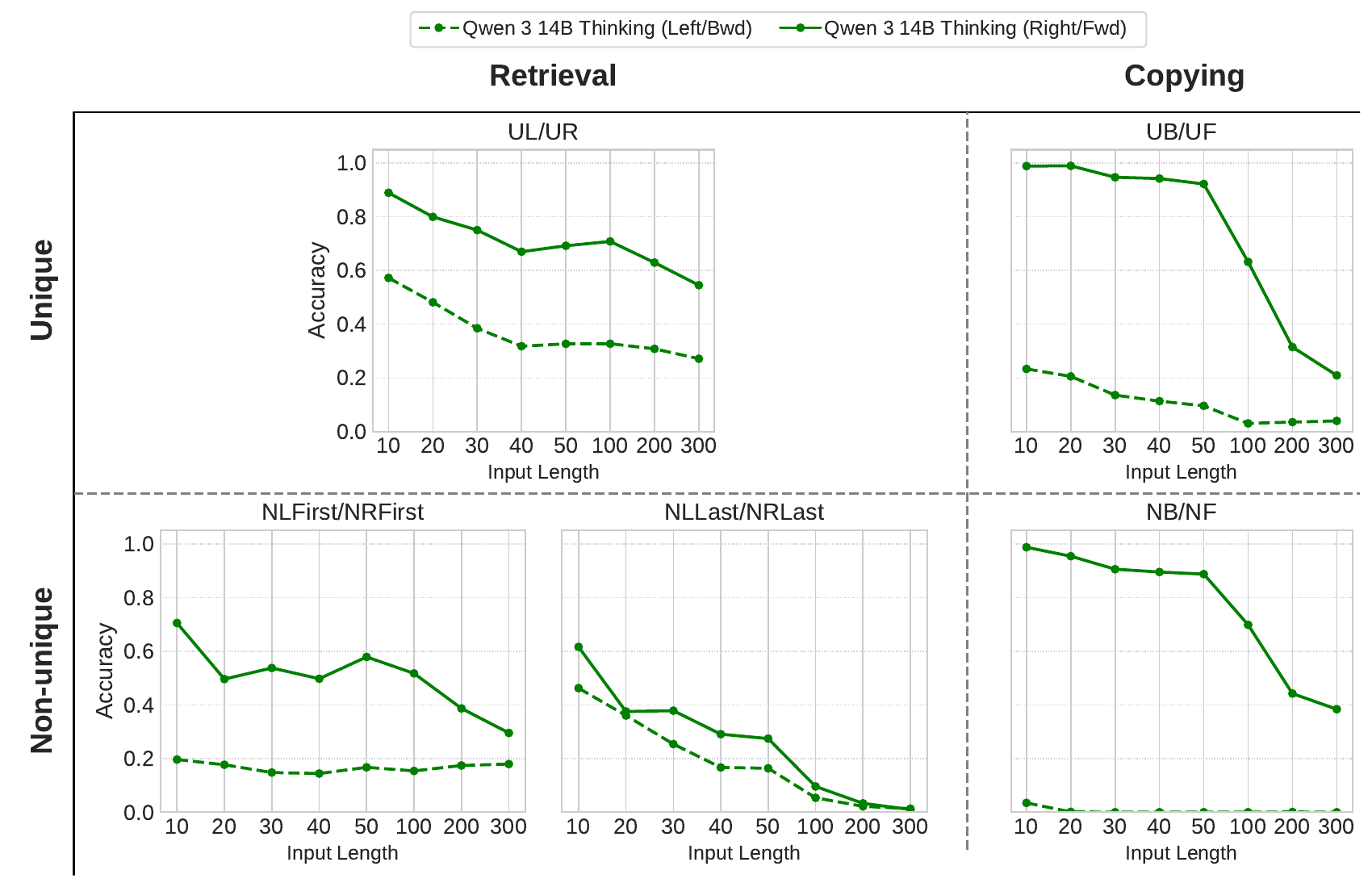}
  \caption{In context accuracy for \texttt{Qwen3‑14B-Thinking} across all our \textbf{word-level} tasks averaged over 3 seeds and prompt variations per task. The same patterns as discussed in Section \ref{subsec:elicit_prompting} are visible.}
  \label{fig:appendix_q3_word_inst_large}
\end{figure}

\subsection{Promping: Lorem Ipsum Copying (Section~\ref{subsec:copy_natural})}
\label{appendix:loremipsum}
\paragraph{Dataset generation.}  To stress‑test copying while keeping text realistic, we construct a 1,500‑example corpus of synthetic
\textit{Lorem Ipsum} paragraphs.  Generation starts from one base
paragraph produced by the \texttt{lorem} Python library and then applies light, probabilistic perturbations to increase the non-determinism of the texts.
\begin{itemize}
\item \textbf{Sentence count.}  Each sample contains exactly 45 sentences (average \textasciitilde{}350 tokens), enforced by iterative expansion/shuffling until the target length is reached.
\item \textbf{Word/discourse noise.}  With probability $0.3$ a random sentence is repeated up to four times; with probability $0.5$ random words inside a sentence are duplicated; with probability $1.0$ words within a sentence are shuffled (except the leading capital).
\item \textbf{Token budget.}  Sequences are truncated to \mbox{500 BPE} tokens to stay within context limits while preserving paragraph‑level coherence.
\end{itemize}

\paragraph{Prompt variants.} 
We identified successful prompts for copying naturalistic text as follows. We downloaded research papers put up on ArXiv in April 2025, and checked if the prompt results in perfect accuracy (not even a single mistake). Our motivation was that ArXiv text includes challenging text with equations, and, due to recency, should not have appeared in the LLM training data. More specifically, we segregated the sections of the papers into paragraphs, making sure the total length of the paragraph to be copied has close to 500 tokens (while ensuring that we do not end the paragraph in the middle of a sentence). We then pre-validated our prompts on a dataset containing $500$ samples each containing $500$ tokens, and even the smaller models we test -- \texttt{Llama3.1-8B} and \texttt{Qwen2.5-7B} perfectly copied the given text (not a single mistake).
We proceeded with our 3 minimal prompts, shown in Table \ref{tab:lorem-prompts}, which achieved perfect copying accuracy. Each of our prompts ends with an unresolved \texttt{<start>} token, prompting the model to emit a verbatim repetition of the preceding paragraph.

\textbf{Ratio of Ambiguous vs.\ Unambiguous Tokens.}
Table \ref{tab:bigram_counts} reports, for each model, the average number of ambiguous (tokens with multiple possible continuations) and unambiguous (tokens with a single continuation) tokens per paragraph, aggregated across all random seeds. While different model families employ distinct tokenizers, so their absolute counts vary, yet the relative balance between ambiguous and unambiguous tokens is remarkably consistent. Each paragraph contains roughly 500 tokens, but repetition of sentences (to make the dataset appropriate for Non-unique copying) reduces the number of unique token types to only about 70. However, amongst those, both token types occur frequently in our data and the fact that copying “glitch” can be traced \emph{exclusively} to unambiguous tokens is remarkable.
 
\begin{table}[ht]
\centering
\caption{Average per‐paragraph bigram counts by model}
\begin{tabular}{lrrr}
\toprule
Model name      & Ambiguous tokens & Unambiguous tokens & Total tokens \\
\midrule
\texttt{Llama3\_8B},  \texttt{Llama3\_70B}     &   31.97               &  39.65                  &      71.62  \\
\texttt{Qwen2.5\_7B}, \texttt{Qwen2.5\_32B} & 32.59 &    42.14              &       74.72             \\
\bottomrule
\end{tabular}
\label{tab:bigram_counts}
\end{table}

\paragraph{Example of a Hallucinated Chain.}

We provide here a truncated real sample (from the total 500 sequence length of the original paragraph) of an input sequence to be copied:

\begin{verbatim}
... neam que Non etincidunt dolorem tempora magnam.
\end{verbatim}

and the corresponding output we get for the same input is:

\begin{verbatim}
... neam que Non etincidunt dolorem tempora magnam velit neque. 
Non etincidunt dolorem tempora magnam.
\end{verbatim}

As can be seen, a long chain of hallucination starts at the point \texttt{tempora magnam}. However, earlier in the truncated history, there were cases where \texttt{magnam} was followed by \texttt{velit}.

Our algorithm proceeds with this example as follows. We first tokenize both sequences to get the following lists of tokens in the input and the output. Here, the symbol $\dot{G}$ denotes a leading space for any token:

\begin{verbatim}
Input:  [`Ġne', `am', `que', `.', `ĠNon', `Ġet', `inc', `idunt', `Ġdol', `orem', 
`Ġtemp', `ora', `Ġmagn', `am', `.`],

Output: [`Ġne', `am', `que', `.', `ĠNon', `Ġet', `inc', `idunt', `Ġdol', `orem',
         `Ġtemp', `ora', `Ġmagn', `am', `Ġvelit', `Ġne', `que', `.',     `ĠNon', 
         `Ġet', `inc', `idunt', `Ġdol', `orem', `Ġtemp', `ora', `Ġmagn', `am', `.`]
\end{verbatim}

The alignment we obtain between input and output tokens is:

\begin{verbatim}
[(`match' 0, 0), (`match' 1, 1), (`match' 2, 2), (`match' 3, 3),
 (`match' 4, 4), (`match' 5, 5), (`match' 6, 6), (`match' 7, 7),
 (`match' 8, 8), (`match' 9, 9), (`match' 10, 10), (`match' 11, 11),
 (`match' 12, 12), (`match' 13, 13), (`insert', None, 14), (`insert', None, 15),
 (`insert', None, 16), (`insert', None, 17), (`insert', None, 18),
 (`insert', None, 19), (`insert', None, 20), (`insert', None, 21),
 (`insert', None, 22), (`insert', None, 23), (`insert', None, 24),
 (`insert', None, 25), (`insert', None, 26), (`insert', None, 27)]
\end{verbatim}

After grouping the alignment by operation, we get:

\begin{verbatim}
[(`match' (0, 13)), (`insert', [14, 27))]
\end{verbatim}

Finally, we analyze the transition index between these groups, which here is the index $13$. The token at that index is the word \texttt{am}, which appeared earlier in the context and was followed by other tokens and \emph{not} a period (\texttt{.}). Since this transition index at \texttt{am} is ambiguous, we classify the source of this hallucination chain as \emph{ambiguous}.

\begin{table}[htpb]
\centering
\caption{Model Performance at Longer (2k/3k/5k) Input Lengths}
\label{tab:longer_lorem_ipsum}
\begin{tabular}{lcccccc}
\toprule
& \multicolumn{3}{c}{\textbf{Unambiguous}} & \multicolumn{3}{c}{\textbf{Ambiguous}} \\
\cmidrule(lr){2-4} \cmidrule(lr){5-7}
\textbf{Model} & \textbf{2k} & \textbf{3k} & \textbf{5k} & \textbf{2k} & \textbf{3k} & \textbf{5k} \\
\midrule
llama3-70B-Instruct  & 0.99 & 0.99 & 0.99 & 0.96 & 0.94 & 0.76 \\
llama3-8B-Instruct   & 0.96 & 0.98 & 0.99 & 0.89 & 0.69 & 0.44 \\
qwen2.5-32B-Instruct & 1.00 & 0.99 & 0.99 & 0.95 & 0.94 & 0.93 \\
qwen2.5-7B-Instruct  & 0.99 & 0.99 & 0.99 & 0.75 & 0.43 & 0.19 \\
\bottomrule
\end{tabular}
\end{table}

\begin{table}[h]
\centering
\small
\caption{Prompt templates for copying Lorem Ipsum styled text. \texttt{{{input}}}
stands for the test paragraph.  All templates terminate immediately
after the final \texttt{<start>} token, leaving the closing tags for the model to complete. The example paragraphs along with their copies are written in full in the templates for few-shot examples (omitted here for brevity)}
\label{tab:lorem-prompts}
\begin{tabular}{@{}p{0.18\linewidth}p{0.76\linewidth}@{}}
\toprule
\textbf{Name} & \textbf{Template body} \\
\midrule
\textsc{echo‑tag} & \texttt{RULE : Paragraphs are repeated **exactly** as is, whenever an Echo is seen.\newline\newline Paragraph:\newline <start> Once upon a time … <end>\newline Echo:\newline <start> Once upon a time … <end>\newline\newline …\newline Paragraph:\newline <start> {{input}} <end>\newline Echo:\newline <start>} \\
\addlinespace
\textsc{verbatim‑end} & \texttt{Paragraph:\newline\rule{\linewidth}{0.4pt}\newline <start> {{input}} <end>\newline\rule{\linewidth}{0.4pt}\newline\newline Provide a verbatim copy of the paragraph above exactly as it appears, with no modifications. Use the tag <end> to mark the end of the response.\newline\newline <start>} \\
\addlinespace
\textsc{p‑tag‑the\_end} & \texttt{Please repeat the following paragraph exactly as it is, without any changes:\newline\rule{\linewidth}{0.4pt}\newline {{input}}\newline\rule{\linewidth}{0.4pt}\newline\newline Put the repeated paragraph between <paragraph> and </paragraph> tags. Use the phrase "THE\_END" to mark the end of the response.\newline\newline <paragraph>} \\
\bottomrule
\end{tabular}
\end{table}

\subsubsection{Additional results for longer lengths}
\label{appendix:lorem_longer}
We conducted additional experiments with the same Lorem-Ipsum setup described above for a single seed at even longer input lengths (up to 5000 tokens) and report the results in Table \ref{tab:longer_lorem_ipsum}. We observe the same effects we see at lengths of 500, that is, a lower success rate for ambiguous bigrams than for unambiguous ones. These results on longer inputs align perfectly with and amplify our findings in Section \ref{subsec:copy_natural}.


\subsection{Prompting: Git Commit Revert vs Cherry-Pick (Section~\ref{subsec:copy_natural})}
\label{appendix:git}

\paragraph{Dataset generation.}  We synthesize commit logs by creating a randomized seven‑character hexadecimal hash, 
an action/topic message drawn from 12 verbs and 10 subsystems.
A \emph{history snippet} lists $\ell$ commits newest,$\to$ oldest.  We export four depths ($\ell=10,15, 20,25$) and three random seeds, yielding $4\times3\times1,500=18,000$ total examples. For each snippet consisting of this generated git commit history (listed newest $\to$ oldest) we store two reference answers:
\begin{itemize}

\item \texttt{revert:} the lines \emph{as shown} (newest,$\to$,oldest), matching the order a developer would pass to \texttt{git revert} (thus somewhat similar to forward copying)

\item \texttt{cherrypick:} the same lines reversed (oldest,$\to$,newest), the order expected by \texttt{git cherry‑pick}. (thus somewhat similar to backwards copying)

\end{itemize}

All files are distributed in \texttt{jsonl} format with keys
\texttt{snippet}, \texttt{revert}, and \texttt{cherrypick}.

\paragraph{Prompt variants.}  We adopt exactly \textbf{two} natural‑language templates; one for the cherry‑pick orientation, one for the revert orientation.  Each prompt describes the task in plain English, includes up to three few‑shot examples (hand‑crafted, $\ell=6$), shows the new history under a \texttt{=== NEW HISTORY ===} delimiter, and terminates with \texttt{=== ANSWER ===},
where the model must emit the ordered commit list only. An example is given below.

\paragraph{Prompt template examples - Cherry Pick}
\begin{verbatim}
Below is a part of a Git commit history (newest at the top, oldest at the bottom).

Task  
List every commit line that must be cherry‑picked onto the release branch.  
 Output them **from oldest to newest** (the order `git cherry‑pick` expects).  
One complete commit line per output line.  
Do **not** output anything else.

{{ few_shot_block }}
=== NEW HISTORY ===
{{ snippet }}
=== ANSWER ===
<start>    
\end{verbatim}

\paragraph{Prompt template examples - Revert}
\begin{verbatim}
Below is part of a Git commit history (newest at the top, oldest at the bottom).

Task  
List every commit line that must be reverted to roll the codebase back.
Output the lines **from newest to oldest**.
One complete commit line per output line.  
Do **not** output anything else.

{{ few_shot_block }}
=== NEW HISTORY ===
{{ snippet }}
=== ANSWER ===
<start>   
\end{verbatim}

Because correct output demands global re‑ordering rather than token‑level copy, we found no benefit in adding separator toggles or mathematical re‑statements.

\paragraph{Model coverage.}  Preliminary sweeps revealed that \textbf{Qwen2.5 7B/32B} models performed really poorly $ \leq 10\%$ on this task, \emph{even at further shorter depths} $\ell=5$, hence we exclude them from our analysis.  Performance degrades further with longer histories, indicating a difficulty with list reversal rather than context window length. 

\paragraph{Additional prompting details for Instruction-tuned model variants.}

We keep the prompts very similar to the completion models and mainly change the output formatting instructions (i.e., \texttt{Put the answer between <target>, <\textbackslash target> tags}). Additionally, both \texttt{Qwen} and \texttt{Llama} family of models require and an additional \texttt{SYSTEM PROMPT}, which we provide as follows:

\begin{verbatim}
You are a very careful and precise assistant. You always follow the instructions
and solve tasks yourself. You never generate code.
\end{verbatim}

With our initial experiments, we observed that unless we specify explicitly, instruction-tuned model variants generated code to solve both copying and retrieval tasks.


\section{Fine-Tuning Details (Section~\ref{sec:finetune})}
\label{appendix:finetune}
In general, the random seeds only affect the data shuffling and the positional offset during fine-tuning; the rest of the things remain unaffected.

\subsection{Retrieval Tasks - UL/UR}

\paragraph{Examples.} Some sample inputs with the query token (bolded) and their UL/UR answers

\begin{table}[!h]
\centering
\begin{tabular}{lll}
\toprule
\textbf{Input}                                                     & \textbf{UL answer} & \textbf{UR answer} \\
\midrule
\texttt{ns0w6\underline{u}p9v8||}\textbf{u}                                           & 6                  & p                  \\
\texttt{qyw28\underline{3}zd9411w8||}\textbf{3}                                       & 8                  & z                  \\
\bottomrule
\end{tabular}
\end{table}

\paragraph{Dataset.} The training/test dataset statistics for the UL/UR tasks are as follows:

\begin{table}[!h]
\centering
\begin{tabular}{lll}
\toprule
               & \textbf{UL}             & \textbf{UR}             \\
\midrule
\# train/val/test samples            & 45k/5k/5k      & 45k/5k/5k       \\
train/val sample lengths [min, max]        & [4, 100]       & [4, 100]       \\
test sample lengths [min, max]        & [101, 200]       & [101, 200]       \\
\bottomrule
\end{tabular}
\end{table}

\paragraph{Hyperparameters.} The table below contains all the hyperparameters for fine-tuning models on the UL/UR dataset. 

\begin{table}[!h]
\centering
\begin{tabular}{ll}
\toprule
\textbf{Hyperparameter}              & \textbf{Value}            \\
\midrule
Model name                  & \verb|GPT-2-XL|\footnote{https://huggingface.co/karpathy/gpt2\_1558M\_final4\_hf} \\
Tokenization    & Character-level \\
Maximum train sequence length             & 100 \\
Maximum inference sequence length            & 200 \\
\# Epochs           & 30 \\
Batch Size      & 64 \\
Learning rate (AdamW)               & 1e-5 \\
Weight decay (AdamW)                & 0.01 \\
Warmup ratio                        & 0.15 \\
Random seeds                         & 3, 71, 92, 435, 541, 591, 24050, 29214 \\
Maximum gradient norm               & 1.0 \\
\bottomrule
\end{tabular}
\end{table}

\subsection{Retrieval Tasks - NLFirst/NRFirst/NLLast/NRLast}

\paragraph{Examples.} Some sample inputs with the query token (char after `||') and their NLFirst/NRFirst/NLLast/NRLast targets.

\begin{table}[!h]
\centering
\begin{tabular}{lllll}
\toprule
\textbf{Input}                                                     & \textcolor{blue}{\textbf{NLFirst}} & \textbf{\textcolor{blue}{NRFirst}} & \textbf{\textcolor{red}{NLLast}} & \textbf{\textcolor{red}{NRLast}} \\
\midrule
{\ttfamily q5\textcolor{blue}{o}0o8b6v5\textcolor{red}{o}3||\bfseries o}  & 5                & 0                & 5               & 3               \\
{\ttfamily c8\textcolor{blue}{r}5r5r3r6\textcolor{red}{r}0||\bfseries r}  & 8                & 5                & 6               & 0               \\
\bottomrule
\end{tabular}
\end{table}

\paragraph{Dataset.} The training/test dataset statistics for the NLFirst/NRFirst/NLLast/NRLast tasks are as follows:

\begin{table}[!h]
\centering
\begin{tabular}{ll}
\toprule
               & \textbf{NLFirst/NRFirst/NLLast/NRLast}              \\
\midrule
\# train/val/test samples            & 100k/5k/4.7k     \\
train/val sample lengths [min, max]        & [4, 100]\\
test sample lengths [min, max]        & [101, 200] \\
\bottomrule
\end{tabular}
\end{table}

\paragraph{Hyperparameters.} The table below contains all the hyperparameters for fine-tuning models on the NLFirst/NRFirst/NLLast/NRLast dataset. 

\begin{table}[!h]
\centering
\begin{tabular}{ll}
\toprule
\textbf{Hyperparameter}              & \textbf{Value}            \\
\midrule
Model name                  & \verb|GPT-2-XL| \\
Tokenization                & Character-level \\
Maximum train sequence length       & 100 \\
Maximum inference sequence length   & 200 \\
\# Epochs                   & 15 \\
Batch Size                  & 64 \\
Learning rate (AdamW)       & 1e-5 \\
Weight decay (AdamW)        & 0.01 \\
Warmup ratio                & 0.15 \\
Random seeds                & 3, 47, 71, 100 \\
Maximum gradient norm       & 1.0 \\
\bottomrule
\end{tabular}
\end{table}

\subsection{Copying Tasks - UF/UB/NF/NB}

\paragraph{Examples.} Some sample inputs and their UF/UB/NF/NB targets.

\begin{table}[!h]
\centering
\begin{tabular}{lll}
\toprule
\textbf{Input}          & \textbf{Config} & \textbf{Target}      \\
\midrule
\texttt{Syb5DEHihO>}    & UF              & \texttt{Syb5DEHihO}       \\
\texttt{Syb5DEHihO>}    & UB              & \texttt{OhiHED5byS}       \\
\texttt{LnvTs1qgMt>}    & UF              & \texttt{LnvTs1qgMt}       \\
\texttt{LnvTs1qgMt>}    & UB              & \texttt{tMgq1sTvnL}       \\
\texttt{9975813713>}    & NF              & \texttt{9975813713}       \\
\texttt{9975813713>}    & NB              & \texttt{3173185799}       \\
\texttt{525671167>}     & NF              & \texttt{525671167}        \\
\texttt{525671167>}     & NB              & \texttt{761176525}        \\
\bottomrule
\end{tabular}
\end{table}

\paragraph{Dataset.} The training/test dataset statistics for the UF/UB/NF/NB tasks are as follows:

\begin{table}[!h]
\centering
\begin{tabular}{ll}
\toprule
               & \textbf{UF/UB/NF/NB}              \\
\midrule
\# train/val/test samples            & 50k/5k/5k     \\
train/val sample lengths [min, max]        & [4, 100]\\
test sample lengths [min, max]        & [101, 200] \\
\bottomrule
\end{tabular}
\end{table}

The train/val sample lengths are considered without the delimiter but the combined input/target length is 100, not just the input length.

\paragraph{Hyperparameters.} The table below contains all the hyperparameters for fine-tuning models on the UF/UB/NF/NB dataset. 

\begin{longtable}{ll}
\toprule
\textbf{Hyperparameter}        & \textbf{Value}       \\
\midrule
\endfirsthead
\toprule
\textbf{Hyperparameter}        & \textbf{Value}       \\
\midrule
\endhead
Model name                  & \verb|GPT-2-XL| \\
Tokenization                & Character-level \\
Maximum train sequence length       & 100 \\
Maximum inference sequence length   & 200 \\
\# Epochs                   & 15 \\
Batch Size                  & 64 \\
Learning rate (AdamW)       & 1e-5 \\
Weight decay (AdamW)        & 0.01 \\
Warmup ratio                & 0.15 \\
Random seeds                & 3, 33, 71, 77, 91 \\
Maximum gradient norm       & 1.0 \\
\bottomrule
\end{longtable}

\subsection{Results}

We provide the training loss curves for each fine-tuning task in Figure \ref{fig:appendix_ape}. 
\begin{figure}[!htbp]
  \centering
  \includegraphics[width=.8\linewidth]{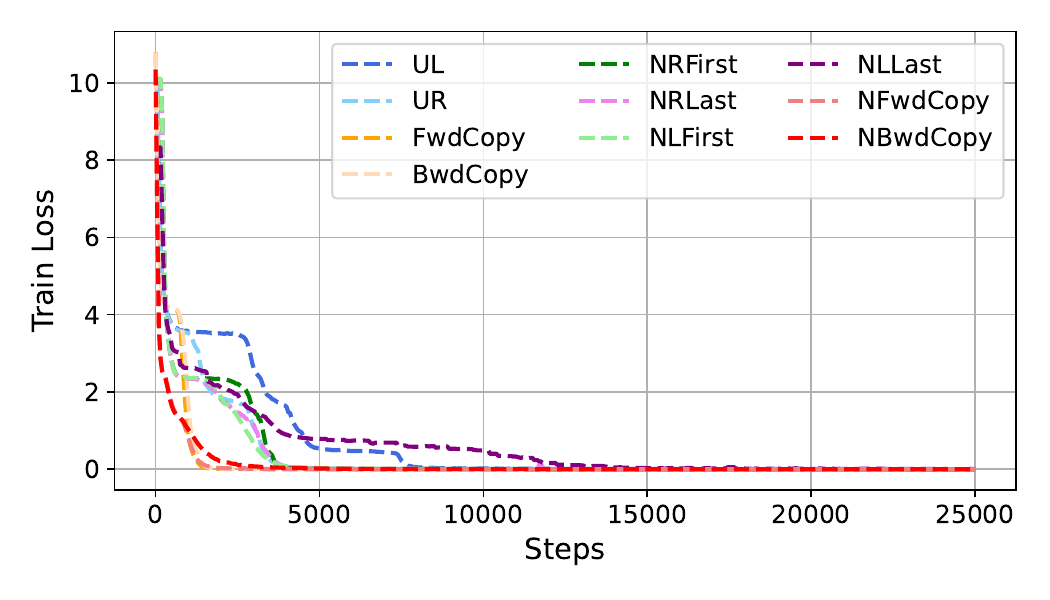}
  \caption{Averaged training loss curve for \texttt{GPT-2-XL} model across all our tasks. For all the tasks, we achieve 0 training loss. Yet, the generalization behavior varies in line with theoretical predictions (Figure~\ref{fig:finetune}).
  }
  \label{fig:appendix_ape}
\end{figure}

\section{Further Details on Mechanistic Interpretability Analysis (Section~\ref{subsec:mechanistic})}
\label{appendix:mech_interp}
\paragraph{Dataset.}
We randomly generate 100 strings containing letters of the English alphabet in lowercase. To prompt base models to perform the tasks, we add 10 examples before the actual query for few-shot learning. We tokenize each symbol individually, adding a preceding white space to each letter to get a more natural tokenization. We add a BOS token in the beginning, separate few-shot examples with a dot, and separate the input and output of each example with a comma. Fine-tuned models use the same dataset, except that tokenization and the choice of separators are inherited from the fine-tuning setup, and they also have no BOS token and do not require few-shot examples.

\paragraph{Patching details.}
To remove an induction head, we use path patching methodology, widely adopted in the literature \citep{hanna2023does, wang2022interpretability}. An induction head can be viewed as a path consisting of two edges connecting different positions inside an attention head: one edge connecting two adjacent positions in the input string, and one edge connecting the second of these with a position in the target string. We remove paths inside attention heads following \citet{bakalova2025contextualize}.
Removing all induction heads is thus done in two forward passes:
\begin{itemize}
    \item In all heads, replace the V activation of a token at position $P$ in the input string with zero whenever it is queried by the token in position $P + 1$ in the input string. In the same forward pass, for each head, save K and V activations at position $P + 1$ in the input string.
    \textit{Intuitively, this intervention prepares $K$ and $V$ activations at each position that carry no information about the preceding token.}
    
    \item In a second forward pass, in each head, replace the K activation of the token in position $P + 1$ in the \emph{input} string with the activation saved in the previous step whenever it is queried by a token in position $P$ in the \emph{target} string.
    \textit{Intuitively, this intervention ensures that a head cannot attend to a position on the basis of information about the token immediately preceding the key position. That is precisely the behavior of induction heads \citep{olsson2022context}, i.e., induction heads are removed by this intervention.}
\end{itemize}
When removing anti-induction heads, the first step is the same, but the second step is different: In each heads, we replace the V activation of the token in position $P + 1$ in the \emph{input} string with the activation saved in the first step whenever it is queried by a token in position $P + 1$ in the \emph{target} string.
\textit{Intuitively, this intervention ensures that an attention head attending to a prior occurrence of the same token cannot retrieve the immediately preceding token.}

The above description applies to Unique Forward Copy; in Unique Backward Copy, the position indices in the target string are appropriately reversed.

\paragraph{Layer localization.}  

\begin{figure}[htbp]
  \centering
  \includegraphics[width=\linewidth]{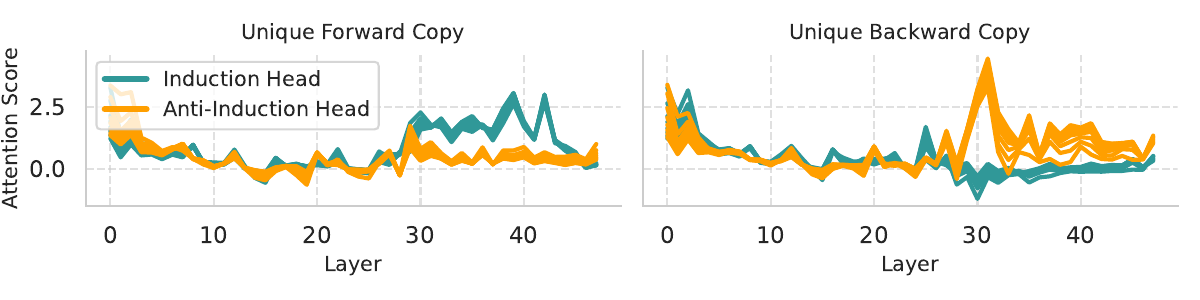}
  \caption{Difference in sum of attention scores per layer between fine-tuned and base models. The higher the difference, the more amplified is this head in this layer during fine-tuning.}\label{fig:layer-localization}
\end{figure}
In fine-tuning, useful heads may migrate upward in depth (Figure~\ref{fig:layer-localization}). In fine-tuned models, induction (resp.\ anti-induction) strength peaks in the top-third layers for forward (resp.\ backward) copy. Pretrained models exhibit the same pattern but with induction heads situated later than anti-induction heads, mirroring their superior
      robustness.

\paragraph{Relation to Retrieval Heads.}
\label{appendix:remark_retrieval_heads}
Recent work by \citet{wu2025retrieval} introduced the notion of a \textit{retrieval heads}. The experiments to find these retrieval heads in \citet{wu2025retrieval} focus primarily on zero-shot settings compared to the typical few shot settings for induction head studies \citep{elhage2021mathematical, olsson2022context, song2025out, crosbie2025inductionheadsessentialmechanism}  including our setup. These retrieval heads are found by looking at tokens being retrieved from Needle in a Haystack like test beds \citep{liu2024lost}. There is a needle, or a short answer span that needs to be copied in the output given a query. These answer spans / needles consist of tokens that are sufficiently unique in the haystack (context) and thus cannot be figured out on the basis of semantics alone. A head is called a retrieval head if it pays maximal attention to tokens being copied from a needle more number of times than any other head. For example, if the number of tokens to be copied from a needle is 10, and the maximal retrieval score that was calculated came out to be 0.7, that would imply that the head in question paid the maximal attention to the 7 out of the 10 tokens while they were being copied. Thus even though Retrieval heads are highly connected to the way we calculate our induction and anti-induction circuits, the exact methodology is different. However retrieval heads should have a strong connection with the induction heads, when they perform forward copying, and the anti-induction heads when they perform reverse copying. The exact correlation however is left for future work.

\section{Compute Resources}
\label{appendix_compute}
We run all the prompting experiments using 4B/7B/8B parameter models on a single H100 GPU and 14B/32B/70B parameter models on 4xH100 GPUs with full precision without any quantization. Our fine-tuning experiments also use a single H100 GPU for all the experiments. All of our experiments should be reproducible with approximately 2500 H100 GPUh.
\clearpage

\newpage

\section*{NeurIPS Paper Checklist}

\begin{enumerate}

\item {\bf Claims}
    \item[] Question: Do the main claims made in the abstract and introduction accurately reflect the paper's contributions and scope?
    \item[] Answer: \answerYes{} 
    \item[] Justification: The abstract and introduction assert that (1) fundamental length-generalization limits of small transformers may persist in large‐scale pretrained LLMs (2) pretrained models exhibit a notable induction-vs-anti-induction asymmetry; (3) this asymmetry can be traced to the relative strength of underlying attention circuits; and (4) targeted fine-tuning can rebalance these circuits and mitigate architectural limits, but does not entirely eliminate them. Through a mixture of synthetic as well as real world experiments in section \ref{subsec:elicit_prompting}, \ref{subsec:copy_natural} we show point 2. Through our finetuning experiments in section \ref{sec:finetune} we justify point 4. We justify point 3 with experiments and discussion in section \ref{subsec:mechanistic}. Finally in totality the cumulation of all of our experiments and theoretical results justifies point 1. 
    \item[] Guidelines:
    \begin{itemize}
        \item The answer NA means that the abstract and introduction do not include the claims made in the paper.
        \item The abstract and/or introduction should clearly state the claims made, including the contributions made in the paper and important assumptions and limitations. A No or NA answer to this question will not be perceived well by the reviewers. 
        \item The claims made should match theoretical and experimental results, and reflect how much the results can be expected to generalize to other settings. 
        \item It is fine to include aspirational goals as motivation as long as it is clear that these goals are not attained by the paper. 
    \end{itemize}

\item {\bf Limitations}
    \item[] Question: Does the paper discuss the limitations of the work performed by the authors?
    \item[] Answer: \answerYes{} 
    \item[] Justification: We discuss limitations of our work in Section \ref{sec:discussion}.
    \item[] Guidelines:
    \begin{itemize}
        \item The answer NA means that the paper has no limitation while the answer No means that the paper has limitations, but those are not discussed in the paper. 
        \item The authors are encouraged to create a separate "Limitations" section in their paper.
        \item The paper should point out any strong assumptions and how robust the results are to violations of these assumptions (e.g., independence assumptions, noiseless settings, model well-specification, asymptotic approximations only holding locally). The authors should reflect on how these assumptions might be violated in practice and what the implications would be.
        \item The authors should reflect on the scope of the claims made, e.g., if the approach was only tested on a few datasets or with a few runs. In general, empirical results often depend on implicit assumptions, which should be articulated.
        \item The authors should reflect on the factors that influence the performance of the approach. For example, a facial recognition algorithm may perform poorly when image resolution is low or images are taken in low lighting. Or a speech-to-text system might not be used reliably to provide closed captions for online lectures because it fails to handle technical jargon.
        \item The authors should discuss the computational efficiency of the proposed algorithms and how they scale with dataset size.
        \item If applicable, the authors should discuss possible limitations of their approach to address problems of privacy and fairness.
        \item While the authors might fear that complete honesty about limitations might be used by reviewers as grounds for rejection, a worse outcome might be that reviewers discover limitations that aren't acknowledged in the paper. The authors should use their best judgment and recognize that individual actions in favor of transparency play an important role in developing norms that preserve the integrity of the community. Reviewers will be specifically instructed to not penalize honesty concerning limitations.
    \end{itemize}

\item {\bf Theory assumptions and proofs}
    \item[] Question: For each theoretical result, does the paper provide the full set of assumptions and a complete (and correct) proof?
    \item[] Answer: \answerYes{} 
    \item[] Justification: We base our theory on the extension of \cite{huangformal}. In the main body of the paper we provide informal theorems and intuitions behind our ideas (Section \ref{sec:all_theory}). Detailed definitions, proofs, and remarks follow in Appendix \ref{appendix:theory_details}.
    \item[] Guidelines:
    \begin{itemize}
        \item The answer NA means that the paper does not include theoretical results. 
        \item All the theorems, formulas, and proofs in the paper should be numbered and cross-referenced.
        \item All assumptions should be clearly stated or referenced in the statement of any theorems.
        \item The proofs can either appear in the main paper or the supplemental material, but if they appear in the supplemental material, the authors are encouraged to provide a short proof sketch to provide intuition. 
        \item Inversely, any informal proof provided in the core of the paper should be complemented by formal proofs provided in appendix or supplemental material.
        \item Theorems and Lemmas that the proof relies upon should be properly referenced. 
    \end{itemize}

    \item {\bf Experimental result reproducibility}
    \item[] Question: Does the paper fully disclose all the information needed to reproduce the main experimental results of the paper to the extent that it affects the main claims and/or conclusions of the paper (regardless of whether the code and data are provided or not)?
    \item[] Answer: \answerYes{} 
    \item[] Justification: We provide technical details of our tasks, in-context and fine-tuning experiments in Section \ref{sec:experiments}. Specifically, we address the setup of the copying tasks in subsection \ref{subsec:copy_natural} and fine-tuning setup in subsection \ref{sec:finetune}. Additional details on prompting and fine-tuning are provided in the Appendix \ref{appendix:prompting_details} and \ref{appendix:finetune} respectively.
    \item[] Guidelines:
    \begin{itemize}
        \item The answer NA means that the paper does not include experiments.
        \item If the paper includes experiments, a No answer to this question will not be perceived well by the reviewers: Making the paper reproducible is important, regardless of whether the code and data are provided or not.
        \item If the contribution is a dataset and/or model, the authors should describe the steps taken to make their results reproducible or verifiable. 
        \item Depending on the contribution, reproducibility can be accomplished in various ways. For example, if the contribution is a novel architecture, describing the architecture fully might suffice, or if the contribution is a specific model and empirical evaluation, it may be necessary to either make it possible for others to replicate the model with the same dataset, or provide access to the model. In general. releasing code and data is often one good way to accomplish this, but reproducibility can also be provided via detailed instructions for how to replicate the results, access to a hosted model (e.g., in the case of a large language model), releasing of a model checkpoint, or other means that are appropriate to the research performed.
        \item While NeurIPS does not require releasing code, the conference does require all submissions to provide some reasonable avenue for reproducibility, which may depend on the nature of the contribution. For example
        \begin{enumerate}
            \item If the contribution is primarily a new algorithm, the paper should make it clear how to reproduce that algorithm.
            \item If the contribution is primarily a new model architecture, the paper should describe the architecture clearly and fully.
            \item If the contribution is a new model (e.g., a large language model), then there should either be a way to access this model for reproducing the results or a way to reproduce the model (e.g., with an open-source dataset or instructions for how to construct the dataset).
            \item We recognize that reproducibility may be tricky in some cases, in which case authors are welcome to describe the particular way they provide for reproducibility. In the case of closed-source models, it may be that access to the model is limited in some way (e.g., to registered users), but it should be possible for other researchers to have some path to reproducing or verifying the results.
        \end{enumerate}
    \end{itemize}

\item {\bf Open access to data and code}
    \item[] Question: Does the paper provide open access to the data and code, with sufficient instructions to faithfully reproduce the main experimental results, as described in supplemental material?
    \item[] Answer: \answerYes{} 
    \item[] Justification: We attach the code and data used all across our experiments in the supplementary section. We also have provided sufficient details all throughout the paper in each section to help the readers reproduce any result. 
    \item[] Guidelines:
    \begin{itemize}
        \item The answer NA means that paper does not include experiments requiring code.
        \item Please see the NeurIPS code and data submission guidelines (\url{https://nips.cc/public/guides/CodeSubmissionPolicy}) for more details.
        \item While we encourage the release of code and data, we understand that this might not be possible, so “No” is an acceptable answer. Papers cannot be rejected simply for not including code, unless this is central to the contribution (e.g., for a new open-source benchmark).
        \item The instructions should contain the exact command and environment needed to run to reproduce the results. See the NeurIPS code and data submission guidelines (\url{https://nips.cc/public/guides/CodeSubmissionPolicy}) for more details.
        \item The authors should provide instructions on data access and preparation, including how to access the raw data, preprocessed data, intermediate data, and generated data, etc.
        \item The authors should provide scripts to reproduce all experimental results for the new proposed method and baselines. If only a subset of experiments are reproducible, they should state which ones are omitted from the script and why.
        \item At submission time, to preserve anonymity, the authors should release anonymized versions (if applicable).
        \item Providing as much information as possible in supplemental material (appended to the paper) is recommended, but including URLs to data and code is permitted.
    \end{itemize}

\item {\bf Experimental setting/details}
    \item[] Question: Does the paper specify all the training and test details (e.g., data splits, hyperparameters, how they were chosen, type of optimizer, etc.) necessary to understand the results?
    \item[] Answer: \answerYes{} 
    \item[] Justification: We have provided exact hyperparameters in the appendix, along with all the relevant training and test details for all experiments considering training in the appendix in sections \ref{appendix:finetune}, \ref{appendix:fromscratch}. 
    \item[] Guidelines:
    \begin{itemize}
        \item The answer NA means that the paper does not include experiments.
        \item The experimental setting should be presented in the core of the paper to a level of detail that is necessary to appreciate the results and make sense of them.
        \item The full details can be provided either with the code, in appendix, or as supplemental material.
    \end{itemize}

\item {\bf Experiment statistical significance}
    \item[] Question: Does the paper report error bars suitably and correctly defined or other appropriate information about the statistical significance of the experiments?
    \item[] Answer: \answerYes{} 
    \item[] Justification: We provide exact error bars in our experiments whenever relevant. We use at least 3 seeds for each experiment of ours to confirm their validity. 
    \item[] Guidelines:
    \begin{itemize}
        \item The answer NA means that the paper does not include experiments.
        \item The authors should answer "Yes" if the results are accompanied by error bars, confidence intervals, or statistical significance tests, at least for the experiments that support the main claims of the paper.
        \item The factors of variability that the error bars are capturing should be clearly stated (for example, train/test split, initialization, random drawing of some parameter, or overall run with given experimental conditions).
        \item The method for calculating the error bars should be explained (closed form formula, call to a library function, bootstrap, etc.)
        \item The assumptions made should be given (e.g., Normally distributed errors).
        \item It should be clear whether the error bar is the standard deviation or the standard error of the mean.
        \item It is OK to report 1-sigma error bars, but one should state it. The authors should preferably report a 2-sigma error bar than state that they have a 96\% CI, if the hypothesis of Normality of errors is not verified.
        \item For asymmetric distributions, the authors should be careful not to show in tables or figures symmetric error bars that would yield results that are out of range (e.g. negative error rates).
        \item If error bars are reported in tables or plots, The authors should explain in the text how they were calculated and reference the corresponding figures or tables in the text.
    \end{itemize}

\item {\bf Experiments compute resources}
    \item[] Question: For each experiment, does the paper provide sufficient information on the computer resources (type of compute workers, memory, time of execution) needed to reproduce the experiments?
    \item[] Answer:\answerYes{} 
    \item[] Justification: We provide details of the compute resources used for all our experiments in our Appendix in Section \ref{appendix_compute}.
    \item[] Guidelines:
    \begin{itemize}
        \item The answer NA means that the paper does not include experiments.
        \item The paper should indicate the type of compute workers CPU or GPU, internal cluster, or cloud provider, including relevant memory and storage.
        \item The paper should provide the amount of compute required for each of the individual experimental runs as well as estimate the total compute. 
        \item The paper should disclose whether the full research project required more compute than the experiments reported in the paper (e.g., preliminary or failed experiments that didn't make it into the paper). 
    \end{itemize}
    
\item {\bf Code of ethics}
    \item[] Question: Does the research conducted in the paper conform, in every respect, with the NeurIPS Code of Ethics \url{https://neurips.cc/public/EthicsGuidelines}?
    \item[] Answer: \answerYes{} 
    \item[] Justification: There is no deviation from the NeurIPS Code of Ethics guidelines mentioned. We conform to all the guidelines.
    \item[] Guidelines:
    \begin{itemize}
        \item The answer NA means that the authors have not reviewed the NeurIPS Code of Ethics.
        \item If the authors answer No, they should explain the special circumstances that require a deviation from the Code of Ethics.
        \item The authors should make sure to preserve anonymity (e.g., if there is a special consideration due to laws or regulations in their jurisdiction).
    \end{itemize}

\item {\bf Broader impacts}
    \item[] Question: Does the paper discuss both potential positive societal impacts and negative societal impacts of the work performed?
    \item[] Answer: \answerYes{} 
    \item[] Justification: Our paper as such connects theory of Transformers and contextualizes them for LLMs. Although we do not foresee any immediate negative positive societal impacts of our work. Long term, we hope that our insights that can be used to make better LLMs that when deployed in some critical domains can avoid the kind of errors we expose in LLMs
    \item[] Guidelines:
    \begin{itemize}
        \item The answer NA means that there is no societal impact of the work performed.
        \item If the authors answer NA or No, they should explain why their work has no societal impact or why the paper does not address societal impact.
        \item Examples of negative societal impacts include potential malicious or unintended uses (e.g., disinformation, generating fake profiles, surveillance), fairness considerations (e.g., deployment of technologies that could make decisions that unfairly impact specific groups), privacy considerations, and security considerations.
        \item The conference expects that many papers will be foundational research and not tied to particular applications, let alone deployments. However, if there is a direct path to any negative applications, the authors should point it out. For example, it is legitimate to point out that an improvement in the quality of generative models could be used to generate deepfakes for disinformation. On the other hand, it is not needed to point out that a generic algorithm for optimizing neural networks could enable people to train models that generate Deepfakes faster.
        \item The authors should consider possible harms that could arise when the technology is being used as intended and functioning correctly, harms that could arise when the technology is being used as intended but gives incorrect results, and harms following from (intentional or unintentional) misuse of the technology.
        \item If there are negative societal impacts, the authors could also discuss possible mitigation strategies (e.g., gated release of models, providing defenses in addition to attacks, mechanisms for monitoring misuse, mechanisms to monitor how a system learns from feedback over time, improving the efficiency and accessibility of ML).
    \end{itemize}
    
\item {\bf Safeguards}
    \item[] Question: Does the paper describe safeguards that have been put in place for responsible release of data or models that have a high risk for misuse (e.g., pretrained language models, image generators, or scraped datasets)?
    \item[] Answer: \answerNA{}.
    \item[] Justification: We do not foresee any misuse of the new benchmarks we release to expose some limitations in LLMs, and thus do not see the need to provide any safeguards related to the same. 
    \item[] Guidelines:
    \begin{itemize}
        \item The answer NA means that the paper poses no such risks.
        \item Released models that have a high risk for misuse or dual-use should be released with necessary safeguards to allow for controlled use of the model, for example by requiring that users adhere to usage guidelines or restrictions to access the model or implementing safety filters. 
        \item Datasets that have been scraped from the Internet could pose safety risks. The authors should describe how they avoided releasing unsafe images.
        \item We recognize that providing effective safeguards is challenging, and many papers do not require this, but we encourage authors to take this into account and make a best faith effort.
    \end{itemize}

\item {\bf Licenses for existing assets}
    \item[] Question: Are the creators or original owners of assets (e.g., code, data, models), used in the paper, properly credited and are the license and terms of use explicitly mentioned and properly respected?
    \item[] Answer: \answerYes{}
    \item[] Justification: We cite the creators of models and code we use.
    \item[] Guidelines:
    \begin{itemize}
        \item The answer NA means that the paper does not use existing assets.
        \item The authors should cite the original paper that produced the code package or dataset.
        \item The authors should state which version of the asset is used and, if possible, include a URL.
        \item The name of the license (e.g., CC-BY 4.0) should be included for each asset.
        \item For scraped data from a particular source (e.g., website), the copyright and terms of service of that source should be provided.
        \item If assets are released, the license, copyright information, and terms of use in the package should be provided. For popular datasets, \url{paperswithcode.com/datasets} has curated licenses for some datasets. Their licensing guide can help determine the license of a dataset.
        \item For existing datasets that are re-packaged, both the original license and the license of the derived asset (if it has changed) should be provided.
        \item If this information is not available online, the authors are encouraged to reach out to the asset's creators.
    \end{itemize}

\item {\bf New assets}
    \item[] Question: Are new assets introduced in the paper well documented and is the documentation provided alongside the assets?
    \item[] Answer: \answerYes{}
    \item[] Justification: We contribute some new small benchmarks related to better understand limitations in LLMs, and attach the benchmarks, along with how to re-generate them in the supplementary material. We also attach all the code used to run our experiments in the supplementary section. 
    \item[] Guidelines:
    \begin{itemize}
        \item The answer NA means that the paper does not release new assets.
        \item Researchers should communicate the details of the dataset/code/model as part of their submissions via structured templates. This includes details about training, license, limitations, etc. 
        \item The paper should discuss whether and how consent was obtained from people whose asset is used.
        \item At submission time, remember to anonymize your assets (if applicable). You can either create an anonymized URL or include an anonymized zip file.
    \end{itemize}

\item {\bf Crowdsourcing and research with human subjects}
    \item[] Question: For crowdsourcing experiments and research with human subjects, does the paper include the full text of instructions given to participants and screenshots, if applicable, as well as details about compensation (if any)? 
    \item[] Answer: \answerNA{}.
    \item[] Justification: We conduct no such experiments.
    \item[] Guidelines:
    \begin{itemize}
        \item The answer NA means that the paper does not involve crowdsourcing nor research with human subjects.
        \item Including this information in the supplemental material is fine, but if the main contribution of the paper involves human subjects, then as much detail as possible should be included in the main paper. 
        \item According to the NeurIPS Code of Ethics, workers involved in data collection, curation, or other labor should be paid at least the minimum wage in the country of the data collector. 
    \end{itemize}

\item {\bf Institutional review board (IRB) approvals or equivalent for research with human subjects}
    \item[] Question: Does the paper describe potential risks incurred by study participants, whether such risks were disclosed to the subjects, and whether Institutional Review Board (IRB) approvals (or an equivalent approval/review based on the requirements of your country or institution) were obtained?
    \item[] Answer: \answerNA{}.
    \item[] Justification: We conduct no such experiments.
    \item[] Guidelines:
    \begin{itemize}
        \item The answer NA means that the paper does not involve crowdsourcing nor research with human subjects.
        \item Depending on the country in which research is conducted, IRB approval (or equivalent) may be required for any human subjects research. If you obtained IRB approval, you should clearly state this in the paper. 
        \item We recognize that the procedures for this may vary significantly between institutions and locations, and we expect authors to adhere to the NeurIPS Code of Ethics and the guidelines for their institution. 
        \item For initial submissions, do not include any information that would break anonymity (if applicable), such as the institution conducting the review.
    \end{itemize}

\item {\bf Declaration of LLM usage}
    \item[] Question: Does the paper describe the usage of LLMs if it is an important, original, or non-standard component of the core methods in this research? Note that if the LLM is used only for writing, editing, or formatting purposes and does not impact the core methodology, scientific rigorousness, or originality of the research, declaration is not required.
    \item[] Answer: \answerYes{}
    \item[] Justification: Our paper is about studying LLMs, and hence all of our experiments involved running LLMs to understand their behavior. Additionally we used LLMs to assist in refining our writing throughout the paper. LLMs were not used in any other part of the paper, including our theoretical results, coming up with our experimental design, creating our datasets and writing code. 
    \item[] Guidelines:
    \begin{itemize}
        \item The answer NA means that the core method development in this research does not involve LLMs as any important, original, or non-standard components.
        \item Please refer to our LLM policy (\url{https://neurips.cc/Conferences/2025/LLM}) for what should or should not be described.
    \end{itemize}

\end{enumerate}

\end{document}